\documentclass{article}

 \usepackage[preprint]{neurips_2026}

\usepackage[utf8]{inputenc} 
\usepackage[T1]{fontenc}    
\usepackage{hyperref}       
\usepackage{url}            
\usepackage{booktabs}       
\usepackage{amsfonts}       
\usepackage{nicefrac}       
\usepackage{microtype}      
\usepackage{xcolor}         

\usepackage{graphicx}       
\usepackage{array}          
\usepackage{subcaption}     
\usepackage{wrapfig}
\usepackage{amsmath}
\usepackage{amssymb}
\usepackage{mathtools}
\usepackage{bm}             
\usepackage{bbm}            
\usepackage{multirow}       
\usepackage{tcolorbox}      
\usepackage{colortbl}       
\usepackage{xspace}         
\usepackage{enumitem}       
\usepackage{amsthm}         
\usepackage[capitalize,noabbrev]{cleveref}

\newcommand{\Appref}[1]{Appendix~\ref{#1}}

\newtheorem{theorem}{Theorem}
\newtheorem{lemma}{Lemma}

\def\Eqref#1{Equation~\ref{#1}}

\newcommand{\mname}{{MCU}\xspace}

\newcommand{\btheta}{\bm{\theta}}
\newcommand{\thetafull}{\btheta_{\mathrm{o}}}
\newcommand{\thetaunl}{\btheta_{\mathrm{u}}}
\newcommand{\bx}{\mathbf{x}}

\newcommand{\Dr}{\mathcal{D}_{\mathrm{r}}}
\newcommand{\Df}{\mathcal{D}_{\mathrm{f}}}

\definecolor{basebg}{HTML}{F5F5F5}
\definecolor{methodbg}{HTML}{F3E5F5}

\title{Robust LLM Unlearning Against Relearning Attacks: The Minor Components in Representations Matter}

\author{
    Zeguan Xiao$^{1}$,
    Xuanzhe Xu$^{2}$,
    Yun Chen$^{1}$,
    Yong Wang$^{3}$,
    Jian Yang$^{4}$,
    Yanqing Hu$^{2}$,
    Guanhua Chen$^{2}$ \\
    $^1$Shanghai University of Finance and Economics,$^3$Alibaba Group\\
    $^2$Southern University of Science and Technology,$^4$Beihang University
}

\begin{document}

\maketitle

\begin{abstract}
  Large language model (LLM) unlearning aims to remove specific data influences from pre-trained model without costly retraining, addressing privacy, copyright, and safety concerns. However, recent studies reveal a critical vulnerability: unlearned models rapidly recover ``forgotten'' knowledge through relearning attacks. This fragility raises serious security concerns, especially for open-weight models. In this work, we investigate the fundamental mechanism underlying this fragility from a representation geometry perspective. We discover that existing unlearning methods predominantly optimize along dominant components, leaving minor components largely unchanged. Critically, during relearning attacks, the modifications in these dominant components are easily reversed, enabling rapid knowledge recovery, whereas minor components exhibit stronger resistance to such reversal. We further provide a theoretical analysis that explains both observations from the spectral structure of representations. Building on this insight, we propose \textbf{Minor Component Unlearning (MCU)}, a novel unlearning approach that explicitly targets minor components in representations. By concentrating unlearning effects in these inherently robust directions, our method achieves substantially improved resistance to relearning attacks. Extensive experiments on three datasets validate our approach, demonstrating significant improvements over state-of-the-art methods including sharpness-aware minimization.
\end{abstract}

\section{Introduction}
\label{sec:intro}

The rapid advancement of large language models (LLMs) has led to remarkable progress in various domains, from creative writing to code generation \citep{grattafiori2024llama}. Meanwhile, open-weight models are being released at an increasing rate, with their capabilities lagging only six to twelve months behind closed-weight frontier models \citep{bhandari2025forecasting, maslej2024artificial}. 
However, both open and closed models raise serious concerns about privacy violations, copyright infringement, and safety risks \citep{liu2025rethinking, casper2025open}. When undesirable data influences are discovered post-deployment, retraining these massive models from scratch is often prohibitively expensive. This motivates the development of \textbf{LLM unlearning}, a post-training strategy that aims to remove specific data influences and suppress associated model capabilities without the need for complete retraining \citep{jang2023knowledge, liu2025rethinking, maini2024tofu}.

Despite the growing importance of LLM unlearning, several recent studies have identified a critical issue: \textbf{current unlearning methods lack robustness} \citep{lucki2025an, lynch2024eight, hu2025unlearning, deeb2024unlearning}. Specifically, unlearned models exhibit a surprising susceptibility to quickly recovering ``forgotten'' knowledge through \textbf{relearning attacks} \citep{lynch2024eight, hu2025unlearning}. Even more concerning, fine-tuning on benign, unrelated downstream tasks can inadvertently undo the unlearning effects \citep{fantowards}. For open-weight models, this lack of robustness poses severe security challenges: any downstream actor can easily reverse unlearning through minimal fine-tuning, undermining the intended protections \citep{casper2025open, rosati2024representation}. Recent rigorous evaluations have revealed that state-of-the-art unlearning methods achieve recovery rates exceeding 88\% after relearning attacks, demonstrating that they fail to truly remove knowledge from model weights \citep{deeb2024unlearning}.

\begin{figure*}[t]
\centering
\includegraphics[width=1.0\textwidth]{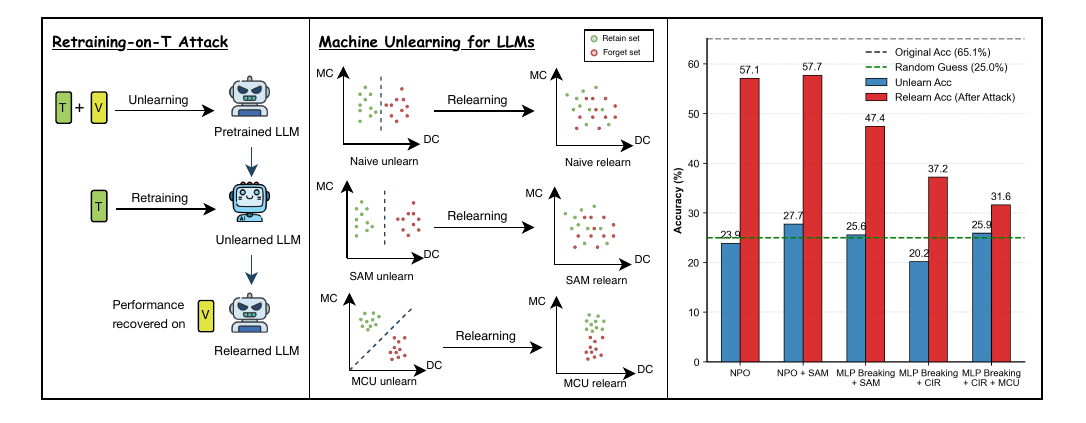}
\caption{\textbf{Left:} Retraining-on-$T$ (RTT) attack evaluation: the forget set is split into $T$ and $V$; after unlearning on $T\cup V$, the attacker fine-tunes on $T$ and measures recovery on $V$. \textbf{Middle:} Naive methods and SAM separate forget/retain representations mainly along dominant components (\textbf{DC}), which relearning easily reverses; MCU additionally separates them along minor components (\textbf{MC}), whose changes are largely preserved post-attack. \textbf{Right:} On WMDP-Cyber, MCU yields markedly lower post-attack accuracy while maintaining utility.}
\label{fig:overview}
\vspace{-5mm}
\end{figure*}

Although existing works have proposed various techniques to improve unlearning robustness---such as sharpness-aware minimization (SAM) \citep{fantowards} for smooth optimization and representation-level interventions \citep{li2024wmdp, sondej2025collapse}---these methods remain largely empirical, and the fundamental mechanism underlying the susceptibility of LLM unlearning to relearning attacks remains poorly understood.
We thus ask:
\begin{tcolorbox}[before skip=2mm, after skip=0.0cm, boxsep=0.0cm, middle=0.0cm, top=0.05cm, bottom=0.05cm, boxrule=0.6pt]
\begin{center}
     \textit{\textbf{(Q)} Why is LLM unlearning so fragile against relearning attacks?}
\end{center}
\end{tcolorbox} 
\vspace*{2mm}

To address \textbf{(Q)}, we conduct a principled analysis of LLM unlearning through the lens of representation geometry.
We discover that existing unlearning methods predominantly optimize along the dominant component directions, leaving minor components largely unchanged. Critically, when relearning attacks are applied, the modifications in these dominant components are easily reversed—with recovery rates significantly exceeding those of minor components—explaining why current methods are so vulnerable to such attacks. We further give a theoretical analysis that derives both phenomena from the spectral structure of representations, identifying the structural source of fragility.

Inspired by these findings, we propose \textbf{Minor Component Unlearning (MCU)}, a novel unlearning approach that explicitly targets the minor components of internal representations. By leveraging the observation that minor components are inherently more resistant to recovery during relearning, our method achieves substantially improved robustness against relearning attacks while maintaining model utility on unrelated tasks. 
We summarize our \textbf{contributions} below.\footnote{Our code is publicly available at \url{https://github.com/sustech-nlp/MCU}.}

$\bullet$ We provide the first systematic analysis of LLM unlearning robustness from a representation geometry perspective, supported by a theoretical analysis. We identify a key mechanism underlying unlearning fragility: dominant components modified during unlearning are easily recovered by relearning attacks, whereas minor components exhibit significantly stronger resistance to recovery.

$\bullet$ Building on these insights, we propose MCU, a novel unlearning method that explicitly targets minor components of representations.

$\bullet$ We conduct extensive experiments on WMDP-Cyber, WMDP-Bio, and Years datasets, demonstrating that our method significantly reduces knowledge recovery after relearning attacks while preserving model utility, outperforming existing methods. Some experiment highlights on WMDP-Cyber dataset are showcased in \autoref{fig:overview}.

\section{Preliminaries of LLM Unlearning}

\paragraph{Problem Definition.}
LLM unlearning aims to erase or suppress undesirable knowledge within a pre-trained LLM while preserving its general performance \citep{liu2025rethinking}. Formally, given a pre-trained LLM with parameters $\thetafull$ and a dataset partitioned into a \textbf{forget set} $\Df = \{(\bx_i, y_i)\}_{i=1}^{n_f}$ containing data to be unlearned and a \textbf{retain set} $\Dr = \{(\bx_j, y_j)\}_{j=1}^{n_r}$ containing data the model should still remember, unlearning seeks to obtain updated parameters $\thetaunl$ such that the model ``forgets'' information in $\Df$ while maintaining performance on $\Dr$.
An ideal unlearning method should ensure that the mutual information between the unlearned model weights and the forget set approaches zero, meaning the removed knowledge is truly erased from the model rather than merely hidden \citep{deeb2024unlearning}.

\paragraph{Unlearning Methods.}
Let $\pi_{\btheta}(x)$ denote the probability of text $x$ under model parameters $\btheta$.
\textbf{Gradient Ascent (GA)} \citep{jang2023knowledge} maximizes the cross-entropy loss on the forget set, while \textbf{Negative Preference Optimization (NPO)} \citep{zhang2024negative} adapts DPO \citep{rafailov2023direct} by treating $\Df$ as dispreferred responses:
\par\noindent
\begin{minipage}{0.42\textwidth}
\begin{equation}
\label{eq:ga}
\mathcal{L}_{\text{GA}} = \underset{x \in \Df}{\mathbb{E}}[\log \pi_{\btheta}(x)],
\end{equation}
\end{minipage}\hfill
\begin{minipage}{0.55\textwidth}
\begin{equation}
\label{eq:npo}
\mathcal{L}_{\text{NPO}} = -\frac{2}{\beta}\,\underset{x \in \Df}{\mathbb{E}} \left[ \log \sigma \left( -\beta \log \frac{\pi_\theta(x)}{\pi_{\text{ref}}(x)} \right) \right],
\end{equation}
\end{minipage}
\par\medskip\noindent
where $\pi_{\text{ref}} = \pi_{\thetafull}$ and $\beta$ is a temperature parameter. \textbf{Representation Misdirection for Unlearning (RMU)} \citep{li2024wmdp} perturbs internal hidden states toward a random control vector, while \textbf{MLP Breaking} \citep{sondej2025collapse} drives MLP outputs toward orthogonality with their originals (motivated by factual knowledge being stored in MLP parameters \citep{nanda2023factfinding}):
\par\noindent
\begin{minipage}{0.42\textwidth}
\begin{equation}
\label{eq:rmu}
\mathcal{L}_{\text{RMU}} = \underset{x \in \Df}{\mathbb{E}} \left[ \sum_{t \in x} \| \mathbf{h}(t) - c \cdot \mathbf{u} \|^2 \right],
\end{equation}
\end{minipage}\hfill
\begin{minipage}{0.55\textwidth}
\begin{equation}
\label{eq:mlp_breaking}
\mathcal{L}_{\text{MLP Breaking}} = \underset{x \in \Df}{\mathbb{E}} \left[ \sum_{t \in x} \text{ReLU} \left( \frac{\langle \mathbf{h}(t), \mathbf{h}_{\text{o}}(t) \rangle}{\|\mathbf{h}_{\text{o}}(t)\|^2} \right) \right].
\end{equation}
\end{minipage}
\par\medskip\noindent
where $\mathbf{h}(t)$ is the current internal representation of token $t$ (hidden state for RMU, MLP output for MLP Breaking), $\mathbf{h}_{\text{o}}(t)$ is its value under $\thetafull$, $c$ is a scaling hyperparameter, and $\mathbf{u}$ is a random control vector.

\section{Understanding Fragile LLM Unlearning}
\label{sec:understanding}

In this section, we investigate how unlearning and relearning affect an LLM's internal representations, and identify a structural cause of fragility: \textbf{unlearning predominantly modifies the dominant (high-variance) directions of internal representations, which are widely shared across samples and therefore easily reversed by relearning attacks.} \Cref{subsec:empirical} establishes this empirically through three observations, and \Cref{subsec:theory} explains Observations~2 and~3 theoretically.

\subsection{Empirical Observations on Representation Geometry}
\label{subsec:empirical}
\paragraph{Setup.}
We extract MLP activations across all layers of Llama-3.1-8B on the forget set $\Df$ and apply PCA, yielding principal components $\{\mathbf{v}_1, \ldots, \mathbf{v}_d\}$ ordered by decreasing variance $\sigma_1^2 \ge \cdots \ge \sigma_d^2$. We then track how representations move along each PC during unlearning and relearning. Implementation details (modules used, sample sizes, layer aggregation) are deferred to \Appref{app:rep_analysis_appendix}.

\begin{figure*}[t]
\centering
\begin{subfigure}[b]{0.32\textwidth}
\centering
\includegraphics[width=\linewidth]{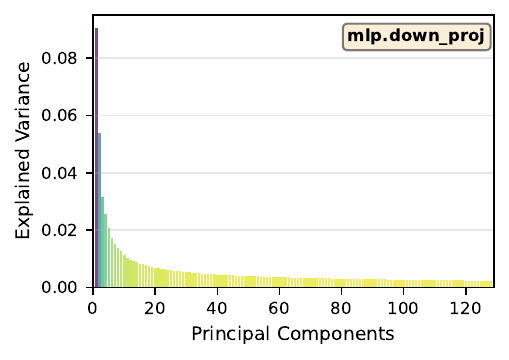}
\caption{Explained variance}
\label{fig:explained_variance}
\end{subfigure}
\hfill
\begin{subfigure}[b]{0.32\textwidth}
\centering
\includegraphics[width=\linewidth]{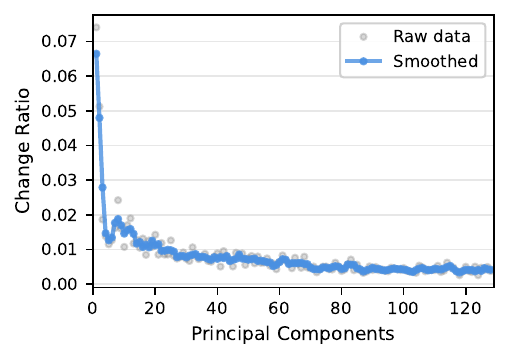}
\caption{Unlearning change}
\label{fig:unlearn_change_ratio}
\end{subfigure}
\hfill
\begin{subfigure}[b]{0.32\textwidth}
\centering
\includegraphics[width=\linewidth]{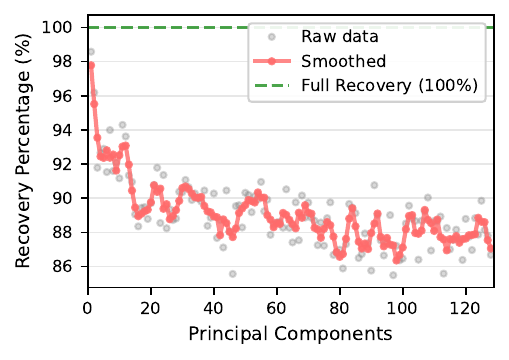}
\caption{Relearning recovery}
\label{fig:recovery_ratio}
\end{subfigure}
\caption{Principal component analysis of LLM representations during unlearning and relearning. (a) The first few dominant components capture the majority of variance in representations. (b) Unlearning predominantly modifies these dominant components, leaving minor components unchanged. (c) Relearning attacks preferentially recover the dominant components, making unlearning effects along these directions easily reversible.}
\label{fig:pca_analysis}
\vspace{-5mm}
\end{figure*}

\paragraph{Observation 1: LLM Representations are Concentrated in Dominant Components.}
\Cref{fig:explained_variance} shows the explained-variance ratio across principal components: \textbf{the first few dominant components capture the overwhelming majority of the total variance}, while the minor components form a long tail of small but non-negligible contributions.

To quantify how unlearning and relearning affect each direction, we define two metrics for each principal component $\mathbf{v}_k$:
\par\noindent
\begin{minipage}{0.48\textwidth}
\begin{equation}
\label{eq:change_ratio}
\text{Change Ratio}_k = \frac{|\langle \mathbf{h}_{\text{u}} - \mathbf{h}_{\text{o}}, \mathbf{v}_k \rangle|}{\sum_{j=1}^d |\langle \mathbf{h}_{\text{u}} - \mathbf{h}_{\text{o}}, \mathbf{v}_j \rangle|},
\end{equation}
\end{minipage}\hfill
\begin{minipage}{0.48\textwidth}
\begin{equation}
\label{eq:recovery_ratio}
\text{Recovery Ratio}_k = \frac{\langle \mathbf{h}_{\text{u}} - \mathbf{h}_{\text{r}}, \mathbf{v}_k \rangle}{\langle \mathbf{h}_{\text{u}} - \mathbf{h}_{\text{o}}, \mathbf{v}_k \rangle},
\end{equation}
\end{minipage}
\par\medskip\noindent
where $\mathbf{h}_{\text{o}}$, $\mathbf{h}_{\text{u}}$, and $\mathbf{h}_{\text{r}}$ denote representations before unlearning, after unlearning, and after a relearning attack, respectively; a Recovery Ratio near $1$ indicates full reversal and near $0$ indicates robust unlearning.

\paragraph{Observation 2: Unlearning Predominantly Modifies Dominant Components.}
\Cref{fig:unlearn_change_ratio} reports the Change Ratio \eqref{eq:change_ratio} after applying GA: \textbf{unlearning induces disproportionately large changes along the leading PCs}, while minor components remain largely unchanged. The same pattern holds across unlearning losses (\Appref{app:obs_consistency_losses}).

\paragraph{Observation 3: Dominant Components are More Easily Recovered During Relearning.}
The concentration above would be benign if the changes were robust. \Cref{fig:recovery_ratio} reports the Recovery Ratio \eqref{eq:recovery_ratio}: \textbf{dominant components attain substantially higher recovery ratios (often $>\!90\%$) than minor components}, with the same pattern across unlearning losses.

\subsection{Theoretical Analysis of Unlearning Fragility}
\label{subsec:theory}
The empirical patterns are not coincidental: they follow from the spectral structure of forget-set representations and the gradient geometry of unlearning/relearning losses. We formalize this through a linearized (NTK-style) analysis that yields the two theorems below; full derivations and assumptions are deferred to \Appref{app:theory_appendix}.

\begin{theorem}[Dominant-component concentration; explains Observation~2]
\label{thm:obs2}
After $T$ unlearning steps,
\begin{equation}
\label{eq:change_scaling}
\mathbb{E}_{\mathcal{D}_f}\!\big[\langle \mathbf{h}_{\text{u}} - \mathbf{h}_{\text{o}}, \mathbf{v}_k\rangle^2\big] \;\propto\; \sigma_k^2 \;+\; O(\tau^2),
\end{equation}
with $\tau^2 \ll \sigma_1^2$ a small noise term. The change-ratio \eqref{eq:change_ratio} therefore mirrors the explained-variance profile of \Cref{fig:explained_variance}: the unlearning update is channeled through directions that account for most of the representation variance.
\end{theorem}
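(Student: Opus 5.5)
We work in the linearized (NTK-style) regime announced above, fixing the architecture so that a representation can be written as $\mathbf{h}(\bx;\btheta)=W\phi(\bx)$. Here $\phi(\bx)\in\mathbb{R}^d$ is the input to the tracked MLP projection and $W$ is the weight being updated. This is the local linearization of $\mathbf{h}$ around $\thetafull$. We take the PCA basis $\{\mathbf{v}_k\}$ of the forget-set representations, with covariance $\Sigma=\sum_k\sigma_k^2\mathbf{v}_k\mathbf{v}_k^\top$. We also assume that this basis (approximately) diagonalizes the covariance of the inputs $\phi(\bx)$ seen by the updated layer. In the residual-stream picture the two spectra coincide up to the linear map; this is the assumption we would state explicitly in \Appref{app:theory_appendix}.

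\textbf{Step 1 (one-step update).} For any of the losses in \eqref{eq:ga}--\eqref{eq:mlp_breaking}, the chain rule gives
\[
\nabla_W\mathcal{L}=\mathbb{E}_{\bx'}\big[\mathbf{g}(\bx')\phi(\bx')^\top\big],
\]
where $\mathbf{g}(\bx')=\partial\mathcal{L}/\partial\mathbf{h}(\bx')$ is the backpropagated signal. The induced change of the representation of a forget sample $\bx$ is therefore
\[
\Delta\mathbf{h}(\bx)=-\eta\,\mathbb{E}_{\bx'}\big[\mathbf{g}(\bx')\,\langle\phi(\bx'),\phi(\bx)\rangle\big],
\]
a kernel-weighted average of the signals. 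We decompose the signal as $\mathbf{g}(\bx')=A\,\mathbf{h}_{\text{o}}(\bx')+\boldsymbol{\xi}(\bx')$. The first term is the component of the signal linearly predictable from the representation. For GA and NPO this reflects that the logit gradient is correlated with the current hidden state; for RMU and MLP Breaking the loss is explicitly a function of $\mathbf{h}$. The second term $\boldsymbol{\xi}$ is a residual that we model as isotropic with variance $\tau^2$ and independent of $\phi$.

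\textbf{Step 2 (projection and second moment).} Projecting onto $\mathbf{v}_k$ and using the spectral assumption, the kernel $\mathbb{E}_{\bx'}[\phi(\bx')\phi(\bx')^\top]$ acts on direction $k$ by multiplication by (a rescaling of) $\sigma_k^2$. This gives
\[
\langle\Delta\mathbf{h}(\bx),\mathbf{v}_k\rangle\approx -\eta\,c_k\,\sigma_k^2\,\langle\phi(\bx),\mathbf{u}_k\rangle+\eta\,\epsilon_k(\bx),
\]
with $\epsilon_k$ the projected noise. Squaring and taking $\mathbb{E}_{\Df}$, the first term contributes $\propto\sigma_k^2$, since $\mathbb{E}\langle\phi,\mathbf{u}_k\rangle^2\propto\sigma_k^2$ and the kernel gain is absorbed into the constant, which we normalize. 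The cross term vanishes by independence, and the noise contributes $O(\tau^2)$.

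\textbf{Step 3 (iterate over $T$ steps).} Summing over $T$ steps in the linear regime, the signals stay approximately fixed to first order in $\eta T$. Hence the directional structure of the accumulated change $\mathbf{h}_{\text{u}}-\mathbf{h}_{\text{o}}$ is that of a single step scaled by $T$, with higher-order drift absorbed into the constant. This yields \eqref{eq:change_scaling}. The statement about the Change Ratio then follows because $|\langle\Delta\mathbf{h},\mathbf{v}_k\rangle|$ scales like the square root of this second moment, normalized across $k$. This mirrors the decay of Observation~1 as long as $\tau^2\ll\sigma_1^2$.

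\textbf{Main obstacle.} The delicate step is Step 2: justifying that the coefficients $c_k$ do not themselves depend strongly on $k$. Equivalently, we must justify that the loss signal is not preferentially aligned with minor directions, and that the input and representation spectra align. Squaring a kernel eigenvalue naively gives $\sigma_k^4$ rather than $\sigma_k^2$, so the exponent hinges on the modeling assumption. To settle it, I would first posit $\mathbf{g}$ with a bounded, $k$-uniform projection spectrum, treated as an explicit assumption. I would then check that, under a whitened-input or normalized-kernel convention, the proportionality is exactly $\sigma_k^2$. I would be honest that other conventions give a steeper power, which only strengthens the dominant-component concentration.
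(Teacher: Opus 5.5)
\textbf{There is a genuine gap in Step~2, and it sits exactly in the exponent, which is the whole content of the theorem.} Carry your own model through. With $\mathbf{g}=A\mathbf{h}_{\text{o}}+\boldsymbol{\xi}$ and $\mathbf{h}_{\text{o}}=W_{\text{o}}\phi$, the signal part of the update is $\Delta\mathbf{h}(\bx)=-\eta\,AW_{\text{o}}\Sigma_\phi\,\phi(\bx)$, where $\Sigma_\phi=\mathbb{E}[\phi\phi^\top]$. Hence
\[
\mathbb{E}_{\Df}\langle\Delta\mathbf{h},\mathbf{v}_k\rangle^2=\eta^2\,\mathbf{v}_k^\top AW_{\text{o}}\Sigma_\phi^{3}W_{\text{o}}^\top A^\top\mathbf{v}_k .
\]
Under your alignment assumption with $k$-uniform $c_k$, this is $\propto\sigma_k^6$. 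A factor $\sigma_k^4$ comes from the squared kernel eigenvalue and another $\sigma_k^2$ from $\mathbb{E}\langle\phi,\mathbf{u}_k\rangle^2$.

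Your phrase ``the kernel gain is absorbed into the constant'' is therefore not legitimate, because that gain is $\sigma_k^2$ and depends on $k$. A steeper power is also not harmless here. The theorem asserts that the change profile \emph{mirrors} the explained-variance profile, and a $\sigma_k^6$ law does not. Your proposed repair is internally inconsistent as well. If you whiten the inputs ($\Sigma_\phi=I$) \emph{and} keep $c_k$ uniform in $k$, the second moment becomes $\eta^2c_k^2$, and all dominant-component concentration disappears.

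\textbf{The missing idea is that the factor $\sigma_k^2$ must come from the signal, not from the kernel.} The paper takes an isotropic, input-independent kernel $\mathbf{K}(\bx,\bx')\approx\kappa\mathbf{I}$. One per-sample SGD step then moves every representation by $-\eta\kappa\,\mathbf{g}_u(\bx_t)$. The squared projection is $\eta^2\kappa^2\,\mathbf{v}_k^\top\mathbb{E}[\mathbf{g}_u\mathbf{g}_u^\top]\mathbf{v}_k$; the paper is explicit that one squares the per-sample residual, not its population mean.

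The spectral content then comes from Lemma~\ref{lem:residual_universal}: $\mathbb{E}[\mathbf{g}_u\mathbf{g}_u^\top]=\mathbf{A}\boldsymbol{\Sigma}\mathbf{A}^\top+\mathbf{N}$, with $\|\mathbf{N}\|=O(\tau^2)$ and $\mathbf{A}$ approximately diagonal in the PCA basis. For RMU and MLP Breaking this holds exactly with $\mathbf{A}=\mathbf{I}$, since $\mathbf{g}_u=\mathbf{h}_{\text{o}}-\mathbf{t}$ with $\mathbf{t}$ independent of $\mathbf{h}_{\text{o}}$. This yields $\alpha_k\sigma_k^2+O(\tau^2)$, a single power of $\sigma_k^2$.

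Your framework can be rescued the same way. With whitened inputs and $A=I$, your formula gives $\Delta\mathbf{h}(\bx)=-\eta\,\mathbf{h}_{\text{o}}(\bx)$, whose projected second moment is exactly $\eta^2\sigma_k^2$. But in that case $c_k\propto\sigma_k$, not uniform: the $\sigma_k$ is carried by $\mathbf{g}$ because $\mathbf{g}$ is linear in $\mathbf{h}_{\text{o}}$. So replace the ``$k$-uniform $c_k$'' assumption by a statement about the residual covariance, and require the kernel to contribute no $k$-dependence.

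Your coherent $T$-step accumulation differs from the paper's random-walk accumulation ($\propto T\sigma_k^2$), but this is immaterial, since the $T$-dependence does not depend on $k$.
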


\begin{theorem}[Dominant-component recoverability; explains Observation~3]
\label{thm:obs3}
Because the relearning distribution $\mathcal{D}_r$ shares its dominant eigenstructure with $\boldsymbol{\Sigma}$ (standard threat model), applying the same NTK linearization to the relearning objective gives, for some $c > 0$ and $T_r$ relearning steps,
\begin{equation}
\label{eq:recovery_scaling}
\mathrm{Recovery\ Ratio}_k \;\approx\; 1 - \exp\!\big(-c\,\sigma_k^2\,T_r\big).
\end{equation}
Dominant components saturate within a few attack steps, while minor components require $O(1/\sigma_k^2)$ steps and remain effectively unrecovered under any bounded budget. A complementary argument (\Appref{app:snr_argument}) further shows that, even as $T_r\!\to\!\infty$, the relearning gradients along minor components average out across the batch, so these directions cannot be reliably reconstructed by the attacker.
\end{theorem}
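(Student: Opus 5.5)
We work in the same linearized (NTK-style) regime used for \Cref{thm:obs2}, and we also reuse the shape of its conclusion. \Cref{thm:obs2} gives that the unlearning displacement along $\mathbf{v}_k$ has magnitude scaling with $\sigma_k$. We write $\delta_k := \langle \mathbf{h}_{\text{u}} - \mathbf{h}_{\text{o}}, \mathbf{v}_k\rangle$ for this displacement.

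\textbf{Step 1: model the attack as linear regression.} The relearning objective on the attacker's data (drawn from the forget distribution, whose covariance shares the eigenbasis $\{\mathbf{v}_k\}$ with $\boldsymbol{\Sigma}$) is locally quadratic around $\mathbf{h}_{\text{u}}$. Its minimizer is the original representation $\mathbf{h}_{\text{o}}$, up to an $O(\tau^2)$ residual. Linearizing features as $\mathbf{h} = \mathbf{W}\mathbf{x}$ with inputs of covariance $\boldsymbol{\Sigma} = \sum_k \sigma_k^2 \mathbf{v}_k\mathbf{v}_k^\top$, gradient descent with step size $\eta$ on a squared-error surrogate $\tfrac12\mathbb{E}\|(\mathbf{W}-\mathbf{W}_{\text{o}})\mathbf{x}\|^2$ gives
\[
\mathbf{W}_{t+1}-\mathbf{W}_{\text{o}} = (\mathbf{W}_t-\mathbf{W}_{\text{o}})(\mathbf{I}-\eta\boldsymbol{\Sigma}).
\]

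\textbf{Step 2: diagonalize in the PCA basis.} Because $\boldsymbol{\Sigma}$ is diagonal in $\{\mathbf{v}_k\}$, the error along each $\mathbf{v}_k$ decays independently. After $T_r$ steps it equals $(1-\eta\sigma_k^2)^{T_r}$ times its initial value $\delta_k$. Then
\[
\langle \mathbf{h}_{\text{u}}-\mathbf{h}_{\text{r}},\mathbf{v}_k\rangle = \bigl(1-(1-\eta\sigma_k^2)^{T_r}\bigr)\,\delta_k,
\]
so dividing by $\delta_k$ in \eqref{eq:recovery_ratio} yields $\mathrm{Recovery\ Ratio}_k = 1-(1-\eta\sigma_k^2)^{T_r}$. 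Using $1-x\le e^{-x}$, and $\log(1-x)\approx -x$ for $\eta\sigma_k^2\ll 1$, gives \eqref{eq:recovery_scaling} with $c=\eta$. This also gives the time-scale claims: once $\sigma_k^2 T_r \gtrsim 1/c$ the ratio is near $1$, and for minor $k$ with $\sigma_k^2 T_r\ll 1$ we get $\mathrm{Recovery\ Ratio}_k \approx c\sigma_k^2 T_r \approx 0$.

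\textbf{Step 3 (main obstacle): justify the shared eigenbasis and the surrogate.} The attack minimizes cross-entropy, not a squared error toward $\mathbf{h}_{\text{o}}$, and the NTK Gram matrix of the attack data only approximately shares the eigenbasis of $\boldsymbol{\Sigma}$. I would handle this by assuming a Taylor expansion of the loss around $\mathbf{h}_{\text{o}}$ with Hessian $\approx \mathbf{I}$ (or isotropic up to constants absorbed into $c$), and by assuming the attacker's empirical covariance equals $\boldsymbol{\Sigma} + \mathbf{E}$. Davis--Kahan-type perturbation then gives an $O(\|\mathbf{E}\|/\text{gap})$ error in the per-mode rates. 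This is acceptable for the well-separated dominant modes but degrades in the clustered tail. That degradation is why the statement is phrased with $\approx$ and defers the $T_r\to\infty$ claim to the separate argument.

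\textbf{Step 4: the finite-sample claim for $T_r\to\infty$.} For the complementary claim, I would compute the minibatch gradient projection onto $\mathbf{v}_k$. Its mean is $\propto \sigma_k^2\delta_k$ and its variance is $\propto \sigma_k^2\tau^2/B$ for batch size $B$. The signal-to-noise ratio therefore scales like $\sigma_k\sqrt{B}/\tau$, which vanishes for minor components, so SGD along these directions is a random walk dominated by noise rather than a contraction toward $\mathbf{h}_{\text{o}}$.
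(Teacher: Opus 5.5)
Your Steps 1--2 take a different route to the rate than the paper does. The paper applies \Cref{lem:residual_universal} to the relearning cross-entropy. This gives $\mathbb{E}[\mathbf{g}_r\mathbf{g}_r^\top]=\mathbf{A}_r\boldsymbol{\Sigma}_r\mathbf{A}_r^\top+O(\tau^2)$, with $\tilde\sigma_k^2\asymp\sigma_k^2$ and $\mathbf{A}_r$ approximately diagonal in the PCA basis. The paper then simply states that iterating the linearized dynamics yields an exponential approach with rate $c\sigma_k^2$. You instead use a quadratic surrogate whose Hessian is $\boldsymbol{\Sigma}$, which amounts to the input-dependent kernel $\langle\mathbf{x},\mathbf{x}'\rangle\mathbf{I}$ rather than the paper's $\kappa\mathbf{I}$. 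This makes the per-mode contraction factor $1-\eta\sigma_k^2$ explicit, and it is a clearer source of a $\sigma_k^2$-dependent \emph{rate}: a residual second moment $\propto\sigma_k^2$ is not by itself a curvature.

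However, the decoupling in Step 2 does not follow merely from $\boldsymbol{\Sigma}$ being diagonal in $\{\mathbf{v}_k\}$. In $\mathbf{h}=\mathbf{W}\mathbf{x}$, the contraction acts on the right of $\mathbf{E}_t:=\mathbf{W}_t-\mathbf{W}_{\text{o}}$, which is the input side. By contrast, $\delta_k$ projects the displacement on the left, which is the representation side. Hence
\[
\langle \mathbf{h}_{\text{r}}-\mathbf{h}_{\text{o}},\mathbf{v}_k\rangle=\sum_j (1-\eta\sigma_j^2)^{T_r}\,(\mathbf{v}_k^\top\mathbf{E}_0\mathbf{v}_j)\,\langle\mathbf{x},\mathbf{v}_j\rangle ,
\]
which mixes the rates of every input mode that $\mathbf{E}_0$ couples into $\mathbf{v}_k$. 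This reduces to $(1-\eta\sigma_k^2)^{T_r}\delta_k$ only under two conditions:
\begin{enumerate}
\item[(i)] The input eigenbasis must be identified with the representation PCA basis. You silently reuse the representation covariance as the input covariance, but $\mathrm{Cov}(\mathbf{W}_{\text{o}}\mathbf{x})=\mathbf{W}_{\text{o}}\boldsymbol{\Sigma}\mathbf{W}_{\text{o}}^\top$ generally has other eigenvectors.
\item[(ii)] The unlearning displacement must be diagonal in that basis, i.e.\ $\mathbf{v}_k^\top\mathbf{E}_0\mathbf{v}_j\approx 0$ for $j\neq k$.
\end{enumerate}
Condition (ii) must be stated as an assumption, where it plays the role of the paper's near-diagonality of $\mathbf{A}$ and $\mathbf{A}_r$, or else derived. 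One way to derive it is to take $\mathbf{h}_{\text{o}}=\mathbf{x}$ with centered inputs and an $\mathbf{x}$-independent target (as in RMU). Then full-batch unlearning in your model gives $\mathbf{W}_{t+1}=\mathbf{W}_t(\mathbf{I}-\eta\boldsymbol{\Sigma})$, so $\mathbf{E}_0$ is a polynomial in $\boldsymbol{\Sigma}$.

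The second gap is in Step 4, which does not deliver the clause about $T_r\to\infty$. In your own model, the mean gradient along a minor $\mathbf{v}_k$ is $\propto\sigma_k^2\delta_k\neq 0$. That coordinate is therefore a contraction with factor $1-\eta\sigma_k^2$ plus zero-mean noise, i.e.\ an AR(1) process rather than a driftless random walk, and its mean still converges to the original value. A small per-step SNR only slows recovery; it does not prevent it. Moreover, because the linear map is shared across samples, fitting the attacker's split $T$ also restores the held-out split $V$.

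The paper's complementary argument rests on a different premise. It writes $a_k(\mathbf{x})=s_k(c)+\epsilon_k(\mathbf{x})$, separating a context-shared part from a sample-specific part. It then posits that the shared fraction $\rho_k=\mathrm{Var}_c(s_k)/\sigma_k^2$ is $\approx 0$ for minor components, so the batch SNR $\sqrt{B\rho_k/(1-\rho_k)}$ collapses regardless of the number of steps. In other words, relearning on $T$ carries essentially no coherent signal about the minor coordinates of the samples in $V$. The missing idea is this absence of a shared drift, as opposed to a drift that is merely small relative to a uniform noise level $\tau$. Without it, your model predicts eventual full recovery of minor components in expectation.
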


Together, \Cref{thm:obs2,thm:obs3} show that the unlearning effect is concentrated in exactly the directions the attacker can recover most cheaply---a structural source of fragility. Redirecting unlearning into the minor-component subspace inverts the scaling \eqref{eq:recovery_scaling} and works \emph{against} the attacker; we develop this idea in \Cref{sec:method}.

\section{Methodology}
\label{sec:method}

Building on \Cref{sec:understanding}, we propose \textbf{Minor Component Unlearning (MCU)}, which explicitly redirects representation-based unlearning losses (e.g., RMU \citep{li2024wmdp}, MLP Breaking \citep{sondej2025collapse}) toward the minor-component subspace, in line with the structural fragility characterized by \Cref{thm:obs2,thm:obs3}.

\subsection{Motivation: Targeting Minor Components}

\Cref{thm:obs2,thm:obs3} together imply that an unlearning update concentrated in dominant directions is exactly the configuration the attacker can undo most cheaply. This motivates a natural question: \textbf{can we design an unlearning method that confines its effect to minor components, thereby inheriting their resistance to relearning?}
To achieve this, we propose to remove the dominant components from both the current and target representations before computing the unlearning loss. By removing the high-variance directions, the resulting loss gradients are constrained to operate primarily within the minor component subspace. This ensures that unlearning-induced changes occur in directions that are inherently more difficult to reverse.

\subsection{Principal Component Extraction}

Before unlearning, we extract the principal components from the original model's representations on the forget set $\Df$. For each trainable parameter, we collect internal representations (either hidden states for RMU or MLP outputs for MLP Breaking) across all tokens in the forget set. Let $\mathbf{H} \in \mathbb{R}^{N \times d}$ denote the matrix of collected representations, where $N$ is the total number of tokens and $d$ is the hidden dimension.

We first center the representations by subtracting the mean:
\begin{equation}
    \bar{\mathbf{h}} = \frac{1}{N} \sum_{i=1}^{N} \mathbf{h}_i, \quad \tilde{\mathbf{H}} = \mathbf{H} - \mathbf{1} \bar{\mathbf{h}}^\top,
\end{equation}
where $\mathbf{1}$ is the all-ones vector. We then compute the top-$K$ principal components $\{\mathbf{v}_1, \mathbf{v}_2, \ldots, \mathbf{v}_K\}$ via singular value decomposition (SVD) or power iteration \citep{halko2011finding}:
\begin{equation}
    \tilde{\mathbf{H}} = \mathbf{U} \mathbf{\Sigma} \mathbf{V}^\top, \quad \mathbf{v}_k = \mathbf{V}_{:,k}.
\end{equation}
In practice, we use randomized SVD for computational efficiency, which computes an approximate low-rank decomposition with $O(NK^2)$ complexity instead of $O(Nd^2)$ for full SVD.

\subsection{Minor Component Projection}

Given the extracted principal components, we define the projection operator that removes the top-$K$ principal directions from any representation $\mathbf{h} \in \mathbb{R}^d$, where $K$ is a hyperparameter and $\langle \cdot, \cdot \rangle$ denotes the inner product:
\begin{equation}
\label{eq:projection}
    \mathcal{P}_{\perp}(\mathbf{h}) = \mathbf{h} - \sum_{k=1}^{K} \langle \mathbf{h}, \mathbf{v}_k \rangle \mathbf{v}_k.
\end{equation}
The projected vector $\mathcal{P}_{\perp}(\mathbf{h})$ lies in the orthogonal complement of the principal component subspace, \textit{i.e.}, the minor component subspace.
In addition to the standard principal components, we treat the mean representation $\boldsymbol{\mu} = \frac{1}{N} \sum_{i=1}^{N} \mathbf{h}_i$ as a special ``0th" principal component to be removed. Concretely, we first project out the mean direction before removing the top-$K$ PCs.
Empirically, including the mean yields consistently better unlearning robustness; we provide ablation results in Section~\ref{sec:ablation}.

\subsection{RMU-MCU}

RMU \citep{li2024wmdp} aims to steer the hidden representations towards a random target vector, disrupting the model's ability to produce forget-set-related outputs. The original RMU loss is given in \Eqref{eq:rmu}.

Our Minor Component Unlearning variant, \textbf{RMU-MCU}, applies the minor component projection to the current representation before computing the loss:
\begin{equation}
\label{eq:rmu_mcu}
    \mathcal{L}_{\text{RMU-MCU}} = \underset{x \in \Df}{\mathbb{E}} \left[ \sum_{t \in x} \| \mathcal{P}_{\perp}(\mathbf{h}(t)) - c \cdot \mathbf{u} \|^2 \right].
\end{equation}

\paragraph{Intuition.} Consider the gradient of \eqref{eq:rmu_mcu} \emph{with respect to the hidden representation} $\mathbf{h}(t)$:
\begin{equation}
    \frac{\partial \mathcal{L}_{\text{RMU-MCU}}}{\partial \mathbf{h}(t)} \propto \mathcal{P}_{\perp}(\mathbf{h}(t)) - c \cdot \mathbf{u}.
\end{equation}
The input-dependent part $\mathcal{P}_{\perp}(\mathbf{h}(t))$ lies in the minor component subspace, so MCU injects unlearning pressure exclusively along these robust directions; in contrast, standard RMU's gradient $\mathbf{h}(t) - c \cdot \mathbf{u}$ has large components along principal directions, leading to easily reversible changes. The corresponding parameter-space update is this signal pulled back through the post-$\mathbf{h}$ Jacobian: while the update need not be strictly confined to the minor subspace, the dominant-direction contribution to $\mathbb{E}[\mathbf{g}_u\mathbf{g}_u^\top]$---the term that drives \Cref{thm:obs2,thm:obs3}---is removed.

\subsection{MLP-Breaking-MCU}

The original MLP Breaking loss aims to make the current MLP outputs orthogonal to the original outputs, as given in \Eqref{eq:mlp_breaking}.

Our variant, \textbf{MLP-Breaking-MCU}, applies minor component projection before computing the loss:
\begin{equation}
\label{eq:mlp_mcu}
    \mathcal{L}_{\text{MLP-Breaking-MCU}}
    = \underset{x \in \Df}{\mathbb{E}} \left[
        \sum_{t \in x}
        \text{ReLU}\!\left(
            \frac{\langle \mathcal{P}_{\perp}(\mathbf{h}(t)), \mathcal{P}_{\perp}(\mathbf{h}_{\text{o}}(t)) \rangle}
                 {\|\mathcal{P}_{\perp}(\mathbf{h}_{\text{o}}(t))\|^2}
        \right)
    \right].
\end{equation}

\paragraph{Intuition.} When the ReLU is active, the gradient of \eqref{eq:mlp_mcu} \emph{with respect to} $\mathbf{h}(t)$ is:
\begin{equation}
    \frac{\partial \mathcal{L}_{\text{MLP-Breaking-MCU}}}{\partial \mathbf{h}(t)}
    \propto \frac{\mathcal{P}_{\perp}(\mathbf{h}_{\text{o}}(t))}{\|\mathcal{P}_{\perp}(\mathbf{h}_{\text{o}}(t))\|^2}.
\end{equation}
This vector lies in the minor component subspace by construction, since $\mathcal{P}_{\perp}(\mathbf{h}_{\text{o}}(t))$ is orthogonal to the principal directions. As in \Cref{eq:rmu_mcu}, the parameter update is the pull-back of this hidden-state signal through the post-$\mathbf{h}$ Jacobian; what is guaranteed is that the dominant-direction component of the residual covariance vanishes, removing the $\sigma_k^2$-scaling source of fragility identified in \Cref{thm:obs2,thm:obs3}.

\section{Experiments}

\subsection{Experiment Setups}

\paragraph{Datasets.}
We use three forget sets: \textbf{WMDP-Cyber} and \textbf{WMDP-Bio} from the WMDP-Deduped benchmark \citep{deeb2024unlearning} (deduplicated subsets of WMDP \citep{li2024wmdp}, further filtered following \citet{sondej2025collapse}), and \textbf{Years} \citep{deeb2024unlearning} (20th-century events with their dates). Retain sets are domain-matched subsets of the FineFineWeb corpus \citep{finefineweb2024}. Full splits and preprocessing are in \Appref{app:dataset_details}.

\paragraph{Evaluation.}
Following \citet{deeb2024unlearning}, we partition the forget set into disjoint $T$ ($80\%$) and $V$ ($20\%$) with minimal mutual information, unlearn on $T\cup V$, and measure post-attack recovery on $V$ after the \textbf{RTT} attack (fine-tuning the unlearned model on $T$). We report five metrics: \textbf{MMLU} (general knowledge $\uparrow$), \textbf{WikiText} loss \citep{merity2016pointer} ($\downarrow$), \textbf{Forget} accuracy ($\downarrow$), \textbf{Relearn} accuracy after RTT ($\downarrow$), and the relearning gap $\boldsymbol{\Delta} = \text{Relearn}-\text{Forget}$ ($\downarrow$). All experiments use Llama-3.1-8B \citep{grattafiori2024llama} unless noted.

\begin{table*}[ht]
\centering
\caption{Main results on WMDP-Cyber, WMDP-Bio, and Years. For Relearn and $\Delta$, \textbf{bold} = best, \underline{underline} = second-best per dataset.}
\footnotesize
\setlength{\tabcolsep}{3pt}
\resizebox{\textwidth}{!}{%
\begin{tabular}{c | l | c c | c c c}
\toprule
\textbf{Dataset} & \textbf{Method} & \textbf{MMLU ($\uparrow$)} & \textbf{WikiText ($\downarrow$)} & \textbf{Forget ($\downarrow$)} & \textbf{Relearn ($\downarrow$)} & $\Delta$ ($\downarrow$) \\
\midrule
\multirow{11}{*}{\rotatebox[origin=c]{90}{WMDP-Cyber}}
& \cellcolor{basebg}Original model & \cellcolor{basebg}65.1 & \cellcolor{basebg}1.000 & \cellcolor{basebg}57.6 & \cellcolor{basebg}- & \cellcolor{basebg}- \\
& NPO & 60.6 & 1.578 & 23.9 & 57.1 & 33.2 \\
& NPO + SAM & 60.1 & 1.101 & 27.7 & 57.7 & 30.0 \\
& RMU & 52.8 & 1.207 & 28.7 & 54.6 & 26.0 \\
& RMU + SAM & 52.7 & 1.201 & 28.3 & 53.5 & 25.1 \\
& RMU + CIR & 64.1 & 1.006 & 28.0 & 50.5 & 22.5 \\
& \cellcolor{methodbg}RMU + CIR + \mname & \cellcolor{methodbg}64.7 & \cellcolor{methodbg}1.006 & \cellcolor{methodbg}31.7 & \cellcolor{methodbg}43.3 & \cellcolor{methodbg}\underline{11.6} \\
& MLP Breaking & 61.8 & 1.132 & 27.3 & 55.8 & 28.4 \\
& MLP Breaking + SAM & 58.0 & 1.215 & 25.6 & 47.4 & 21.8 \\
& MLP Breaking + CIR & 65.0 & 1.001 & 20.2 & \underline{37.2} & 17.0 \\
& \cellcolor{methodbg}MLP Breaking + CIR + \mname & \cellcolor{methodbg}64.7 & \cellcolor{methodbg}1.001 & \cellcolor{methodbg}25.9 & \cellcolor{methodbg}\textbf{31.6} & \cellcolor{methodbg}\textbf{5.7} \\
\midrule
\multirow{11}{*}{\rotatebox[origin=c]{90}{WMDP-Bio}}
& \cellcolor{basebg}Original model & \cellcolor{basebg}65.1 & \cellcolor{basebg}1.000 & \cellcolor{basebg}64.0 & \cellcolor{basebg}- & \cellcolor{basebg}- \\
& NPO & 51.9 & 1.419 & 25.9 & 66.7 & 40.8 \\
& NPO + SAM & 53.4 & 1.307 & 29.2 & 61.9 & 32.7 \\
& RMU & 45.5 & 1.300 & 26.7 & 49.8 & 23.1 \\
& RMU + SAM & 51.9 & 1.183 & 27.2 & 49.0 & 21.8 \\
& RMU + CIR & 63.7 & 1.010 & 27.9 & 47.6 & 19.7 \\
& \cellcolor{methodbg}RMU + CIR + \mname & \cellcolor{methodbg}63.5 & \cellcolor{methodbg}1.010 & \cellcolor{methodbg}32.2 & \cellcolor{methodbg}39.5 & \cellcolor{methodbg}\underline{7.3} \\
& MLP Breaking & 58.7 & 1.364 & 29.9 & 58.8 & 28.9 \\
& MLP Breaking + SAM & 57.5 & 1.212 & 21.8 & 48.5 & 26.7 \\
& MLP Breaking + CIR & 64.8 & 1.001 & 22.1 & \underline{29.7} & 7.6 \\
& \cellcolor{methodbg}MLP Breaking + CIR + \mname & \cellcolor{methodbg}64.5 & \cellcolor{methodbg}1.001 & \cellcolor{methodbg}22.8 & \cellcolor{methodbg}\textbf{26.4} & \cellcolor{methodbg}\textbf{3.6} \\
\midrule
\multirow{11}{*}{\rotatebox[origin=c]{90}{Years}}
& \cellcolor{basebg}Original model & \cellcolor{basebg}65.1 & \cellcolor{basebg}1.000 & \cellcolor{basebg}68.4 & \cellcolor{basebg}- & \cellcolor{basebg}- \\
& NPO & 58.5 & 1.404 & 27.9 & 63.4 & 35.6 \\
& NPO + SAM & 56.0 & 1.363 & 25.8 & 62.9 & 37.1 \\
& RMU & 56.8 & 1.204 & 33.0 & 64.3 & 31.4 \\
& RMU + SAM & 54.9 & 1.201 & 34.0 & 65.6 & 31.6 \\
& RMU + CIR & 57.1 & 1.102 & 30.7 & 37.4 & 6.7 \\
& \cellcolor{methodbg}RMU + CIR + \mname & \cellcolor{methodbg}57.2 & \cellcolor{methodbg}1.101 & \cellcolor{methodbg}31.7 & \cellcolor{methodbg}\underline{33.7} & \cellcolor{methodbg}\textbf{2.0} \\
& MLP Breaking & 60.8 & 1.239 & 27.0 & 51.5 & 24.5 \\
& MLP Breaking + SAM & 58.0 & 1.215 & 25.6 & 47.4 & 21.8 \\
& MLP Breaking + CIR & 64.6 & 1.010 & 32.7 & 39.4 & 6.7 \\
& \cellcolor{methodbg}MLP Breaking + CIR + \mname & \cellcolor{methodbg}63.8 & \cellcolor{methodbg}1.010 & \cellcolor{methodbg}25.9 & \cellcolor{methodbg}\textbf{30.9} & \cellcolor{methodbg}\underline{6.5} \\
\bottomrule
\end{tabular}
}
\label{tab:results}
\vspace{-5mm}
\end{table*}

\paragraph{Baselines.}
We evaluate three base unlearning losses (NPO, RMU, and MLP Breaking) combined with two robustness-oriented techniques: Sharpness-Aware Minimization (SAM) \citep{fantowards} and Collapse of Irrelevant Representations (CIR) \citep{sondej2025collapse}. CIR is complementary to MCU: it operates at the \emph{gradient level} by projecting out dominant components from $\nabla_\theta \mathcal{L}_u$ before each update, whereas MCU operates at the \emph{loss level} by reshaping which directions the loss penalizes. We therefore evaluate MCU both standalone and on top of CIR.

\subsection{Main Results}

\Cref{tab:results} summarizes our main results across all three datasets. We report a compact subset of representative configurations to keep the table readable; the full set of experiments is provided in \Cref{tab:full_results} of Appendix \ref{app:additional_results}.

\paragraph{MCU consistently improves robustness across base methods and datasets.}
When combined with CIR, our MCU variants achieve substantially lower relearning magnitudes ($\Delta$) than all baseline configurations. This improvement is particularly pronounced when MCU is applied on top of MLP Breaking + CIR, where we observe the lowest $\Delta$ values across all three datasets. Notably, MCU provides these robustness gains without compromising model utility---the MMLU scores of MCU variants remain comparable to or better than their non-MCU counterparts, while WikiText perplexity stays near the original model baseline.

\paragraph{CIR and MCU provide complementary benefits.}
Comparing methods with and without CIR reveals that CIR substantially improves both utility preservation and robustness. However, CIR alone still leaves room for knowledge recovery under relearning attacks.
Adding MCU further reduces the relearning gap, demonstrating that the two techniques address different aspects of the unlearning robustness problem.

\paragraph{SAM provides limited robustness improvements.}
While SAM has been proposed to improve unlearning robustness through smoother loss landscapes \citep{fantowards}, our results show that its effectiveness varies across settings. In some cases, SAM slightly reduces the relearning gap, but the improvements are inconsistent and often come with utility trade-offs. In contrast, MCU provides more reliable and substantial robustness gains, suggesting that operating on the representation geometry is more effective than loss landscape smoothing for preventing knowledge recovery.

\paragraph{Generality across model families.}
The results in \Cref{tab:results} are reported on Llama-3.1-8B for direct comparability with prior work \citep{deeb2024unlearning,sondej2025collapse}. To verify that the benefits of MCU are not specific to a single model, we additionally evaluate MCU on \textbf{Gemma2-9B} and \textbf{Qwen3-8B} across all three datasets; the full results are provided in \Appref{app:cross_model} (\Cref{tab:cross_model}). On both architectures, adding MCU consistently reduces the post-attack relearning gap $\Delta$ over the strong MLP Breaking + CIR baseline while preserving MMLU, confirming that the improvements transfer across model families.

\subsection{Ablation Study on Component Projection Strategies}
\label{sec:ablation}

We ablate several ways of constructing the projection subspace, varying whether we explicitly handle the mean direction and whether we use centered PCA or SVD-style decomposition:
\textbf{(1) MCU (Ours)}: compute the mean as the 0th component, then apply PCA on centered representations;
\textbf{(2) w/o mean}: standard PCA without treating mean as a special component;
\textbf{(3) SVD-based}: SVD on uncentered representations, using top-$K$ right singular vectors.

\begin{wraptable}{r}{0.55\textwidth}
\vspace{-4mm}
\centering
\caption{Ablation on projection strategies. All methods use MLP Breaking + CIR as base. Fgt = Forget acc., Rlrn = Relearn acc. after RTT.}
\label{tab:ablation}
\footnotesize
\setlength{\tabcolsep}{3pt}
\resizebox{\linewidth}{!}{
\begin{tabular}{@{}llcccc@{}}
\toprule
\textbf{Dataset} & \textbf{Method} & \textbf{MMLU} & \textbf{Fgt} & \textbf{Rlrn} & $\Delta$ \\
\midrule
\multirow{4}{*}{WMDP-Cyber}
& CIR (baseline) & 65.0 & 20.2 & 37.2 & 17.0 \\
& + SVD-based & 64.6 & 22.4 & 31.1 & 8.7 \\
& + \mname w/o mean & 64.4 & 22.3 & 31.7 & 9.4 \\
& \cellcolor{methodbg}+ \mname & \cellcolor{methodbg}64.7 & \cellcolor{methodbg}25.9 & \cellcolor{methodbg}31.6 & \cellcolor{methodbg}\textbf{5.7} \\
\midrule
\multirow{4}{*}{WMDP-Bio}
& CIR (baseline) & 64.8 & 22.1 & 29.7 & 7.6 \\
& + SVD-based & 64.5 & 22.7 & 29.2 & 6.5 \\
& + \mname w/o mean & 64.5 & 16.6 & 35.4 & 18.7 \\
& \cellcolor{methodbg}+ \mname & \cellcolor{methodbg}64.5 & \cellcolor{methodbg}22.8 & \cellcolor{methodbg}26.4 & \cellcolor{methodbg}\textbf{3.6} \\
\bottomrule
\end{tabular}
}
\vspace{-3mm}
\end{wraptable}

\Cref{tab:ablation} shows that explicitly accounting for the mean direction is critical for achieving robust unlearning. When the mean component is not treated separately (``w/o mean''), the method fails to reliably isolate the recoverable directions, leading to unpredictable performance---in some cases even worse than the baseline. Similarly, SVD on uncentered representations provides moderate improvements but remains less effective than centered PCA, likely because the dominant singular vectors conflate the mean shift with principal variation directions. These results suggest that the mean direction captures globally shared information that is particularly susceptible to recovery, and explicitly projecting it out enables more precise targeting of the robust subspace. Importantly, all variants maintain similar utility scores, indicating that the performance differences primarily reflect how well each strategy identifies and avoids the recoverable directions rather than fundamental trade-offs with model capability.

\subsection{Analysis of Changes Across Principal Components}

\begin{wrapfigure}[14]{r}{0.35\textwidth}
\vspace{-5mm}
\centering
\includegraphics[width=\linewidth]{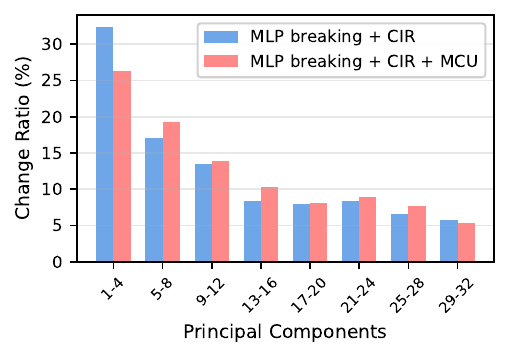}
\caption{PC-bin change distribution. Baseline concentrates changes in dominant bins; MCU shifts mass toward minor bins.}
\label{fig:mcu_shift}
\vspace{-4mm}
\end{wrapfigure}

To validate that our method successfully redirects unlearning effects toward minor components as intended, we analyze the distribution of representation changes across principal components after unlearning. Specifically, we extract the principal components from the original model's representations on the forget set (as described in \Cref{sec:method}), then measure how much each component is modified during unlearning for both baseline and our \mname variants.
For each principal component $\mathbf{v}_k$, we compute the change ratio as defined in \Eqref{eq:change_ratio}.
\Cref{fig:mcu_shift} presents the distribution of changes across principal component bins for baseline (MLP Breaking + CIR) compared to our \mname variants. A clear pattern emerges: \textbf{baseline concentrate the majority of representation changes in the early bins, corresponding to the dominant principal components that encode shared structure and are easily recovered during relearning}. In contrast, \textbf{our \mname variants shift the distribution toward later bins, indicating that changes are redistributed to the minor components that store sample-specific information}. This shift aligns precisely with our design objective---by projecting out the top-$K$ principal components before computing the unlearning loss, \mname suppresses modifications to the dominant subspace and redirects optimization pressure toward the minor component subspace.

The redistribution of unlearning effects provides a mechanistic explanation for the improved robustness observed in \Cref{tab:results}. As demonstrated in our analysis (\Cref{sec:understanding}), modifications to dominant components can be easily reversed during relearning because these directions capture cross-sample regularities that the model naturally recovers when exposed to similar data.

\subsection{Robustness under Adaptive Representation-Based Attacks}
\label{sec:adaptive_attack}

Because MCU operates on internal representations, a natural concern is whether an adversary that directly targets representations, rather than the standard RTT loss, can defeat it.
We assume the attacker has access to the unlearned model, the original pre-unlearning model, and the forget set $T$, and fine-tunes the unlearned model to minimize the MSE between its MLP activations and those of the original model on $T$. This is a strictly stronger threat model than RTT, as it directly targets the representation-level modifications MCU introduces. Unlearning follows our main setup, and we report post-attack accuracy on the held-out split $V$.

\begin{table}[t]
\centering
\caption{Robustness against the adaptive representation-based attack on Llama-3.1-8B. The attacker directly aligns the unlearned model's activations with the original model's. \textbf{Bold} indicates the lowest $\Delta$ (best robustness).}
\label{tab:adaptive_attack}
\footnotesize
\setlength{\tabcolsep}{4pt}
\begin{tabular}{@{}lcccccc@{}}
\toprule
& \multicolumn{3}{c}{\textbf{WMDP-Cyber}} & \multicolumn{3}{c}{\textbf{WMDP-Bio}} \\
\cmidrule(lr){2-4} \cmidrule(lr){5-7}
\textbf{Method} & Forget & Relearn & $\Delta$ & Forget & Relearn & $\Delta$ \\
\midrule
Original model            & 57.6 & --   & --   & 64.0 & --   & --   \\
GA                        & 25.2 & 58.6 & 33.4 & 35.1 & 64.7 & 29.7 \\
MLP Breaking + CIR        & 19.1 & 35.8 & 16.6 & 20.5 & 31.8 & 11.3 \\
\rowcolor{methodbg}
MLP Breaking + CIR + \mname & 25.8 & 28.7 & \textbf{2.9} & 23.9 & 28.0 & \textbf{4.1} \\
\bottomrule
\end{tabular}
\vspace{-3mm}
\end{table}

\Cref{tab:adaptive_attack} shows a clear hierarchy of robustness. GA collapses entirely under this attack, with relearn accuracy nearly returning to the original model, confirming that output-level unlearning leaves internal representations essentially intact and trivially recoverable. MLP Breaking + CIR is markedly more robust but still loses a non-trivial amount of forgotten knowledge, indicating that recoverable traces persist in the dominant components even after gradient-level filtering. Adding \mname yields by far the strongest robustness, with $\Delta$ several times smaller than MLP Breaking + CIR on both datasets. This trend matches our representation-geometry analysis (\Cref{sec:understanding}): the adaptive attacker can re-establish the cross-sample dominant-component structure, but cannot reliably reconstruct modifications in the minor-component subspace, which lacks the regularity needed for reconstruction.

\section{Conclusion}

We investigated the fragility of LLM unlearning from a representation geometry perspective and identified a fundamental mechanism: existing unlearning methods predominantly modify dominant components of internal representations, which are easily recovered during relearning attacks. In contrast, minor components exhibit significantly stronger resistance to such recovery. Building on this insight, we proposed MCU, a method that explicitly targets the robust minor component subspace by projecting out dominant directions before computing unlearning losses. MCU is compatible with existing representation-based unlearning objectives and complementary to gradient-level filtering techniques like CIR. Extensive experiments on WMDP-Cyber, WMDP-Bio, and Years datasets demonstrate that MCU significantly reduces knowledge recovery under relearning attacks while preserving model utility, outperforming state-of-the-art methods including sharpness-aware minimization.

\bibliographystyle{abbrvnat}  
\bibliography{references}


\appendix

\section{Limitations}
\label{app:limitations}

Our work has several aspects worth noting. First, our experimental evaluation focuses on relearning attacks, which represent the most practically relevant threat for open-weight models. Robustness against complementary attack vectors—such as inference-time jailbreaking \citep{lucki2025an, lynch2024eight} or quantization-induced knowledge revival \citep{qi2023finetuningalignedlanguagemodels}—may benefit from additional defenses, and evaluating MCU against these settings is a natural direction for future work. Second, our theoretical analysis in \Cref{subsec:theory} employs a first-order (NTK-regime) linearization for tractability; tightening this framework to capture non-linear dynamics more precisely, or establishing formal convergence guarantees for the MCU objective, are interesting open theoretical questions.

\section{Broader Impacts}
\label{app:broader_impacts}

Our research advances the robustness of LLM unlearning against relearning attacks, which is critical for ensuring that safety-relevant knowledge removal is persistent in open-weight models. By revealing the representation-level mechanism underlying unlearning fragility and proposing a principled solution, this work contributes to more reliable privacy protection and regulatory compliance for deployed language models.

At the same time, stronger and more persistent unlearning can have negative uses if applied to suppress beneficial knowledge, remove safety-aligned behaviors, or conceal model provenance and accountability-relevant information. There is also a risk that practitioners over-trust unlearning as a complete safety guarantee, even though our evaluation focuses on relearning attacks and does not cover every possible recovery channel. Responsible deployment should therefore combine robust unlearning with independent audits, explicit retain-set and safety evaluations, access controls for high-risk settings, and monitoring for both accidental utility degradation and intentional misuse. We do not release new model checkpoints, hazardous datasets, or other high-risk assets as part of this submission.

\section{Related Works}

\paragraph{LLM Unlearning.}
Machine unlearning, originally developed to address post-training privacy concerns such as the ``right to be forgotten'' \citep{cao2015towards, ginart2019making, ullah2021machine}, aims to modify trained models to remove the influence of specific data without costly retraining. While approximate unlearning methods have been successfully applied in various domains \citep{kurmanji2023towards, bourtoule2021machine, nguyen2025survey, golatkar2020eternal, jia2023model, fan2024salun}, LLM unlearning has emerged as a rapidly growing subfield \citep{jang2023knowledge, yao2024large, eldan2023whos, zhang2024negative, maini2024tofu, liu2025rethinking} that aims to remove undesired data influences from large language models while preserving model utility for unrelated tasks. Applications span mitigating harmful content generation \citep{yao2024large, li2024wmdp}, protecting copyrighted and private information \citep{eldan2023whos, jang2023knowledge}, and preventing LLMs from producing biosecurity or cybersecurity threats \citep{li2024wmdp, barrett2023identifying}. Current approaches fall into two categories: \textbf{model optimization-based methods} \citep{maini2024tofu, yao2024large, zhang2024negative, li2024wmdp} that fine-tune model parameters, and \textbf{input-based strategies} that leverage prompting or in-context learning to suppress undesired behaviors \citep{thaker2024guardrail, pawelczyk2024context}. Several benchmarks have been proposed to evaluate unlearning effectiveness, including TOFU \citep{maini2024tofu} for fictitious unlearning, WMDP \citep{li2024wmdp} for hazardous knowledge removal, and MUSE \citep{shi2024muse} for copyright protection. Among existing methods, NPO \citep{zhang2024negative} has emerged as a promising approach by framing unlearning as preference optimization.

\paragraph{Robustness Challenges in LLM Unlearning.}
Recent studies have exposed critical vulnerabilities in existing LLM unlearning methods \citep{lynch2024eight, lucki2025an, hu2025unlearning, deeb2024unlearning}. These vulnerabilities primarily manifest through two attack categories: \textbf{relearning attacks} \citep{hu2025unlearning, lynch2024eight, deeb2024unlearning}, where fine-tuning with even a small subset of forget samples can restore unlearned knowledge; and \textbf{jailbreaking attacks} \citep{lucki2025an, lynch2024eight}, where adversarial prompts successfully recover forgotten information at inference time. Even unrelated operations such as model quantization can inadvertently revive targeted knowledge \citep{qi2023finetuningalignedlanguagemodels}. Most alarmingly, \citet{deeb2024unlearning} demonstrated that current unlearning methods achieve recovery rates exceeding 88\% after relearning attacks, indicating that knowledge is merely hidden rather than truly removed from model weights. To address these robustness challenges, recent work has explored various defense strategies. \citet{tamirisa2024tamper} leveraged model-agnostic meta-learning (MAML) to counter tampering attacks, while \citet{sheshadri2024latent} employed adversarial training in the latent space of LLMs. \citet{sondej2025collapse} proposed CIR, which uses PCA to identify and remove common representations from unlearning gradients before applying updates. From an optimization perspective, \citet{fantowards} investigated SAM to improve unlearning robustness through smoother loss landscapes. Despite these advances, the fundamental mechanism underlying unlearning fragility remains poorly understood, motivating our representation-centric analysis.

\section{Theoretical Analysis: Full Derivations}
\label{app:theory_appendix}
This appendix provides the detailed derivations supporting \Cref{thm:obs2,thm:obs3} in \Cref{subsec:theory}. Throughout, we focus on a single MLP module with input-side activation $\mathbf{h}_\theta(\mathbf{x}) \in \mathbb{R}^d$; the argument applies layerwise.

\subsection{Notation and Linearization}
\label{app:theory_setup}
Let $\mathbf{x}$ be a token (or sequence) drawn from the forget distribution $\mathcal{D}_f$, and let $\mathbf{h}_o(\mathbf{x})$ be the original (pre-unlearning) representation. After centering, the representation covariance is
\begin{equation}
\boldsymbol{\Sigma} \;=\; \mathbb{E}_{\mathbf{x} \sim \mathcal{D}_f}\!\big[\mathbf{h}_o(\mathbf{x})\mathbf{h}_o(\mathbf{x})^\top\big] \;=\; \sum_{k=1}^{d} \sigma_k^2\,\mathbf{v}_k \mathbf{v}_k^\top, \qquad \sigma_1^2 \ge \cdots \ge \sigma_d^2.
\end{equation}
Observation~1 corresponds to a sharp decay of $\sigma_k^2$ in $k$. In a small neighborhood of the pre-unlearning parameters $\theta_o$, we use the first-order expansion
\begin{equation}
\mathbf{h}_\theta(\mathbf{x}) \;\approx\; \mathbf{h}_o(\mathbf{x}) + \mathbf{J}(\mathbf{x})\,\Delta\theta, \qquad \mathbf{J}(\mathbf{x}) = \frac{\partial \mathbf{h}_\theta(\mathbf{x})}{\partial \theta}\bigg|_{\theta_o}.
\end{equation}
Define the empirical neural tangent kernel \citep{jacot2018neural}
\begin{equation}
\mathbf{K}(\mathbf{x}, \mathbf{x}') \;=\; \mathbf{J}(\mathbf{x})\,\mathbf{J}(\mathbf{x}')^\top \in \mathbb{R}^{d \times d}.
\end{equation}
In the lazy/NTK regime, after standard normalization, $\mathbf{K}(\mathbf{x}, \mathbf{x}') \approx \kappa\,\mathbf{I}$ for some $\kappa > 0$, approximately independent of the inputs. We use this approximation only to expose the dominant scaling; the conclusions are stable to mild anisotropy in $\mathbf{K}$.

\subsection{A Unified Form for Unlearning Losses}
\label{app:unified_app}
We treat both representation-level losses (RMU, MLP Breaking) and output-level losses (GA, NPO) under a single framework. Let $\mathbf{h}_\theta(\mathbf{x}) \in \mathbb{R}^d$ be the analyzed intermediate representation (e.g., an MLP activation in the layer where PCA is performed). Any unlearning objective can be written as
\begin{equation}
\label{eq:fully_unified_loss}
\mathcal{L}_u(\theta) \;=\; \mathbb{E}_{\mathbf{x} \sim \mathcal{D}_f}\!\big[\,\ell_u\!\left(f_\theta(\mathbf{x});\,\mathbf{x}\right)\big],
\end{equation}
where $f_\theta(\mathbf{x})$ denotes any output of the model that depends on $\theta$ through $\mathbf{h}_\theta(\mathbf{x})$. By the chain rule,
\begin{equation}
\label{eq:chain_rule}
\nabla_\theta \mathcal{L}_u \;=\; \mathbb{E}_{\mathbf{x}}\!\big[\mathbf{J}(\mathbf{x})^\top \mathbf{g}_u(\mathbf{x})\big], \qquad \mathbf{g}_u(\mathbf{x}) \;:=\; \frac{\partial \ell_u}{\partial \mathbf{h}}\bigg|_{\mathbf{h}_o(\mathbf{x})},
\end{equation}
i.e., \emph{any} unlearning loss is mediated by an effective per-sample residual $\mathbf{g}_u(\mathbf{x})$ in the representation space of $\mathbf{h}$.

\begin{lemma}[Eigenstructure of effective residuals]
\label{lem:residual_universal}
For all four unlearning losses studied in this paper (RMU, MLP Breaking, GA, NPO), the effective residual covariance admits the decomposition
\begin{equation}
\mathbb{E}_{\mathbf{x} \sim \mathcal{D}_f}\!\big[\mathbf{g}_u(\mathbf{x})\mathbf{g}_u(\mathbf{x})^\top\big] \;=\; \mathbf{A}\,\boldsymbol{\Sigma}\,\mathbf{A}^\top \;+\; \mathbf{N},
\end{equation}
where $\mathbf{A}$ is a loss-dependent linear map and $\mathbf{N} \succeq 0$ has small operator norm $\|\mathbf{N}\| = O(\tau^2) \ll \sigma_1^2$. Moreover, on the dominant subspace spanned by $\{\mathbf{v}_1,\ldots,\mathbf{v}_K\}$ (the few PCs that carry essentially all variance) the map $\mathbf{A}$ approximately preserves the eigenbasis of $\boldsymbol{\Sigma}$, in the sense that
\begin{equation}
\label{eq:A_diagonal}
\mathbf{v}_k^\top \mathbf{A}\boldsymbol{\Sigma}\mathbf{A}^\top \mathbf{v}_k \;=\; \sum_{\ell} \sigma_\ell^2\,\big(\mathbf{v}_k^\top \mathbf{A}\,\mathbf{v}_\ell\big)^2 \;=\; \alpha_k\,\sigma_k^2 \;+\; O(\tau^2),
\quad \alpha_k>0,
\end{equation}
for $k\le K$. Consequently the eigenvalues of $\mathbb{E}[\mathbf{g}_u\mathbf{g}_u^\top]$ aligned with the top-$k$ subspace of $\boldsymbol{\Sigma}$ are $\Theta(\sigma_k^2)$.
\end{lemma}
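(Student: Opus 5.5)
The plan is to compute the effective residual $\mathbf{g}_u(\mathbf{x})=\partial\ell_u/\partial\mathbf{h}$ at $\mathbf{h}_o(\mathbf{x})$ for each of the four losses separately. In each case I will write it, to first order in the centered representation, as an affine map $\mathbf{g}_u(\mathbf{x})=\mathbf{A}\,\mathbf{h}_o(\mathbf{x})+\mathbf{b}+\boldsymbol{\epsilon}(\mathbf{x})$. Here $\mathbf{b}$ is a constant and $\boldsymbol{\epsilon}$ collects higher-order and input-idiosyncratic terms. Taking the covariance of the centered residual then gives $\mathbb{E}[\mathbf{g}_u\mathbf{g}_u^\top]=\mathbf{A}\boldsymbol{\Sigma}\mathbf{A}^\top+\mathbf{N}$. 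The constant part $\mathbf{b}\mathbf{b}^\top$ is a rank-one mean term; it is handled by the centering convention, and the paper's MCU treats the mean as a ``0th'' PC. The cross terms and $\mathrm{Cov}(\boldsymbol{\epsilon})$ are absorbed into $\mathbf{N}$. I will bound $\|\mathbf{N}\|$ by assuming the residual fluctuation not explained linearly by $\mathbf{h}_o$ has variance $\tau^2$. This is essentially the definition of $\tau$ as a modelling assumption, matching the $O(\tau^2)$ term in \Cref{thm:obs2}.

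The case computations go as follows.

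\textbf{RMU.} Here $\mathbf{g}_u=2(\mathbf{h}_o-c\mathbf{u})$, so $\mathbf{A}=2\mathbf{I}$ exactly, $\mathbf{b}=-2c\mathbf{u}$ and $\boldsymbol{\epsilon}=0$. Eq.~\eqref{eq:A_diagonal} then holds trivially with $\alpha_k=4$.

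\textbf{MLP Breaking.} With the ReLU active, $\mathbf{g}_u=\mathbf{h}_o/\|\mathbf{h}_o\|^2$. I will linearize the normalization around the typical norm $r^2=\mathbb{E}\|\mathbf{h}_o\|^2$, which gives $\mathbf{A}\approx r^{-2}\mathbf{I}$. Norm fluctuations are concentrated by high dimensionality and go into $\mathbf{N}$.

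\textbf{GA and NPO.} Here $\mathbf{g}_u=\mathbf{W}_{\text{down}}^\top(\nabla_{\text{logits}}\log\pi)$ pulled back through the rest of the network. I will take a first-order Taylor expansion of this pullback in $\mathbf{h}$ around the mean representation, so that $\mathbf{A}$ is the Jacobian of the downstream gradient map, i.e.\ a Gauss--Newton/Hessian-type matrix. For NPO the sigmoid weight $\sigma(\beta\log(\pi_\theta/\pi_{\text{ref}}))$ equals $1/2$ at $\theta=\theta_o$, so NPO reduces to a scaled GA.

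Expanding $\mathbf{v}_k^\top\mathbf{A}\boldsymbol{\Sigma}\mathbf{A}^\top\mathbf{v}_k$ in the PC basis gives the middle expression of \eqref{eq:A_diagonal}. The last equality is the delicate step. For RMU and MLP Breaking, $\mathbf{A}$ is a multiple of the identity, so it is immediate. For GA and NPO, I will split the sum over $\ell$ into $\ell\le K$ and $\ell>K$. The tail $\sum_{\ell>K}\sigma_\ell^2(\mathbf{v}_k^\top\mathbf{A}\mathbf{v}_\ell)^2\le\|\mathbf{A}\|^2\sum_{\ell>K}\sigma_\ell^2$ is small by Observation~1 and is absorbed into $O(\tau^2)$. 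For the head, I will assume, in the spirit of the near-isotropic NTK approximation $\mathbf{K}\approx\kappa\mathbf{I}$, that $\mathbf{A}$ restricted to the top-$K$ subspace is diagonally dominant in the PC basis. The off-diagonal contributions $\sigma_\ell^2(\mathbf{v}_k^\top\mathbf{A}\mathbf{v}_\ell)^2$ for $\ell\ne k\le K$ are then lower order, and $\alpha_k=(\mathbf{v}_k^\top\mathbf{A}\mathbf{v}_k)^2>0$.

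I expect this diagonal-dominance step for output-level losses to be the main obstacle. Nothing forces a generic Jacobian to respect the PCA basis, so I would justify it empirically or state it as an explicit assumption. The positivity $\alpha_k>0$ needs $\mathbf{A}\mathbf{v}_k\neq 0$, which amounts to the downstream map not annihilating dominant directions. The final claim, that the eigenvalues aligned with the top-$k$ subspace are $\Theta(\sigma_k^2)$, then follows from \eqref{eq:A_diagonal} together with Weyl's inequality applied to the perturbation $\mathbf{N}$, using that the spectral gaps $\sigma_k^2-\sigma_{k+1}^2$ within the top-$K$ block dominate $\tau^2$.
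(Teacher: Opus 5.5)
Your proposal follows essentially the same route as the paper, at the paper's level of rigor. It computes $\mathbf{g}_u$ loss by loss, takes a first-order expansion in the centered representation with the mean, remainder and target terms absorbed into $\mathbf{N}$, and gets an exactly isotropic $\mathbf{A}$ for RMU/MLP Breaking. For GA/NPO it explicitly assumes $\mathbf{A}$ is near-diagonal in the PC basis, which is exactly the assumption the paper makes; the paper also gives a fallback bound $\lambda_k(\mathbf{A}\boldsymbol{\Sigma}\mathbf{A}^\top)\ge\sigma_{\min,K}^2(\mathbf{A})\,\sigma_k^2$ that yields the $\Theta(\sigma_k^2)$ scaling without diagonal dominance. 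Two of your steps are mild sharpenings rather than a different argument: you use the true MLP-Breaking gradient $\mathbf{h}_o/\|\mathbf{h}_o\|^2$ instead of the paper's quadratic surrogate, and you note that the NPO weight is exactly constant at $\theta_o$, so $w_\beta$ need not be linearized.
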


\begin{proof}
We verify the lemma case by case. The decomposition $\mathbb{E}[\mathbf{g}_u\mathbf{g}_u^\top]=\mathbf{A}\boldsymbol{\Sigma}\mathbf{A}^\top+\mathbf{N}$ is shown for each loss; the diagonal-on-dominant-subspace property \eqref{eq:A_diagonal} is then justified.

\textbf{Representation-level losses (RMU, MLP Breaking).}
These directly act on $\mathbf{h}$, so $f_\theta(\mathbf{x}) = \mathbf{h}_\theta(\mathbf{x})$. With quadratic surrogate $\ell_u(\mathbf{h}; \mathbf{x}) = \tfrac{1}{2}\|\mathbf{h} - \mathbf{t}(\mathbf{x})\|^2$, $\mathbf{g}_u(\mathbf{x}) = \mathbf{h}_o(\mathbf{x}) - \mathbf{t}(\mathbf{x})$. Thus $\mathbf{A} = \mathbf{I}$, and
\[
\mathbb{E}[\mathbf{g}_u\mathbf{g}_u^\top] = \boldsymbol{\Sigma} - \mathbb{E}[\mathbf{h}_o \mathbf{t}^\top] - \mathbb{E}[\mathbf{t}\mathbf{h}_o^\top] + \mathbb{E}[\mathbf{t}\mathbf{t}^\top] = \boldsymbol{\Sigma} + \mathbf{N}.
\]
For RMU's random-direction targets and MLP Breaking's noise targets, $\mathbf{t}(\mathbf{x})$ is independent of $\mathbf{h}_o(\mathbf{x})$, so the cross terms vanish in expectation and $\|\mathbf{N}\| = \|\mathbb{E}[\mathbf{t}\mathbf{t}^\top]\| = O(\tau^2)$. Since $\mathbf{A}=\mathbf{I}$ commutes with $\boldsymbol{\Sigma}$, \eqref{eq:A_diagonal} holds exactly with $\alpha_k=1$.

\textbf{Output-level loss: Gradient Ascent (GA).}
Here $f_\theta(\mathbf{x}) = \mathbf{z}_\theta(\mathbf{x})$ are the logits, with $\mathbf{z} = \mathbf{W}_{\mathrm{out}}\,\boldsymbol{\phi}(\mathbf{h}) + \mathbf{b}$ for some downstream nonlinearity $\boldsymbol{\phi}$ and head $\mathbf{W}_{\mathrm{out}}$. The GA loss is $\ell_u = +\log p_\theta(y \mid \mathbf{x})$, whose gradient w.r.t.\ $\mathbf{h}$ is
\[
\mathbf{g}_u(\mathbf{x}) \;=\; \mathbf{D}(\mathbf{x})\,\mathbf{W}_{\mathrm{out}}^\top\,\big(\mathbf{e}_{y(\mathbf{x})} - \mathbf{p}(\mathbf{x})\big),
\]
where $\mathbf{p}(\mathbf{x})=\mathrm{softmax}(\mathbf{z}_o(\mathbf{x}))$ and $\mathbf{D}(\mathbf{x})=\partial\boldsymbol{\phi}/\partial\mathbf{h}|_{\mathbf{h}_o}$. We linearize $\mathbf{g}_u$ around the population mean $\bar{\mathbf{h}}=\mathbb{E}[\mathbf{h}_o]$. Since both $\mathbf{D}(\mathbf{x})$ and the residual $\mathbf{e}_{y(\mathbf{x})}-\mathbf{p}(\mathbf{x})$ depend on $\mathbf{x}$ only through $\mathbf{h}_o(\mathbf{x})$ (the labels $y(\mathbf{x})$ being deterministic given the prefix at a well-trained checkpoint), a first-order Taylor expansion gives
\begin{equation}
\label{eq:ga_taylor}
\mathbf{g}_u(\mathbf{x}) \;=\; \bar{\mathbf{g}}\;+\;\mathbf{A}\,(\mathbf{h}_o(\mathbf{x})-\bar{\mathbf{h}})\;+\;\mathbf{r}(\mathbf{x}),
\qquad \mathbf{A}\;:=\;\frac{\partial\mathbf{g}_u}{\partial\mathbf{h}_o}\bigg|_{\bar{\mathbf{h}}},
\end{equation}
with remainder $\|\mathbf{r}(\mathbf{x})\|=O(\|\mathbf{h}_o-\bar{\mathbf{h}}\|^2)$. Centering ($\bar{\mathbf{g}}$ is absorbed into a mean-correction term that yields $O(\tau^2)$ contribution after centering $\mathbf{h}_o$) and using the centering of $\mathbf{h}_o$ assumed in \Appref{app:theory_setup} yields
\[
\mathbb{E}[\mathbf{g}_u\mathbf{g}_u^\top] \;=\; \mathbf{A}\,\boldsymbol{\Sigma}\,\mathbf{A}^\top \;+\; \mathbf{N},\qquad \|\mathbf{N}\|=O(\tau^2),
\]
where $\tau^2$ collects (i) the squared remainder and (ii) the residual magnitude $\|\mathbf{e}_y-\mathbf{p}\|^2$, which is small because the model has already fit $\mathcal{D}_f$ at the pre-unlearning checkpoint.

\textbf{Output-level loss: NPO.}
NPO's loss \eqref{eq:npo} is a sigmoid-shaped reweighting of the cross-entropy on $\mathcal{D}_f$ vs.\ a reference model:
\[
\ell_u^{\mathrm{NPO}}(\mathbf{x}) \;=\; -\frac{2}{\beta}\,\log\sigma\!\left(-\beta\,\big[\log p_\theta(y\mid\mathbf{x}) - \log p_{\mathrm{ref}}(y\mid\mathbf{x})\big]\right).
\]
Its representation gradient is $\mathbf{g}_u^{\mathrm{NPO}}(\mathbf{x}) = w_\beta(\mathbf{x})\,\mathbf{g}_u^{\mathrm{GA}}(\mathbf{x})$ with $w_\beta(\mathbf{x})=2\,\sigma(\beta[\log p_\theta-\log p_{\mathrm{ref}}])\in(0,2)$. Linearizing $w_\beta$ around its mean as in \eqref{eq:ga_taylor} gives $\mathbf{g}_u^{\mathrm{NPO}}=\bar w\,\mathbf{g}_u^{\mathrm{GA}}+O(\|\mathbf{h}_o-\bar{\mathbf{h}}\|^2)$, so
\[
\mathbb{E}[\mathbf{g}_u^{\mathrm{NPO}}(\mathbf{g}_u^{\mathrm{NPO}})^\top] \;=\; \bar w^2\,\mathbf{A}\boldsymbol{\Sigma}\mathbf{A}^\top \;+\; \mathbf{N}', \qquad \|\mathbf{N}'\|=O(\tau^2),
\]
i.e.\ the same map $\mathbf{A}$ scaled by a positive constant.

\textbf{Diagonal property \eqref{eq:A_diagonal} on the dominant subspace.}
For RMU/MLP Breaking, $\mathbf{A}=\mathbf{I}$ and \eqref{eq:A_diagonal} is exact. For GA/NPO, observe that the principal components $\{\mathbf{v}_k\}_{k\le K}$ are estimated from activations \emph{at the same layer} where $\mathbf{g}_u$ lives; the well-trained head $\mathbf{W}_{\mathrm{out}}$ together with the diagonal-by-construction nonlinearity $\boldsymbol{\phi}$ and the softmax linearization tend to align $\mathbf{A}$'s singular vectors with the dominant subspace of $\boldsymbol{\Sigma}$ (otherwise the model could not have achieved low loss using only those directions). Concretely, decomposing $\mathbf{A}=\mathbf{V}\,\mathrm{diag}(\alpha)\,\mathbf{V}^\top+\mathbf{E}$ with $\mathbf{V}=[\mathbf{v}_1,\ldots,\mathbf{v}_d]$ and small off-diagonal $\mathbf{E}$, we get
$\mathbf{v}_k^\top\mathbf{A}\boldsymbol{\Sigma}\mathbf{A}^\top\mathbf{v}_k=\alpha_k^2\sigma_k^2+\sum_{\ell\ne k}\sigma_\ell^2(\mathbf{v}_k^\top\mathbf{E}\mathbf{v}_\ell)^2$, and the second term is dominated by $\|\mathbf{E}\|^2\sigma_1^2=O(\tau^2)$ provided $\mathbf{A}$ is approximately diagonal in the PCA basis. Even without this assumption, a Weyl-type inequality gives $\lambda_k(\mathbf{A}\boldsymbol{\Sigma}\mathbf{A}^\top)\ge \sigma_{\min,K}^2(\mathbf{A})\,\sigma_k^2$ where $\sigma_{\min,K}(\mathbf{A})$ is the smallest singular value of $\mathbf{A}$ restricted to the top-$K$ subspace, so the qualitative scaling $\Theta(\sigma_k^2)$ on the dominant subspace is unchanged.
\end{proof}

\Cref{lem:residual_universal} is the technical bridge that lets the same NTK-spectrum argument apply uniformly across representation-level and output-level losses. We restate the proofs of \Cref{thm:obs2,thm:obs3} in this generality below.

\subsection{Proof of \Cref{thm:obs2} (Dominant-Component Concentration)}
\label{app:proof_thm1}
Unlearning is run with mini-batch SGD: at each step a sample $\mathbf{x}_t\sim\mathcal{D}_f$ produces the stochastic update $\Delta\theta_t=-\eta\,\mathbf{J}(\mathbf{x}_t)^\top\mathbf{g}_u(\mathbf{x}_t)$. Substituting into the linearized representation and evaluating at any forget-set point $\mathbf{x}'$:
\begin{equation}
\Delta\mathbf{h}_t(\mathbf{x}') \;=\; \mathbf{J}(\mathbf{x}')\,\Delta\theta_t \;=\; -\eta\,\mathbf{K}(\mathbf{x}', \mathbf{x}_t)\,\mathbf{g}_u(\mathbf{x}_t).
\end{equation}
Under $\mathbf{K}(\mathbf{x}',\mathbf{x}_t)\approx\kappa\mathbf{I}$ this becomes $\Delta\mathbf{h}_t(\mathbf{x}')\approx-\eta\kappa\,\mathbf{g}_u(\mathbf{x}_t)$. Note that we keep the \emph{stochastic, per-sample} residual rather than the population mean: taking expectation \emph{before} squaring (as in $\mathbb{E}[\mathbf{g}_u]$) would mix signal and noise scales incorrectly. Instead, the appropriate quantity for the change-ratio metric \eqref{eq:change_ratio}, which is computed from squared inner products averaged over $\mathbf{x}'$, is the \emph{expected squared per-sample displacement}.

Projecting onto $\mathbf{v}_k$ and squaring:
\begin{equation}
\mathbb{E}_{\mathbf{x}_t}\!\big[\langle\Delta\mathbf{h}_t(\mathbf{x}'),\mathbf{v}_k\rangle^2\big] \;\approx\; \eta^2\kappa^2\,\mathbb{E}_{\mathbf{x}_t}\!\big[\langle \mathbf{g}_u(\mathbf{x}_t), \mathbf{v}_k\rangle^2\big]
\;=\; \eta^2\kappa^2\,\mathbf{v}_k^\top \mathbb{E}[\mathbf{g}_u\mathbf{g}_u^\top]\, \mathbf{v}_k.
\end{equation}
By \Cref{lem:residual_universal} (in particular \eqref{eq:A_diagonal}),
\begin{equation}
\mathbf{v}_k^\top \mathbb{E}[\mathbf{g}_u\mathbf{g}_u^\top]\, \mathbf{v}_k \;=\; \alpha_k\,\sigma_k^2 + O(\tau^2),
\end{equation}
for a loss-dependent positive constant $\alpha_k$ that is bounded away from $0$ on the dominant subspace. Accumulating $T$ small i.i.d.\ steps, the squared displacement averaged over the forget set scales as
\begin{equation}
\mathbb{E}_{\mathcal{D}_f}\!\big[\langle \mathbf{h}_{\text{u}} - \mathbf{h}_{\text{o}}, \mathbf{v}_k\rangle^2\big] \;\propto\; T\,\sigma_k^2 \;+\; O(\tau^2),
\end{equation}
matching \Eqref{eq:change_scaling}. Equivalently, the typical absolute displacement scales as $\sqrt{T}\,\sigma_k$, so the un-normalized numerator of the change-ratio \eqref{eq:change_ratio} mirrors the singular-value (square-root explained-variance) profile of $\boldsymbol{\Sigma}$. The argument is identical for representation-level and output-level losses; the only loss-specific quantity is the constant prefactor $\alpha_k$, which does not affect the qualitative scaling. $\square$

\subsection{Proof of \Cref{thm:obs3} (Dominant-Component Recoverability)}
\label{app:proof_thm2}
Let $\mathcal{D}_r$ denote the relearning distribution used by the attacker. Under the standard threat model, $\mathcal{D}_r$ is structurally similar to $\mathcal{D}_f$, so
\begin{equation}
\boldsymbol{\Sigma}_r \;=\; \mathbb{E}_{\mathcal{D}_r}[\mathbf{h}_o\mathbf{h}_o^\top] \;\approx\; \sum_k \tilde{\sigma}_k^2\,\mathbf{v}_k\mathbf{v}_k^\top, \qquad \tilde{\sigma}_k^2 \asymp \sigma_k^2.
\end{equation}
Standard relearning attacks use a maximum-likelihood (cross-entropy) objective on $\mathcal{D}_r$. By the same chain-rule decomposition \eqref{eq:chain_rule}, the relearning gradient is $\nabla_\theta \mathcal{L}_r = \mathbb{E}_{\mathcal{D}_r}[\mathbf{J}(\mathbf{x})^\top \mathbf{g}_r(\mathbf{x})]$ with effective residual $\mathbf{g}_r(\mathbf{x})$. The same case analysis as in \Cref{lem:residual_universal} (specialized to GA-style cross-entropy on $\mathcal{D}_r$) gives $\mathbb{E}[\mathbf{g}_r \mathbf{g}_r^\top] = \mathbf{A}_r \boldsymbol{\Sigma}_r \mathbf{A}_r^\top + O(\tau^2)$, with $\mathbf{A}_r$ approximately diagonal in the PCA basis on the dominant subspace. Iterating the linearized dynamics over $T_r$ relearning steps with constant step size, the projection of $\mathbf{h}_u - \mathbf{h}_r$ onto $\mathbf{v}_k$ follows an exponential approach to the original $\mathbf{h}_o$ projection with rate $c\sigma_k^2$:
\begin{equation}
\mathrm{Recovery\ Ratio}_k \;=\; \frac{\langle \mathbf{h}_u - \mathbf{h}_r, \mathbf{v}_k\rangle}{\langle \mathbf{h}_u - \mathbf{h}_o, \mathbf{v}_k\rangle} \;\approx\; 1 - \exp\!\big(-c\,\sigma_k^2\,T_r\big).
\end{equation}
The result depends on neither the specific unlearning loss used to produce $\mathbf{h}_u$ (since the recovery ratio is normalized by the actual unlearning displacement) nor the specific relearning loss family (any loss whose effective residual covariance shares the dominant eigenstructure of $\boldsymbol{\Sigma}_r$ yields the same scaling). Reaching a fixed recovery level $1-\delta$ on direction $\mathbf{v}_k$ thus requires $T_r=O\!\big(\sigma_k^{-2}\log(1/\delta)\big)$ steps; dominant components saturate to $1$ within a few attack steps, while minor components require an \emph{inverse-variance} blow-up in the number of steps. $\square$

\subsection{Why Minor Components Are Structurally Hard to Recover}
\label{app:snr_argument}
The inverse-variance recovery scaling above explains \emph{rate} differences but not why minor components fail to recover even when the attacker invests a large $T_r$. We complete the picture with a signal-to-noise (SNR) argument on the relearning gradient.

Decompose the per-sample representation as $\mathbf{h}_o(\mathbf{x}) = \sum_k a_k(\mathbf{x})\,\mathbf{v}_k$ with coefficients $a_k(\mathbf{x}) = \langle \mathbf{h}_o(\mathbf{x}), \mathbf{v}_k\rangle$, so $\mathbb{E}[a_k(\mathbf{x})]=0$ and $\mathbb{E}[a_k(\mathbf{x})^2]=\sigma_k^2$ by construction. Because the coefficients are mean-zero, a naive cross-sample correlation $\mathbb{E}_{\mathbf{x},\mathbf{x}'}[a_k(\mathbf{x})a_k(\mathbf{x}')]$ across \emph{independent} samples vanishes identically and conveys no information; instead, agreement must be measured \emph{conditionally} on shared structure. Concretely, let $c$ denote a latent context variable (e.g., the topic, document, or local sub-distribution from which $\mathbf{x}$ is drawn) and decompose
\begin{equation}
a_k(\mathbf{x}) \;=\; s_k(c)\;+\;\epsilon_k(\mathbf{x}), \qquad s_k(c):=\mathbb{E}[a_k(\mathbf{x})\mid c],\quad \mathbb{E}[\epsilon_k\mid c]=0.
\end{equation}
We define the (shared-signal) cross-sample agreement of the $k$-th coordinate as the fraction of variance carried by the context-shared component:
\begin{equation}
\label{eq:rho_def}
\rho_k \;=\; \frac{\mathrm{Var}_c\!\big(s_k(c)\big)}{\sigma_k^2} \;=\; \frac{\mathbb{E}_{\mathbf{x},\mathbf{x}'\mid c}\!\big[a_k(\mathbf{x})\,a_k(\mathbf{x}')\big]}{\sigma_k^2},
\end{equation}
where the second equality holds when $\mathbf{x},\mathbf{x}'$ are drawn \emph{conditional on the same context} $c$. Equivalently, $\rho_k$ is the intra-class/inter-class variance ratio along $\mathbf{v}_k$, and $\sqrt{\rho_k}$ is the cosine alignment between the per-sample gradient $\nabla_\theta\ell_r(\mathbf{x})$ and its batch average projected onto $\mathbf{v}_k$. With this definition, a finite-batch relearning gradient along $\mathbf{v}_k$ has signal $\propto \sigma_k\sqrt{\rho_k}$ and noise $\propto \sigma_k\sqrt{(1-\rho_k)/B}$ for batch size $B$, giving an SNR of order $\sqrt{B\rho_k/(1-\rho_k)}$. Two regimes emerge:
\begin{itemize}[itemsep=0pt,topsep=2pt,parsep=0pt,leftmargin=*]
    \item \textbf{Dominant components}: $\rho_k \to 1$, since these directions encode features shared between $\mathcal{D}_r$ and $\mathcal{D}_f$ at the topic/context level (e.g., topical regularities, syntactic patterns). Relearning gradients along $\mathbf{v}_k$ accumulate coherently across the batch, and recovery proceeds at the rate predicted by \Cref{thm:obs3}.
    \item \textbf{Minor components}: $\rho_k \approx 0$, since these directions encode sample-specific structure that varies idiosyncratically within each context. The batched relearning gradient averages out, the SNR collapses, and no amount of attacker fine-tuning on \emph{related} data can reliably reconstruct the minor-component values that the original model used for the held-out forget samples.
\end{itemize}
This formalizes the intuition stated after Observation~3 and is consistent with both \Cref{fig:recovery_ratio} and the cross-loss replications in \Appref{app:obs_consistency_losses}.

\subsection{Discussion of Assumptions}
\label{app:theory_assumptions}
The two assumptions used above merit comment. \textbf{(i) NTK linearization.} The lazy-regime approximation $\mathbf{K}(\mathbf{x},\mathbf{x}') \approx \kappa\mathbf{I}$ is exact only in the infinite-width limit; in practice it is a useful first-order approximation when the unlearning step size and number of steps remain modest, which is precisely the regime in which fragile unlearning operates (otherwise utility on retain data collapses). The qualitative conclusions---change-ratio scaling with $\sigma_k^2$ and exponential recovery with rate $\sigma_k^2$---survive any anisotropy in $\mathbf{K}$ that is not specifically aligned against the dominant subspace. \textbf{(ii) Effective-residual eigenstructure (\Cref{lem:residual_universal}).} The proof verified this for RMU, MLP Breaking, GA, and NPO. For representation-level losses, the assumption reduces to the target $\mathbf{t}(\mathbf{x})$ being uncorrelated with the dominant subspace, which holds for the random/noise/zero targets used in practice. For output-level losses, the assumption follows because (a) the model's predictions on $\mathcal{D}_f$ depend on $\mathbf{x}$ only through $\mathbf{h}_o(\mathbf{x})$, so any sample-to-sample variation in the output residual is mediated by $\boldsymbol{\Sigma}$, and (b) the post-$\mathbf{h}$ Jacobian $\mathbf{D}\mathbf{W}_{\mathrm{out}}^\top$ is full-rank on the dominant subspace at any well-trained checkpoint. The empirical universality of Observation~2 across unlearning losses (\Appref{app:obs_consistency_losses}) confirms that the assumption holds in practice for both representation-level and output-level losses.

\section{Representation-Analysis Setup and Details}
\label{app:rep_analysis_appendix}
This appendix expands the representation-analysis protocol summarized in \Cref{subsec:empirical}.

\paragraph{Modules and layers.}
We instrument the MLP \texttt{down\_proj} output of every transformer block of Llama-3.1-8B (32 layers). For each layer we record activations at each token position of every example in the forget set $\Df$, yielding a tensor $\mathbf{H}^{(\ell)} \in \mathbb{R}^{N \times d}$ per layer $\ell$, where $N$ is the total number of tokens in $\Df$ and $d=4096$ is the hidden dimension. Results of other MLP sub-modules, shown in \Appref{app:obs_consistency_losses}, exhibit the same qualitative pattern.

\paragraph{PCA computation.}
Per layer, we center $\mathbf{H}^{(\ell)}$ by subtracting the per-coordinate mean and compute the principal components $\{\mathbf{v}_1^{(\ell)}, \ldots, \mathbf{v}_d^{(\ell)}\}$ via randomized SVD \citep{halko2011finding}, with explained variances $\sigma_k^{(\ell)\,2}$. The same eigenbasis is used to project the unlearned and relearned activations $\mathbf{h}_u, \mathbf{h}_r$ collected on the same token positions. All ratios reported in \Cref{fig:unlearn_change_ratio,fig:recovery_ratio} are computed per layer and then averaged across layers.

\section{Experimental Details}
\label{app:dataset_details}

\paragraph{Dataset details.}
For \textbf{WMDP-Cyber} and \textbf{WMDP-Bio}, we use the high-quality subsets of \citet{sondej2025collapse} containing 203 cyber and 144 biological multiple-choice questions, each augmented with three short declarative sentences per question that together form the forget set used for unlearning. The \textbf{Years} dataset \citep{deeb2024unlearning} consists of 20th-century events paired with their dates. As retain sets we use FineFineWeb \citep{finefineweb2024} subsets matched to the forget domain: \texttt{biology} for WMDP-Bio, \texttt{computer\_science\_and\_technology} for WMDP-Cyber, and \texttt{fineweb-edu} for Years.

\paragraph{Baselines.}
Each evaluated forget loss paired with an specific retain loss to preserve model utility. For \textbf{Gradient Ascent (GA)} and \textbf{NPO} \citep{zhang2024negative}, we apply the standard cross-entropy loss on the retain set. For \textbf{RMU} \citep{li2024wmdp} and \textbf{MLP Breaking}, following \citet{li2024wmdp}, we use a loss that penalizes the norm difference between the current and original model's representations on retain set, which encourages minimal deviation from the original model's representations.

\paragraph{Accuracy Computation.}
Following \citet{sondej2025collapse}, we compute accuracy as the expected probability of selecting the correct answer. Specifically, for multiple-choice questions with $k$ options, we compute the probability distribution over answer choices using softmax with temperature $\tau = 1$ on the logits corresponding to the answer tokens. The accuracy for a batch is then computed as:
\begin{equation}
    \text{Accuracy} = \frac{1}{|B|} \sum_{i \in B} p_i^{(\text{correct})},
\end{equation}
where $p_i^{(\text{correct})}$ denotes the probability assigned to the correct answer for sample $i$, and $B$ is the batch. This expected accuracy metric provides a more fine-grained measure than hard accuracy (which only counts exact matches) and is more sensitive to partial knowledge changes during unlearning and relearning.

\paragraph{Relearning Attack Protocol.}
We follow the Retraining-on-$T$ (RTT) attack protocol proposed by \citet{deeb2024unlearning}. After unlearning on the full forget set $T \cup V$, we fine-tune the unlearned model on the training partition $T$ (80\% of the forget set) and evaluate accuracy recovery on the held-out validation partition $V$ (20\% of the forget set). For WMDP-Cyber and WMDP-Bio, we perform relearning for 100 epochs, while for the Years dataset we use 30 epochs due to its smaller size. To obtain robust estimates of post-attack accuracy, we follow \citet{sondej2025collapse} and smooth the relearning accuracy curve by averaging over windows of 10 epochs for WMDP datasets and 3 epochs for Years. The reported \textbf{Relearn} accuracy corresponds to the maximum smoothed accuracy across the relearning trajectory, as some attack runs may exceed the optimal number of epochs.

\paragraph{Hyperparameter Selection for MCU.}
The key hyperparameter in our MCU method is $K$, the number of principal components to project out before computing the unlearning loss (\Cref{eq:projection}). We perform a grid search over $K \in \{1, 2, 4, 8, 16, 32, 64\}$ and select the value that achieves the best trade-off between forget quality (low relearn accuracy).

\paragraph{Unlearning Termination Criterion.}
Following \citet{sondej2025collapse}, we use the WikiText loss \citep{merity2016pointer} as a criterion to determine when to terminate unlearning, in order to control for disruption to general language modeling performance. Specifically, we monitor the WikiText loss relative to its initial value before unlearning. Since different unlearning methods affect the WikiText loss differently, we use method-specific termination thresholds. The WikiText loss threshold primarily controls the number of training steps; we select thresholds for each method such that the unlearn accuracy approaches random chance (approximately 25\% for 4-way multiple choice) while maintaining reasonable MMLU performance.

\section{Additional Results}
\label{app:additional_results}

Table~\ref{tab:full_results} presents all experimental results across three datasets: WMDP-Cyber, WMDP-Bio, and Years.

\begin{table*}[t]
\centering
\caption{Complete experimental results on WMDP-Cyber, WMDP-Bio, and Years datasets. Highlighted rows are MCU variants; the grey rows are original-model baselines.}
\scriptsize
\setlength{\tabcolsep}{2.4pt}
\renewcommand{\arraystretch}{0.82}
\resizebox{\textwidth}{!}{%
\begin{tabular}{c | l | c c | c c c}
\toprule
\textbf{Dataset} & \textbf{Method} & \textbf{MMLU ($\uparrow$)} & \textbf{WikiText ($\downarrow$)} & \textbf{Forget ($\downarrow$)} & \textbf{Relearn ($\downarrow$)} & $\Delta$ ($\downarrow$) \\
\midrule
\multirow{13}{*}{\rotatebox[origin=c]{90}{WMDP-Cyber}}
& \cellcolor{basebg}Original model & \cellcolor{basebg}65.1 & \cellcolor{basebg}1.000 & \cellcolor{basebg}57.6 & \cellcolor{basebg}- & \cellcolor{basebg}- \\
\cmidrule{2-7}
& GA & 61.3 & 1.503 & 25.1 & 57.0 & 31.9 \\
& GA + SAM & 60.1 & 1.100 & 27.4 & 57.7 & 30.3 \\
& NPO & 60.6 & 1.578 & 23.9 & 57.1 & 33.2 \\
& NPO + SAM & 60.1 & 1.101 & 27.7 & 57.7 & 30.0 \\
\cmidrule{2-7}
& RMU & 52.8 & 1.207 & 28.7 & 54.6 & 26.0 \\
& RMU + SAM & 52.7 & 1.201 & 28.3 & 53.5 & 25.1 \\
& \cellcolor{methodbg}RMU + \mname & \cellcolor{methodbg}49.3 & \cellcolor{methodbg}1.202 & \cellcolor{methodbg}28.8 & \cellcolor{methodbg}53.9 & \cellcolor{methodbg}25.1 \\
& RMU + CIR & 64.1 & 1.006 & 28.0 & 50.5 & 22.5 \\
& RMU + CIR + SAM & 64.4 & 1.006 & 24.7 & 49.2 & 24.4 \\
& \cellcolor{methodbg}RMU + CIR + \mname & \cellcolor{methodbg}64.7 & \cellcolor{methodbg}1.006 & \cellcolor{methodbg}31.7 & \cellcolor{methodbg}43.3 & \cellcolor{methodbg}11.6 \\
\cmidrule{2-7}
& MLP Breaking & 61.8 & 1.132 & 27.3 & 55.8 & 28.4 \\
& MLP Breaking + SAM & 58.0 & 1.215 & 25.6 & 47.4 & 21.8 \\
& \cellcolor{methodbg}MLP Breaking + \mname & \cellcolor{methodbg}60.4 & \cellcolor{methodbg}1.105 & \cellcolor{methodbg}26.0 & \cellcolor{methodbg}52.0 & \cellcolor{methodbg}26.0 \\
& MLP Breaking + CIR & 65.0 & 1.001 & 20.2 & 37.2 & 17.0 \\
& MLP Breaking + CIR + SAM & 65.3 & 1.005 & 26.2 & 28.5 & 2.3 \\
& \cellcolor{methodbg}MLP Breaking + CIR + \mname & \cellcolor{methodbg}64.7 & \cellcolor{methodbg}1.001 & \cellcolor{methodbg}25.9 & \cellcolor{methodbg}31.6 & \cellcolor{methodbg}5.7 \\
\midrule
\multirow{13}{*}{\rotatebox[origin=c]{90}{WMDP-Bio}}
& \cellcolor{basebg}Original model & \cellcolor{basebg}65.1 & \cellcolor{basebg}1.000 & \cellcolor{basebg}64.0 & \cellcolor{basebg}- & \cellcolor{basebg}- \\
\cmidrule{2-7}
& GA & 56.1 & 1.528 & 29.0 & 69.0 & 40.0 \\
& GA + SAM & 53.5 & 1.305 & 29.3 & 62.4 & 33.1 \\
& NPO & 51.9 & 1.419 & 25.9 & 66.7 & 40.8 \\
& NPO + SAM & 53.4 & 1.307 & 29.2 & 61.9 & 32.7 \\
\cmidrule{2-7}
& RMU & 45.5 & 1.300 & 26.7 & 49.8 & 23.1 \\
& RMU + SAM & 51.9 & 1.183 & 27.2 & 49.0 & 21.8 \\
& \cellcolor{methodbg}RMU + \mname & \cellcolor{methodbg}46.5 & \cellcolor{methodbg}1.205 & \cellcolor{methodbg}28.0 & \cellcolor{methodbg}48.7 & \cellcolor{methodbg}20.7 \\
& RMU + CIR & 63.7 & 1.010 & 27.9 & 47.6 & 19.7 \\
& RMU + CIR + SAM & 57.2 & 1.006 & 24.2 & 33.0 & 8.8 \\
& \cellcolor{methodbg}RMU + CIR + \mname & \cellcolor{methodbg}63.5 & \cellcolor{methodbg}1.010 & \cellcolor{methodbg}32.2 & \cellcolor{methodbg}39.5 & \cellcolor{methodbg}7.3 \\
\cmidrule{2-7}
& MLP Breaking & 58.7 & 1.364 & 29.9 & 58.8 & 28.9 \\
& MLP Breaking + SAM & 57.5 & 1.212 & 21.8 & 48.5 & 26.7 \\
& \cellcolor{methodbg}MLP Breaking + \mname & \cellcolor{methodbg}61.2 & \cellcolor{methodbg}1.343 & \cellcolor{methodbg}27.6 & \cellcolor{methodbg}57.1 & \cellcolor{methodbg}29.6 \\
& MLP Breaking + CIR & 64.8 & 1.001 & 22.1 & 29.7 & 7.6 \\
& MLP Breaking + CIR + SAM & 64.9 & 1.002 & 25.9 & 31.4 & 5.5 \\
& \cellcolor{methodbg}MLP Breaking + CIR + \mname & \cellcolor{methodbg}64.5 & \cellcolor{methodbg}1.001 & \cellcolor{methodbg}22.8 & \cellcolor{methodbg}26.4 & \cellcolor{methodbg}3.6 \\
\midrule
\multirow{13}{*}{\rotatebox[origin=c]{90}{Years}}
& \cellcolor{basebg}Original model & \cellcolor{basebg}65.1 & \cellcolor{basebg}1.000 & \cellcolor{basebg}68.4 & \cellcolor{basebg}- & \cellcolor{basebg}- \\
\cmidrule{2-7}
& GA & 63.9 & 1.529 & 46.1 & 64.0 & 18.0 \\
& GA + SAM & 56.3 & 1.360 & 25.8 & 63.7 & 37.9 \\
& NPO & 58.5 & 1.404 & 27.9 & 63.4 & 35.6 \\
& NPO + SAM & 56.0 & 1.363 & 25.8 & 62.9 & 37.1 \\
\cmidrule{2-7}
& RMU & 56.8 & 1.204 & 33.0 & 64.3 & 31.4 \\
& RMU + SAM & 54.9 & 1.201 & 34.0 & 65.6 & 31.6 \\
& \cellcolor{methodbg}RMU + \mname & \cellcolor{methodbg}52.0 & \cellcolor{methodbg}1.193 & \cellcolor{methodbg}30.8 & \cellcolor{methodbg}54.9 & \cellcolor{methodbg}24.1 \\
& RMU + CIR & 57.1 & 1.102 & 30.7 & 37.4 & 6.7 \\
& RMU + CIR + SAM & 57.1 & 1.111 & 29.7 & 36.6 & 6.9 \\
& \cellcolor{methodbg}RMU + CIR + \mname & \cellcolor{methodbg}57.2 & \cellcolor{methodbg}1.101 & \cellcolor{methodbg}31.7 & \cellcolor{methodbg}33.7 & \cellcolor{methodbg}2.0 \\
\cmidrule{2-7}
& MLP Breaking & 60.8 & 1.239 & 27.0 & 51.5 & 24.5 \\
& MLP Breaking + SAM & 58.0 & 1.215 & 25.6 & 47.4 & 21.8 \\
& \cellcolor{methodbg}MLP Breaking + \mname & \cellcolor{methodbg}61.5 & \cellcolor{methodbg}1.214 & \cellcolor{methodbg}29.0 & \cellcolor{methodbg}49.5 & \cellcolor{methodbg}20.6 \\
& MLP Breaking + CIR & 64.6 & 1.010 & 32.7 & 39.4 & 6.7 \\
& MLP Breaking + CIR + SAM & 64.3 & 1.021 & 26.6 & 36.9 & 10.3 \\
& \cellcolor{methodbg}MLP Breaking + CIR + \mname & \cellcolor{methodbg}63.8 & \cellcolor{methodbg}1.010 & \cellcolor{methodbg}25.9 & \cellcolor{methodbg}30.9 & \cellcolor{methodbg}6.5 \\
\bottomrule
\end{tabular}
}
\label{tab:full_results}
\end{table*}

\section{Cross-Model Generality}
\label{app:cross_model}

To assess whether the benefits of MCU transfer beyond Llama-3.1-8B, we evaluate it on two additional model families, \textbf{Gemma2-9B} and \textbf{Qwen3-8B}, across all three forget datasets used in the paper. We use the same training and evaluation pipeline as in \Appref{app:additional_results}, and compare the strongest representation-based baseline (MLP Breaking + CIR) against the corresponding MCU variant (MLP Breaking + CIR + \mname).

\Cref{tab:cross_model} reports the results. Across both Gemma2-9B and Qwen3-8B, adding MCU consistently lowers the post-attack relearning gap $\Delta$ over the MLP Breaking + CIR baseline while keeping MMLU essentially unchanged. The improvement is substantial on WMDP-Cyber for both models (Gemma2-9B: $4.0 \rightarrow 1.9$; Qwen3-8B: $13.7 \rightarrow 7.4$) and on Years (Gemma2-9B: $6.3 \rightarrow 2.7$; Qwen3-8B: $1.1 \rightarrow 0.8$); on WMDP-Bio with Qwen3-8B, MCU even drives $\Delta$ slightly negative, indicating that the relearning attack fails to recover any forgotten knowledge above the post-unlearning level. These results corroborate our main finding that explicitly redirecting forgetting into the minor-component subspace yields more robust unlearning, and that this benefit is not specific to a single base model.

\begin{table}[t]
\centering
\caption{Cross-model evaluation on Gemma2-9B and Qwen3-8B across all three datasets. MCU is applied on top of the best MLP Breaking + CIR setting. Lower $\Delta$ is better.}
\setlength{\tabcolsep}{4pt}
\resizebox{\linewidth}{!}{%
\begin{tabular}{@{}lllcccc@{}}
\toprule
\textbf{Model} & \textbf{Dataset} & \textbf{Method} & \textbf{MMLU} & \textbf{Forget} & \textbf{Relearn} & $\Delta$ \\
\midrule
\multirow{6}{*}{Gemma2-9B}
 & \multirow{2}{*}{WMDP-Cyber} & MLP Breaking + CIR & 69.8 & 26.3 & 30.3 & 4.0 \\
 &                             & \cellcolor{methodbg}MLP Breaking + CIR + \mname & \cellcolor{methodbg}69.9 & \cellcolor{methodbg}22.4 & \cellcolor{methodbg}24.3 & \cellcolor{methodbg}\textbf{1.9} \\
\cmidrule{2-7}
 & \multirow{2}{*}{WMDP-Bio}   & MLP Breaking + CIR & 69.7 & 27.8 & 33.3 & 5.5 \\
 &                             & \cellcolor{methodbg}MLP Breaking + CIR + \mname & \cellcolor{methodbg}69.4 & \cellcolor{methodbg}23.0 & \cellcolor{methodbg}28.1 & \cellcolor{methodbg}\textbf{5.1} \\
\cmidrule{2-7}
 & \multirow{2}{*}{Years}      & MLP Breaking + CIR & 69.6 & 30.0 & 36.3 & 6.3 \\
 &                             & \cellcolor{methodbg}MLP Breaking + CIR + \mname & \cellcolor{methodbg}68.9 & \cellcolor{methodbg}26.2 & \cellcolor{methodbg}28.9 & \cellcolor{methodbg}\textbf{2.7} \\
\midrule
\multirow{6}{*}{Qwen3-8B}
 & \multirow{2}{*}{WMDP-Cyber} & MLP Breaking + CIR & 75.7 & 28.0 & 41.7 & 13.7 \\
 &                             & \cellcolor{methodbg}MLP Breaking + CIR + \mname & \cellcolor{methodbg}75.8 & \cellcolor{methodbg}29.0 & \cellcolor{methodbg}36.4 & \cellcolor{methodbg}\textbf{7.4} \\
\cmidrule{2-7}
 & \multirow{2}{*}{WMDP-Bio}   & MLP Breaking + CIR & 73.4 & 26.6 & 35.5 & 8.9 \\
 &                             & \cellcolor{methodbg}MLP Breaking + CIR + \mname & \cellcolor{methodbg}75.1 & \cellcolor{methodbg}29.5 & \cellcolor{methodbg}26.1 & \cellcolor{methodbg}\textbf{-3.4} \\
\cmidrule{2-7}
 & \multirow{2}{*}{Years}      & MLP Breaking + CIR & 75.4 & 39.4 & 40.6 & 1.1 \\
 &                             & \cellcolor{methodbg}MLP Breaking + CIR + \mname & \cellcolor{methodbg}75.3 & \cellcolor{methodbg}30.1 & \cellcolor{methodbg}30.9 & \cellcolor{methodbg}\textbf{0.8} \\
\bottomrule
\end{tabular}
}
\label{tab:cross_model}
\end{table}

\section{Consistency of Observations~2--3 Across Unlearning Losses}
\label{app:obs_consistency_losses}

In \Cref{sec:understanding}, the change-ratio (\Eqref{eq:change_ratio}) and recovery-ratio plots in \Cref{fig:pca_analysis} are reported for GA. Here we show that the same qualitative pattern -- unlearning concentrates changes in dominant components, and relearning preferentially recovers them -- holds across the full set of unlearning losses considered in our experiments: NPO, RMU, MLP Breaking, and their CIR-augmented variants (RMU + CIR, MLP Breaking + CIR). All experiments use full fine-tuning on Llama-3.1-8B with the WMDP-Cyber forget set, following the setup in \Cref{sec:understanding}.

\Cref{fig:obs_consistency_losses} reports, for each method, (left) the unlearn change ratio across principal-component indices and (right) the corresponding recovery ratio after the RTT relearning attack. Across all five methods, the change-ratio mass is concentrated in the first few PCs, and the recovery ratio is high precisely for these dominant components and decays toward the minor components. This confirms that Observations~2 and 3 are not specific to GA, but reflect a property shared by representation-level (RMU, MLP Breaking) and output-level (NPO) unlearning losses, with and without CIR-style gradient filtering. Equivalently, the dominant-component vulnerability that motivates MCU is a generic property of current LLM unlearning pipelines rather than an artifact of any particular loss.

\begin{figure}[t]
\centering
\setlength{\tabcolsep}{2pt}
\renewcommand{\arraystretch}{0.68}
\begin{tabular}{@{}c@{\hspace{4pt}}c@{}}
\textbf{\footnotesize Unlearn Change Ratio} & \textbf{\footnotesize Recovery Ratio} \\[2pt]
\multicolumn{2}{c}{\footnotesize\textit{MLP Breaking + CIR}} \\
\includegraphics[width=0.44\linewidth]{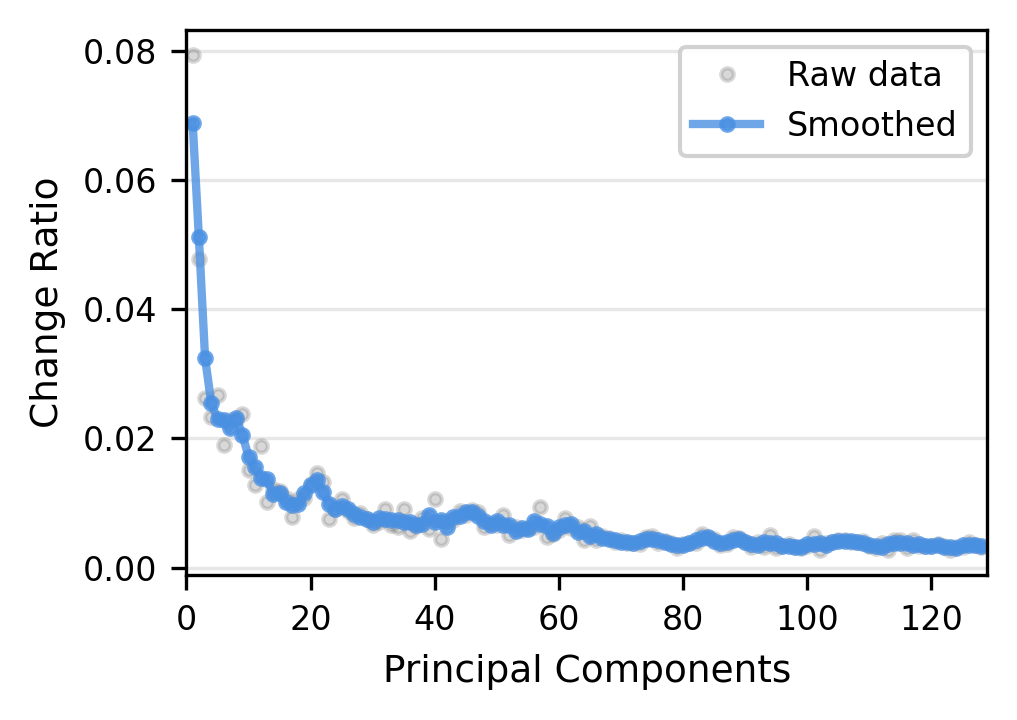} &
\includegraphics[width=0.44\linewidth]{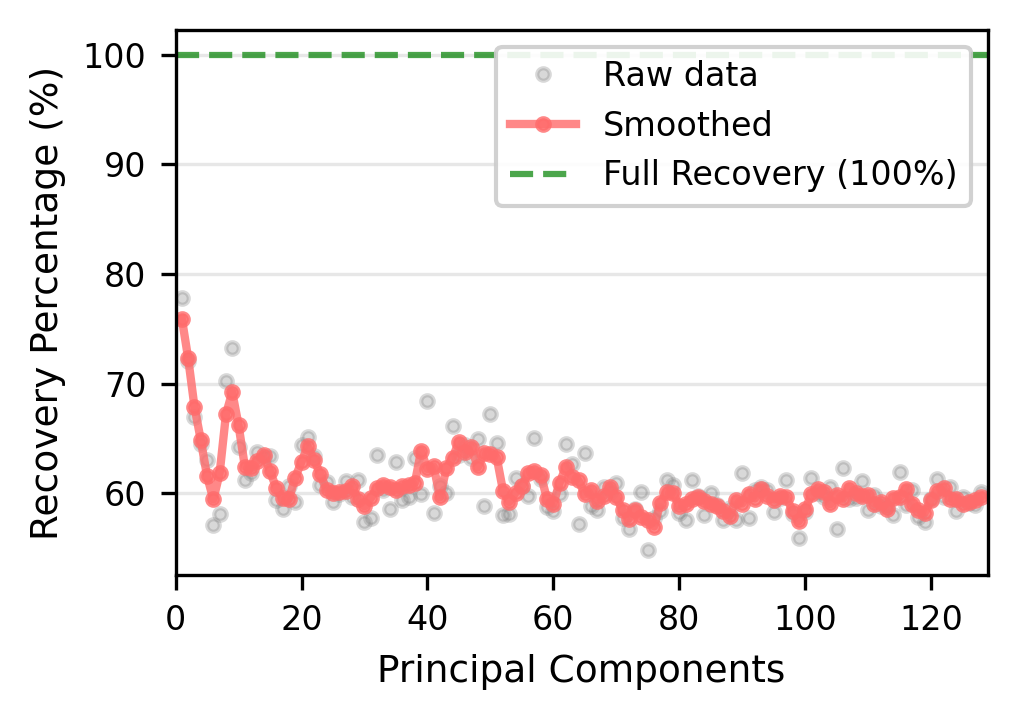} \\[1pt]
\multicolumn{2}{c}{\footnotesize\textit{RMU + CIR}} \\
\includegraphics[width=0.44\linewidth]{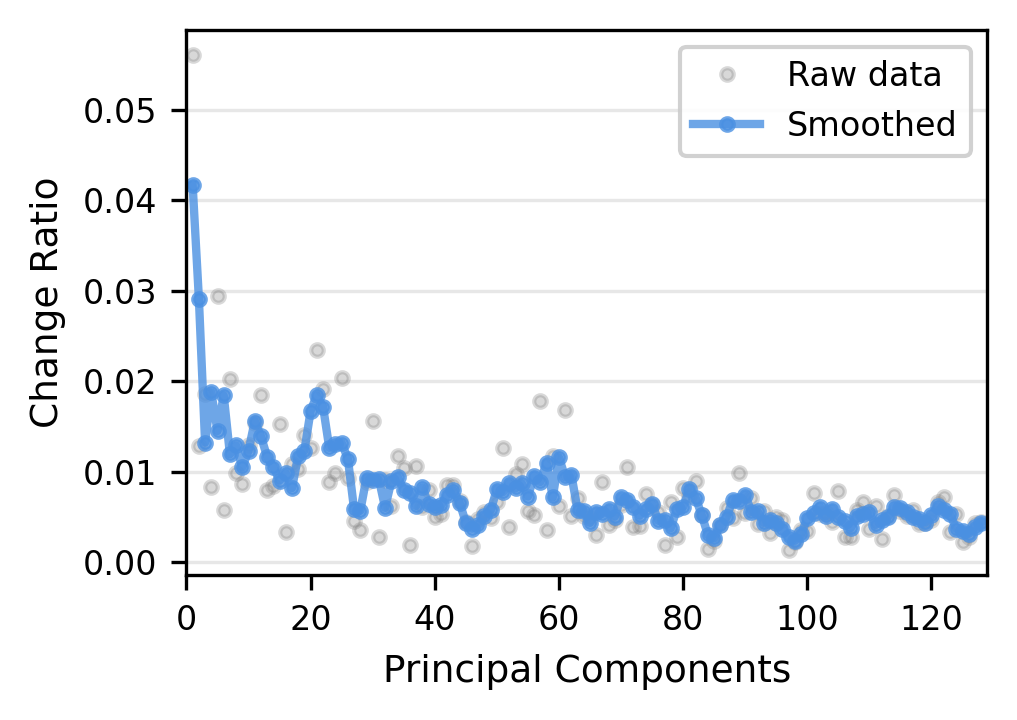} &
\includegraphics[width=0.44\linewidth]{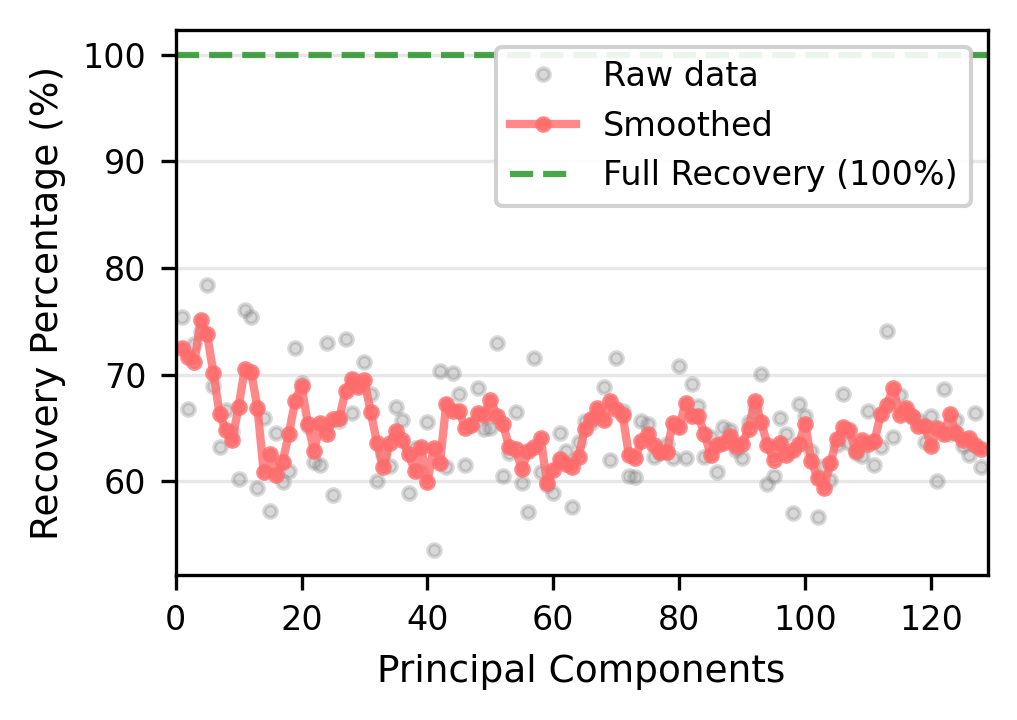} \\[1pt]
\multicolumn{2}{c}{\footnotesize\textit{NPO}} \\
\includegraphics[width=0.44\linewidth]{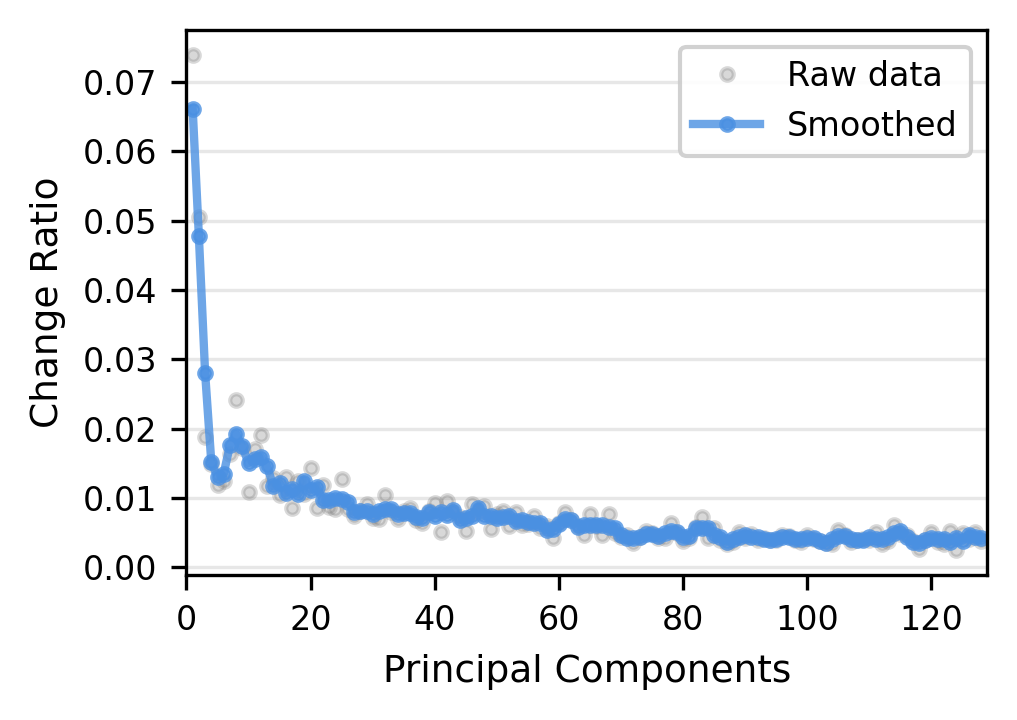} &
\includegraphics[width=0.44\linewidth]{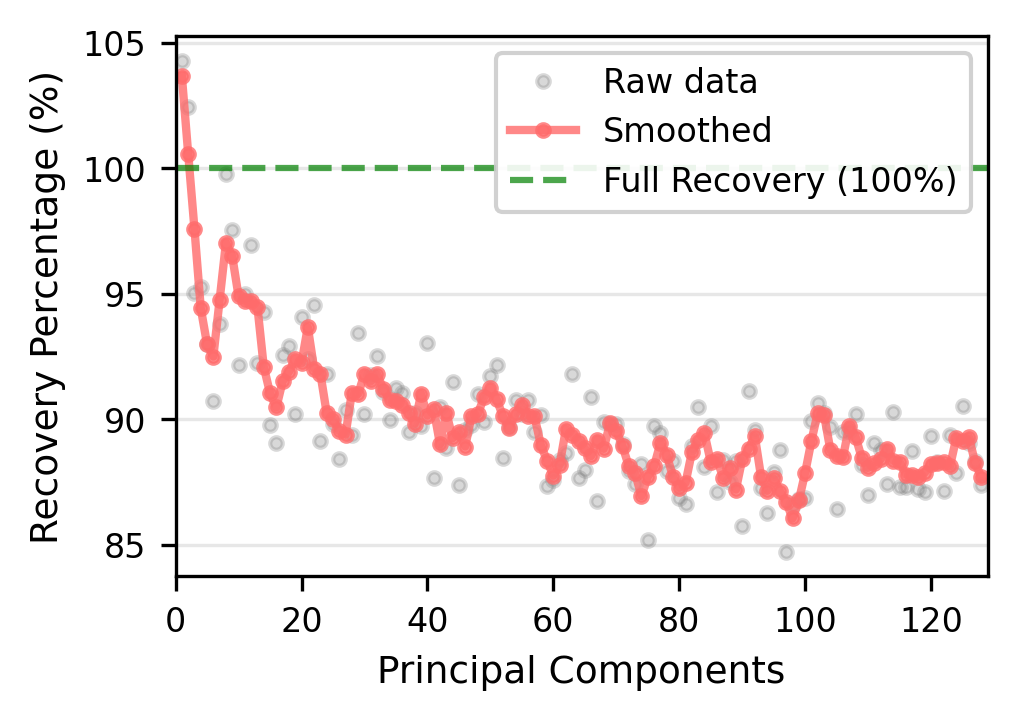} \\
\end{tabular}
\caption{Consistency of Observations~2--3 across unlearning losses on WMDP-Cyber (Llama-3.1-8B, full fine-tuning), part 1. For each method, the left plot shows the per-PC change ratio induced by unlearning and the right plot shows the per-PC recovery ratio after the RTT relearning attack.}
\label{fig:obs_consistency_losses}
\end{figure}

\begin{figure}[t]
\centering
\setlength{\tabcolsep}{2pt}
\renewcommand{\arraystretch}{0.68}
\begin{tabular}{@{}c@{\hspace{4pt}}c@{}}
\textbf{\footnotesize Unlearn Change Ratio} & \textbf{\footnotesize Recovery Ratio} \\[2pt]
\multicolumn{2}{c}{\footnotesize\textit{MLP Breaking}} \\
\includegraphics[width=0.44\linewidth]{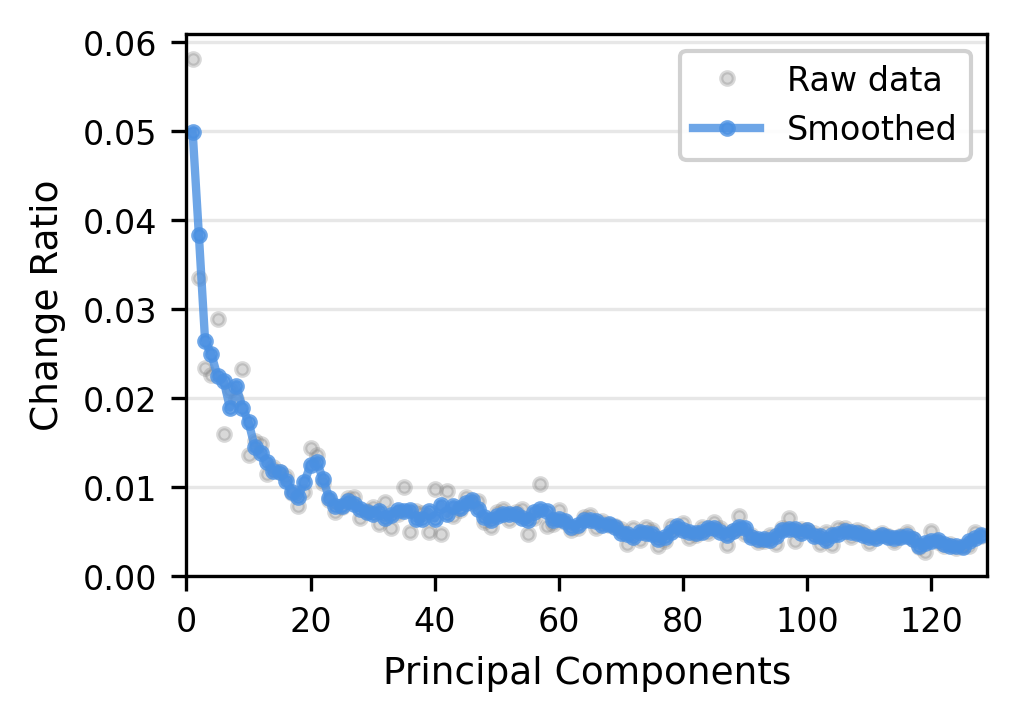} &
\includegraphics[width=0.44\linewidth]{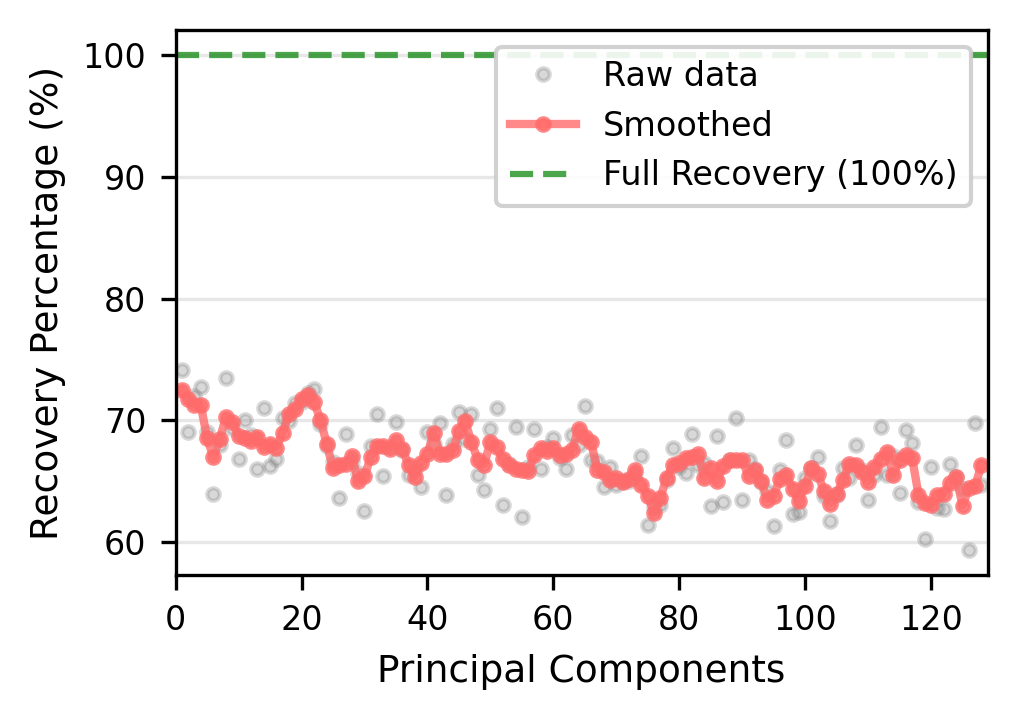} \\[1pt]
\multicolumn{2}{c}{\footnotesize\textit{RMU}} \\
\includegraphics[width=0.44\linewidth]{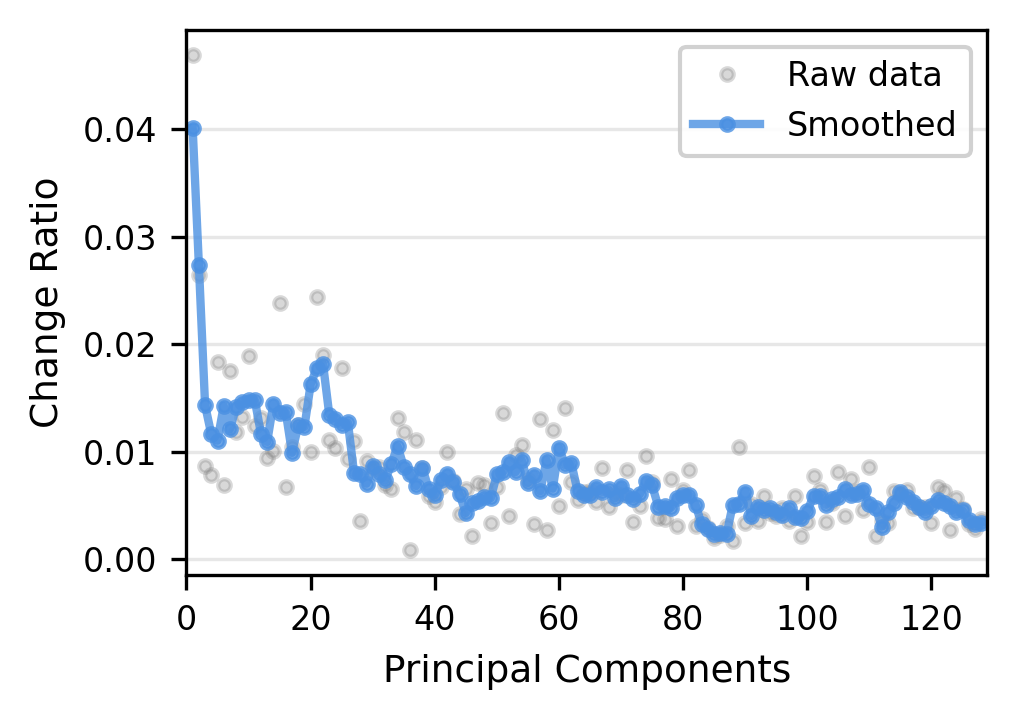} &
\includegraphics[width=0.44\linewidth]{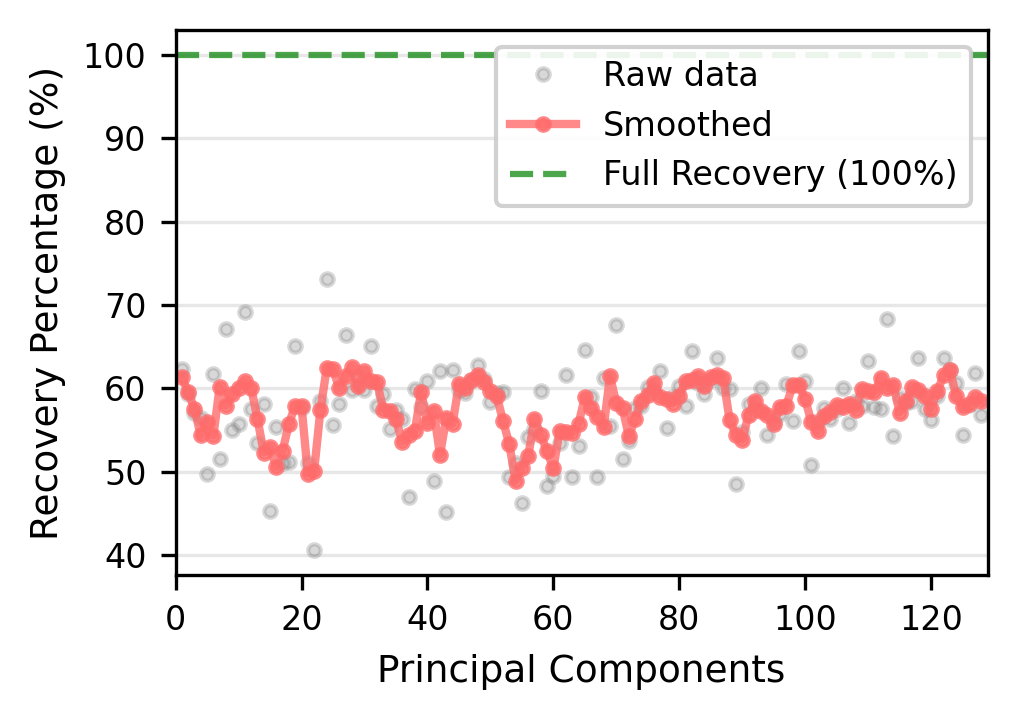} \\
\end{tabular}
\caption{Consistency of Observations~2--3 across unlearning losses on WMDP-Cyber (Llama-3.1-8B, full fine-tuning), part 2. The dominant components consistently absorb most of the unlearning change and exhibit the highest recovery, regardless of the specific unlearning loss.}
\label{fig:obs_consistency_losses_cont}
\end{figure}

\section{Robustness of Observations to Forget-Set Size}
\label{app:obs_forget_size}

A natural concern is whether Observations~1--3 are artifacts of a particular forget-set size, or could be driven by the size of the sample used to fit PCA. To rule this out, we run two complementary experiments on WMDP-Cyber with GA and Llama-3.1-8B (full fine-tuning), varying the forget-set fraction in $\{25\%, 50\%, 75\%, 100\%\}$.

\paragraph{Experiment A: fixed model, varying PCA-fit subset.}
We keep the unlearned and relearned models fixed (trained on the full forget set) and only vary the size of the subset used to fit PCA. This isolates whether Observation~1 (variance concentration in the dominant components) is a consequence of using too few samples for PCA. \Cref{fig:forget_size_A_explained_variance} shows the explained-variance curves: the spectrum is essentially indistinguishable across $25\%$, $50\%$, $75\%$, and $100\%$ subsets, so variance concentration is not a sample-size artifact. Correspondingly, \Cref{fig:forget_size_A} reports the change-ratio and recovery-ratio plots for the four subset sizes; both retain the same dominant-component-heavy pattern.

\begin{figure}[t]
\centering
\setlength{\tabcolsep}{2pt}
\renewcommand{\arraystretch}{0.68}
\begin{tabular}{@{}c@{\hspace{3pt}}c@{\hspace{3pt}}c@{}}
\textbf{\footnotesize down\_proj} & \textbf{\footnotesize gate\_proj} & \textbf{\footnotesize up\_proj} \\[2pt]
\multicolumn{3}{c}{\footnotesize\textit{$f=25\%$}} \\
\includegraphics[width=0.31\linewidth]{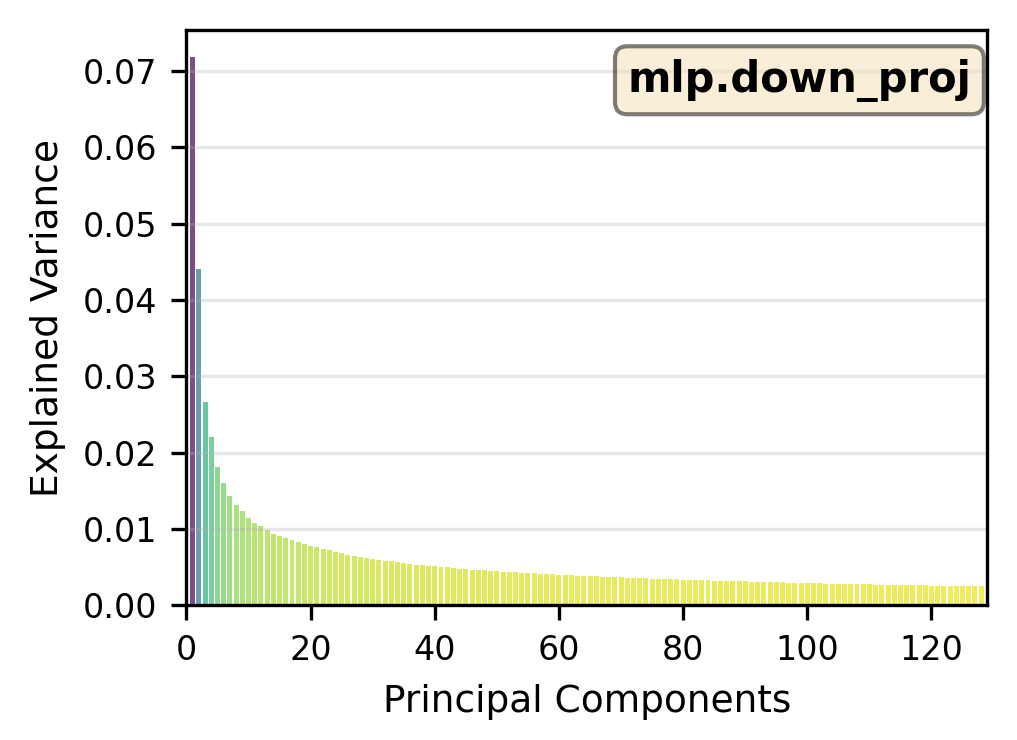} &
\includegraphics[width=0.31\linewidth]{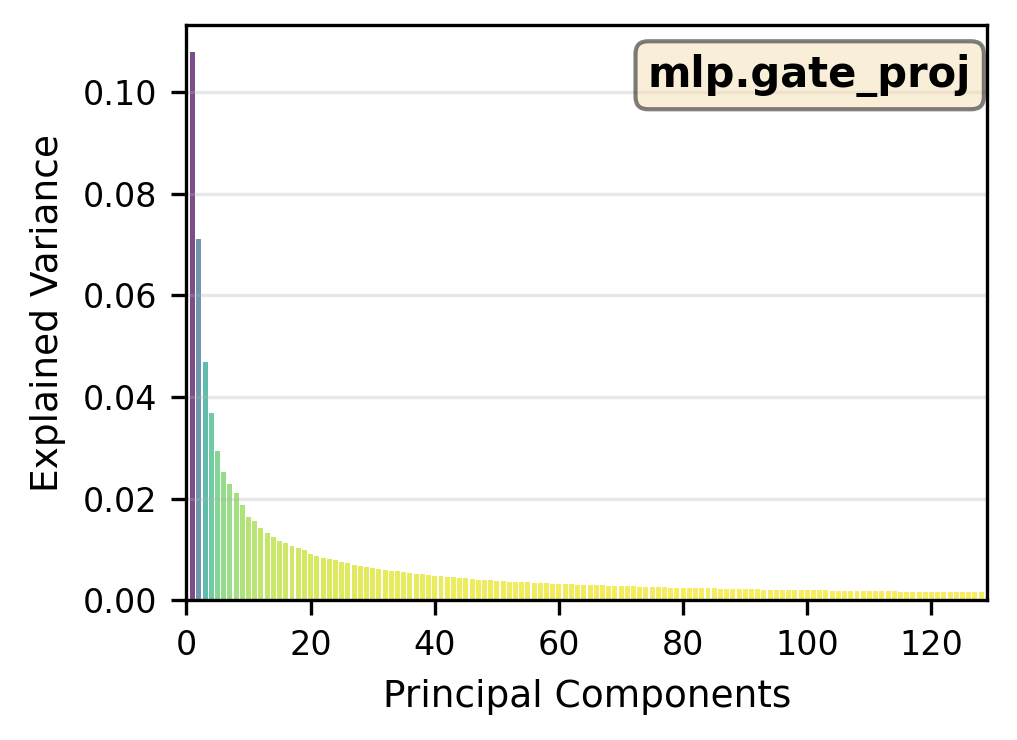} &
\includegraphics[width=0.31\linewidth]{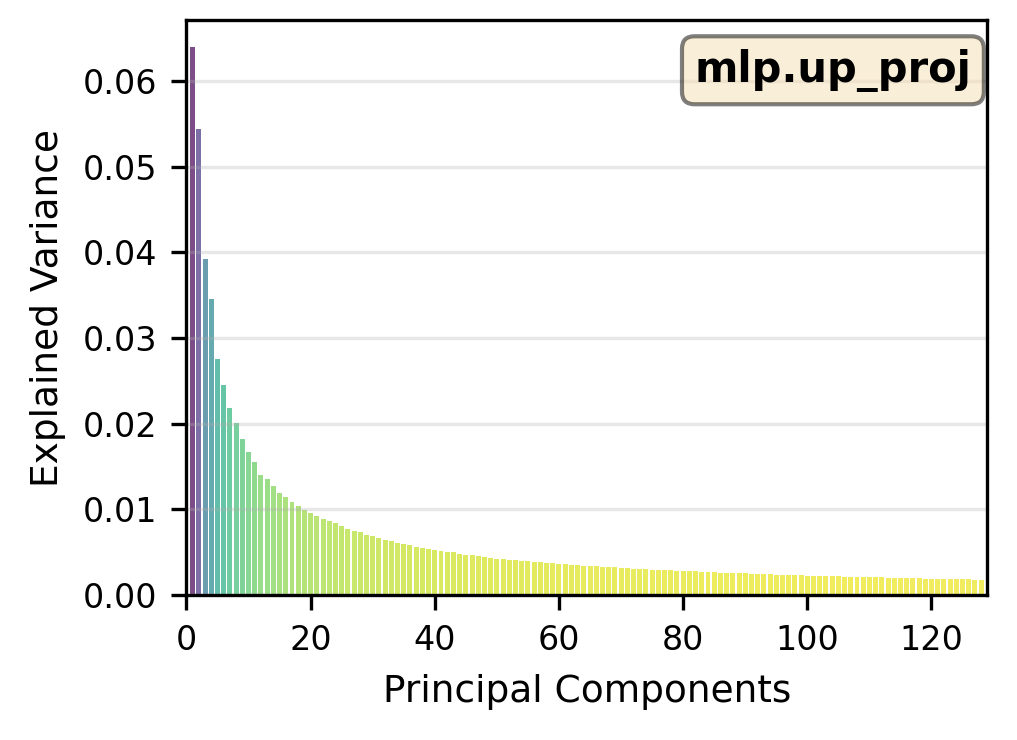} \\[1pt]
\multicolumn{3}{c}{\footnotesize\textit{$f=50\%$}} \\
\includegraphics[width=0.31\linewidth]{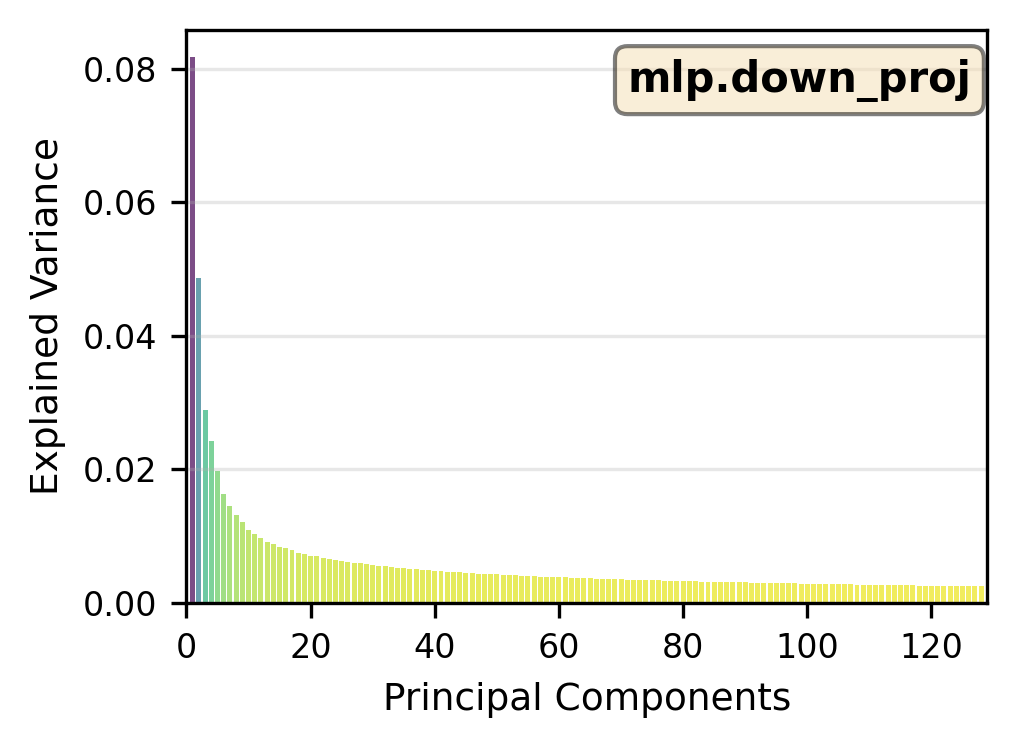} &
\includegraphics[width=0.31\linewidth]{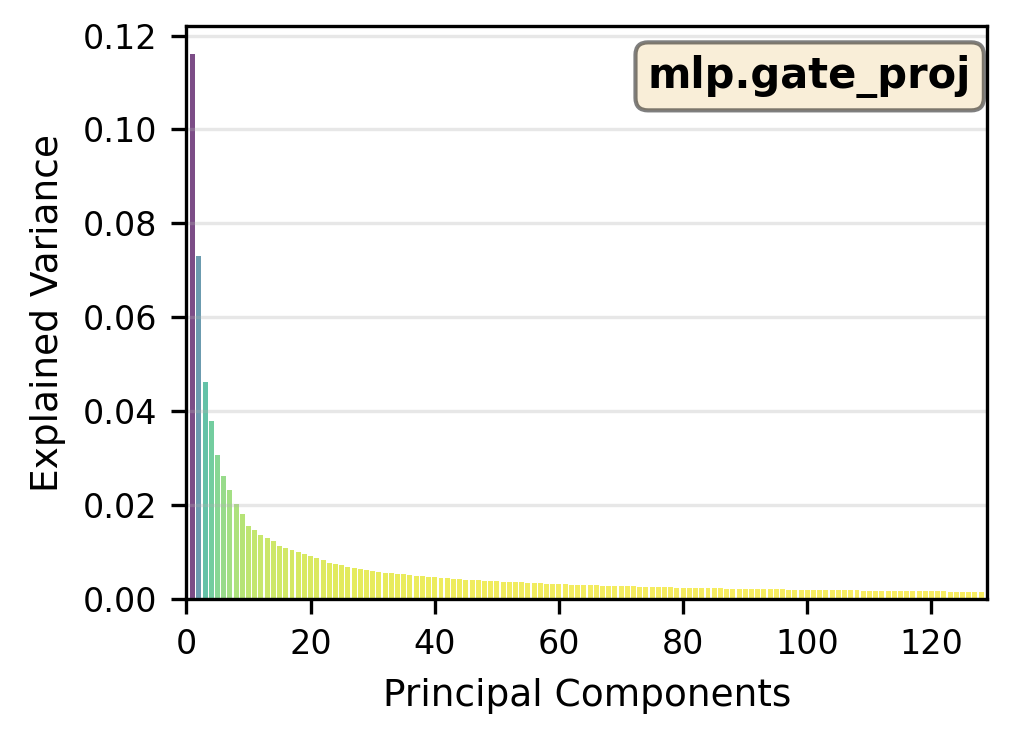} &
\includegraphics[width=0.31\linewidth]{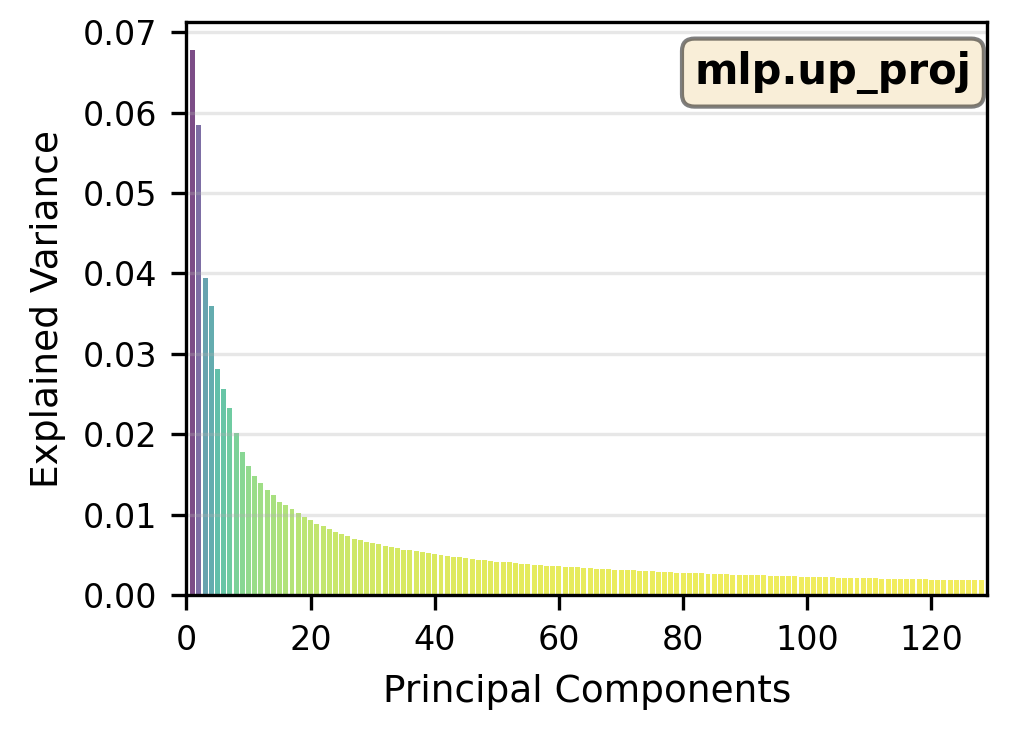} \\[1pt]
\multicolumn{3}{c}{\footnotesize\textit{$f=75\%$}} \\
\includegraphics[width=0.31\linewidth]{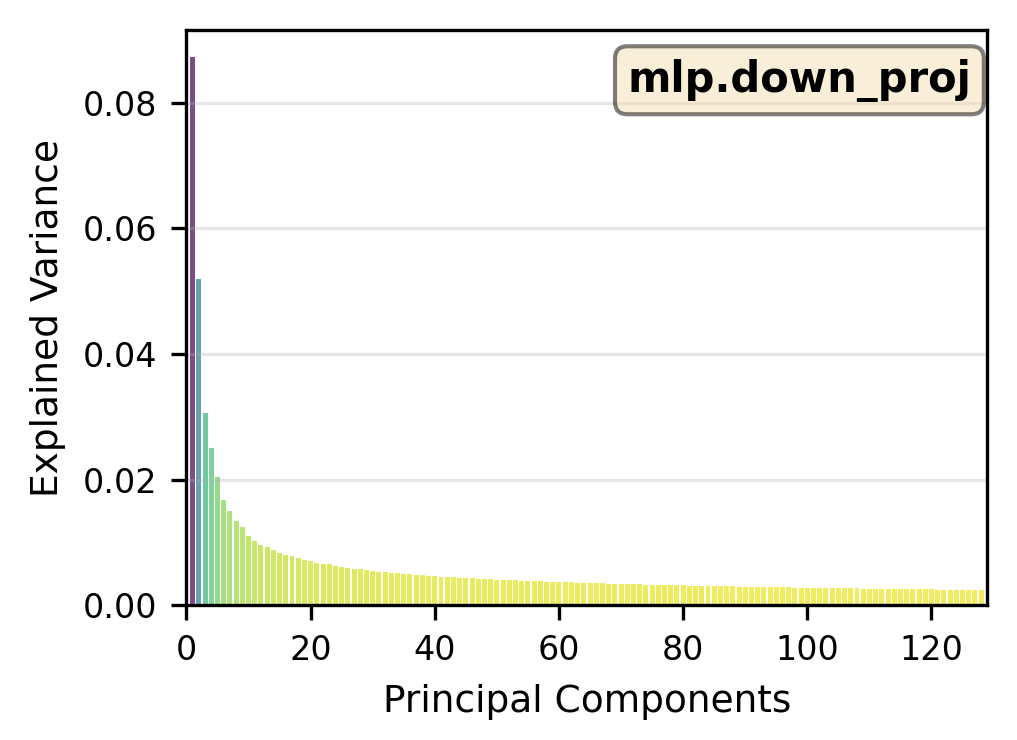} &
\includegraphics[width=0.31\linewidth]{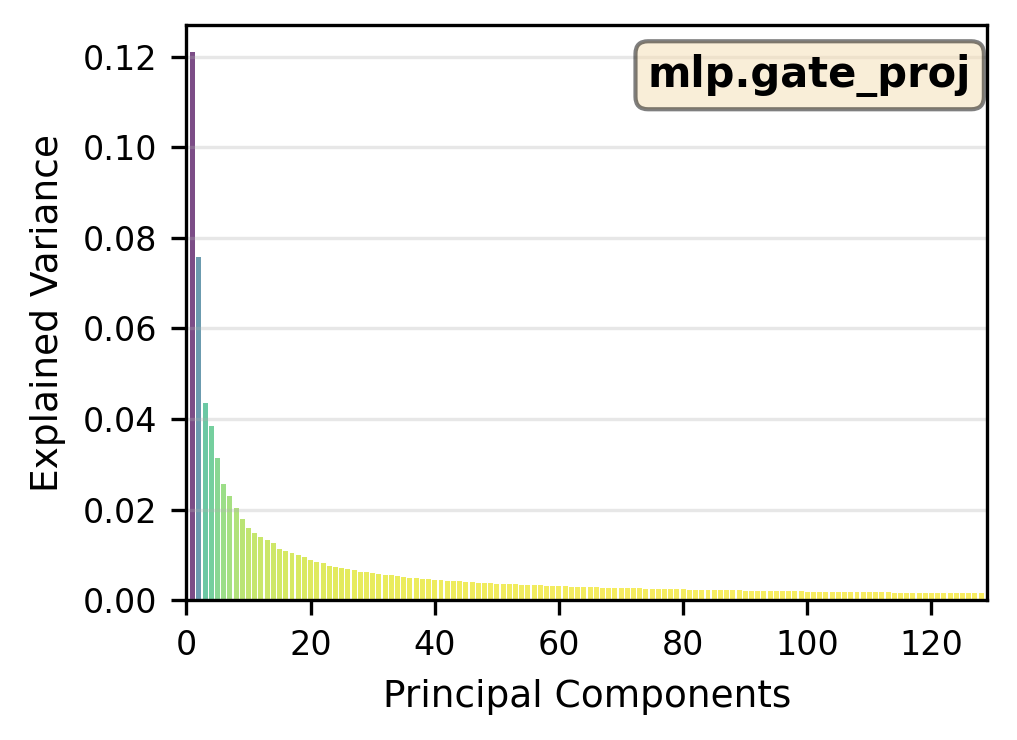} &
\includegraphics[width=0.31\linewidth]{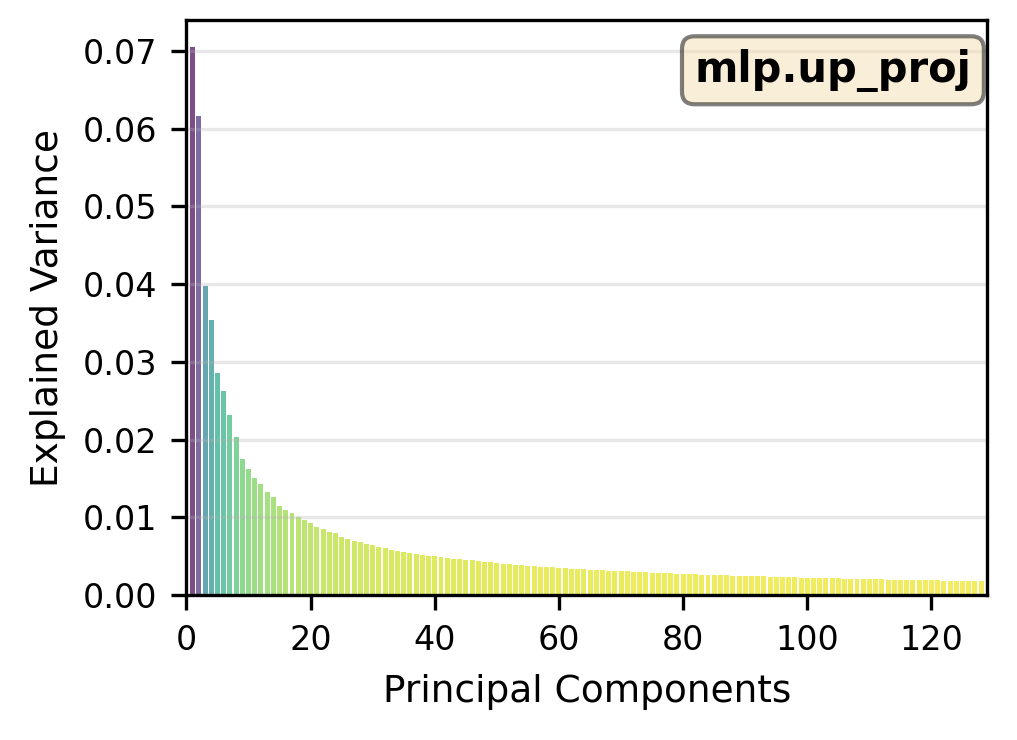} \\[1pt]
\multicolumn{3}{c}{\footnotesize\textit{$f=100\%$}} \\
\includegraphics[width=0.31\linewidth]{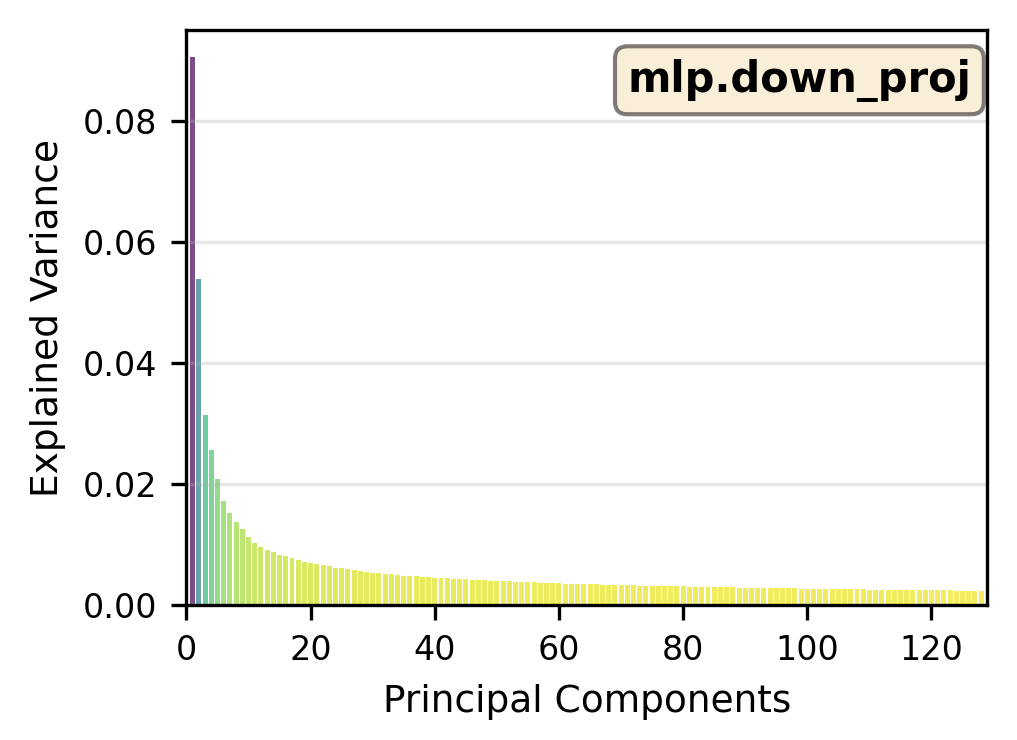} &
\includegraphics[width=0.31\linewidth]{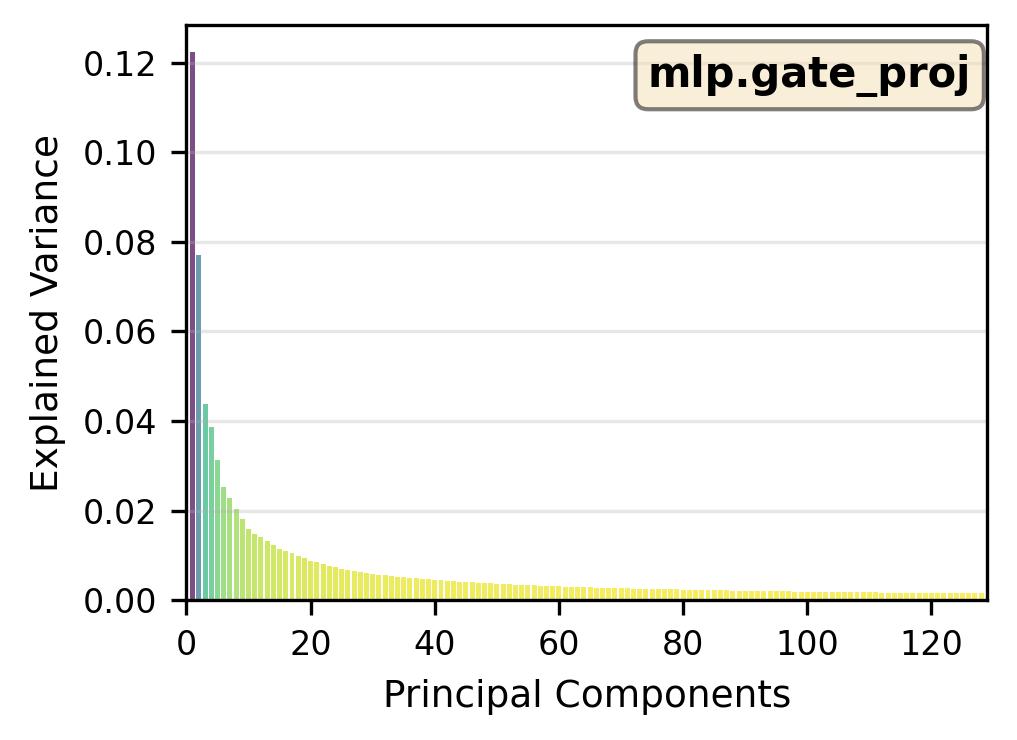} &
\includegraphics[width=0.31\linewidth]{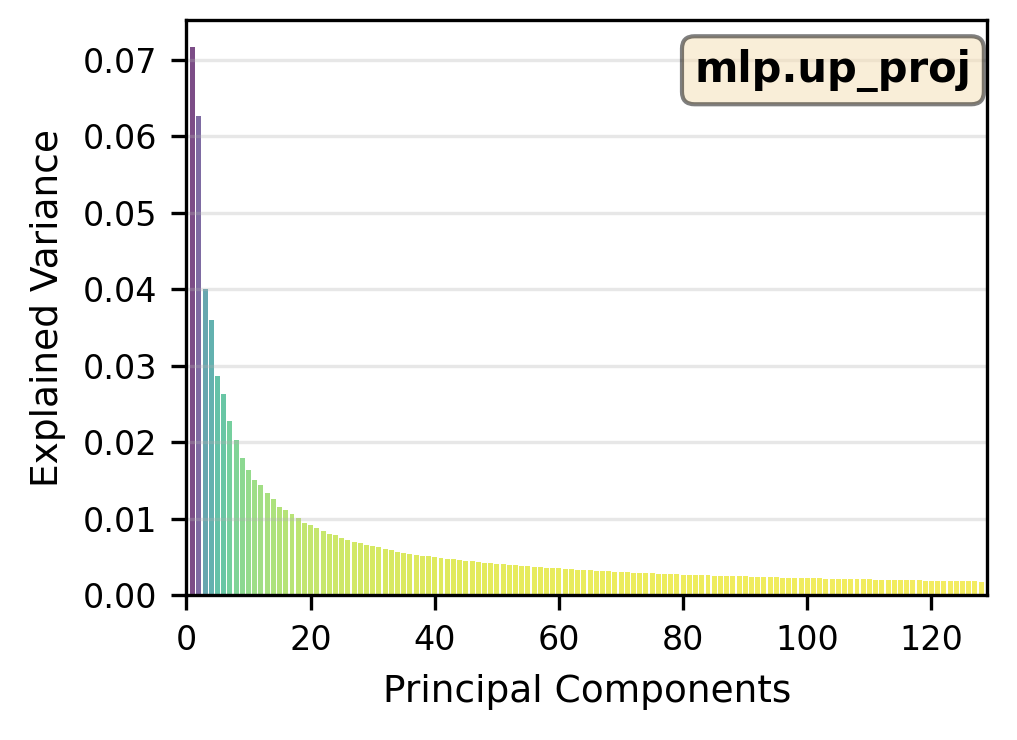} \\
\end{tabular}
\caption{Experiment A: explained variance under varying PCA-fit subset sizes (fixed unlearned/relearned model). The spectrum is essentially identical across $25$--$100\%$ subsets, so variance concentration is not driven by sample size.}
\label{fig:forget_size_A_explained_variance}
\end{figure}

\begin{figure}[t]
\centering
\setlength{\tabcolsep}{2pt}
\renewcommand{\arraystretch}{0.68}
\begin{tabular}{@{}c@{\hspace{4pt}}c@{}}
\textbf{\footnotesize Unlearn Change Ratio} & \textbf{\footnotesize Recovery Ratio} \\[2pt]
\multicolumn{2}{c}{\footnotesize\textit{$f=25\%$}} \\
\includegraphics[width=0.44\linewidth]{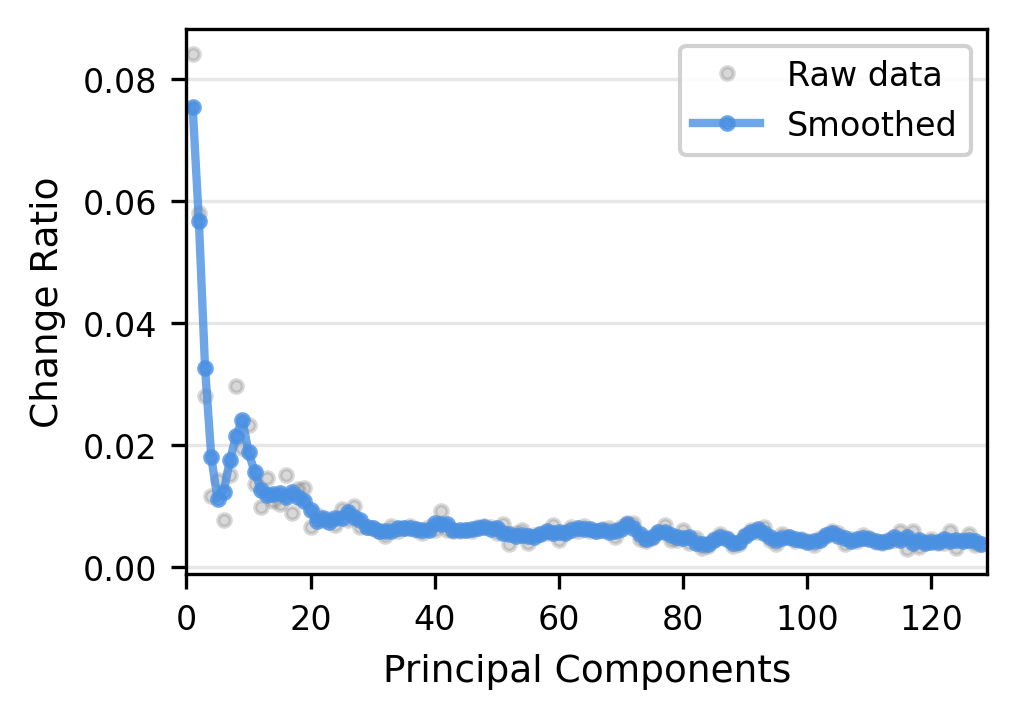} &
\includegraphics[width=0.44\linewidth]{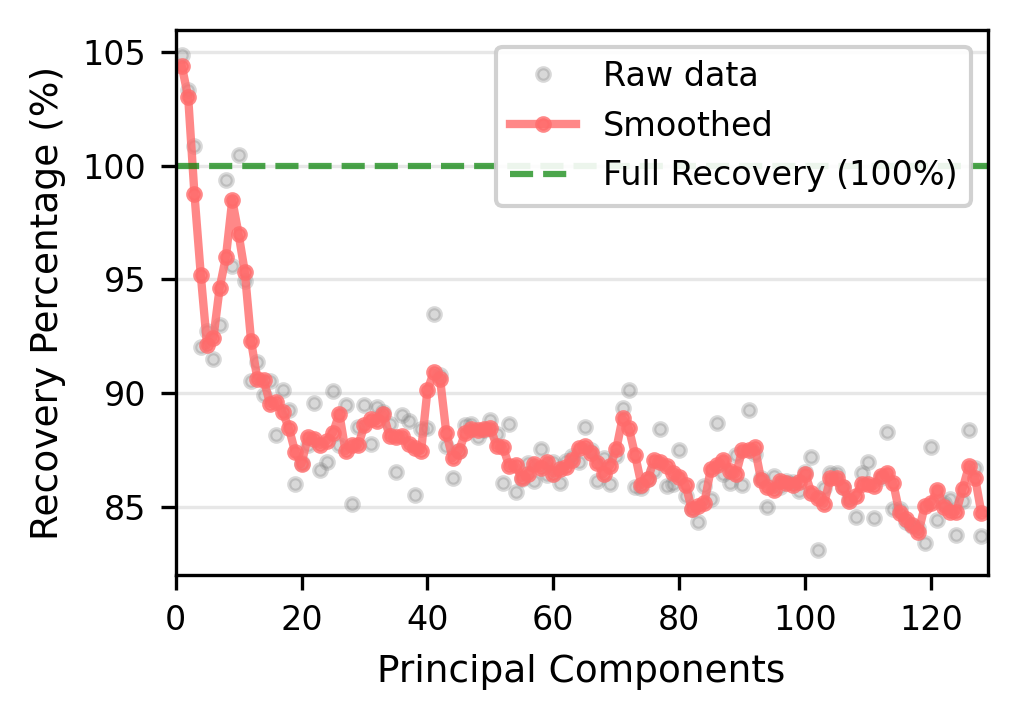} \\[1pt]
\multicolumn{2}{c}{\footnotesize\textit{$f=50\%$}} \\
\includegraphics[width=0.44\linewidth]{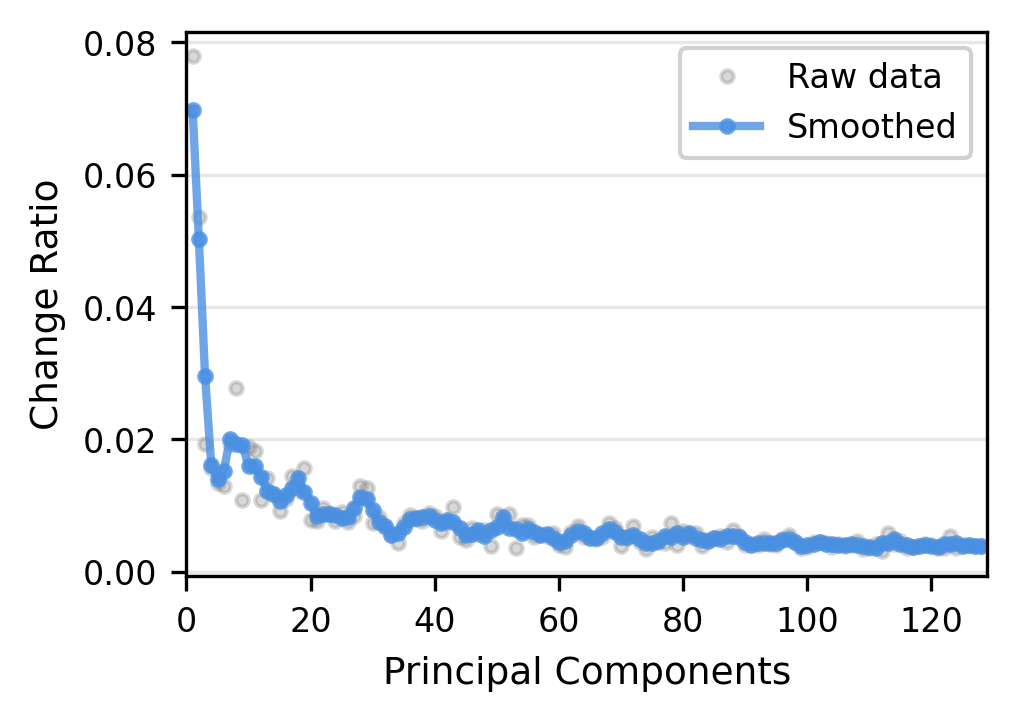} &
\includegraphics[width=0.44\linewidth]{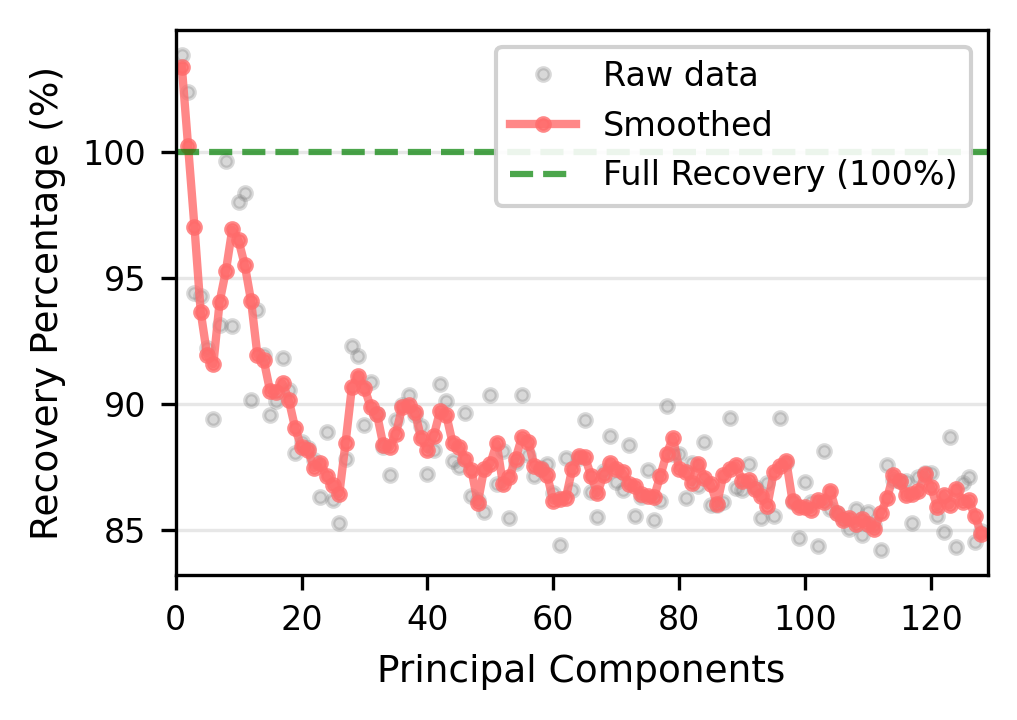} \\[1pt]
\multicolumn{2}{c}{\footnotesize\textit{$f=75\%$}} \\
\includegraphics[width=0.44\linewidth]{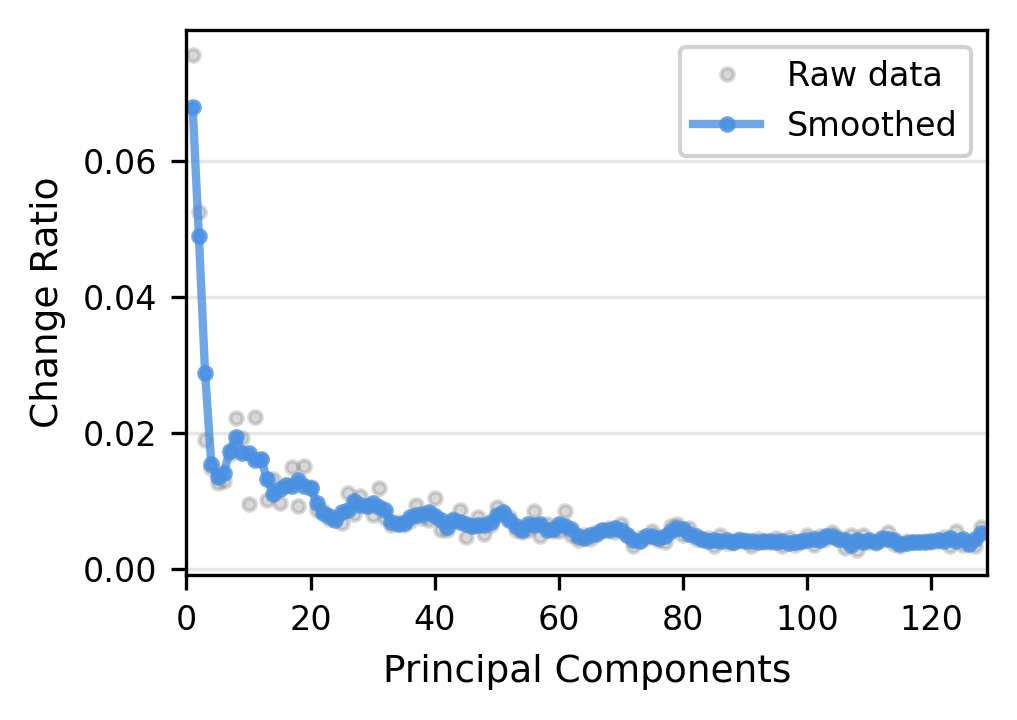} &
\includegraphics[width=0.44\linewidth]{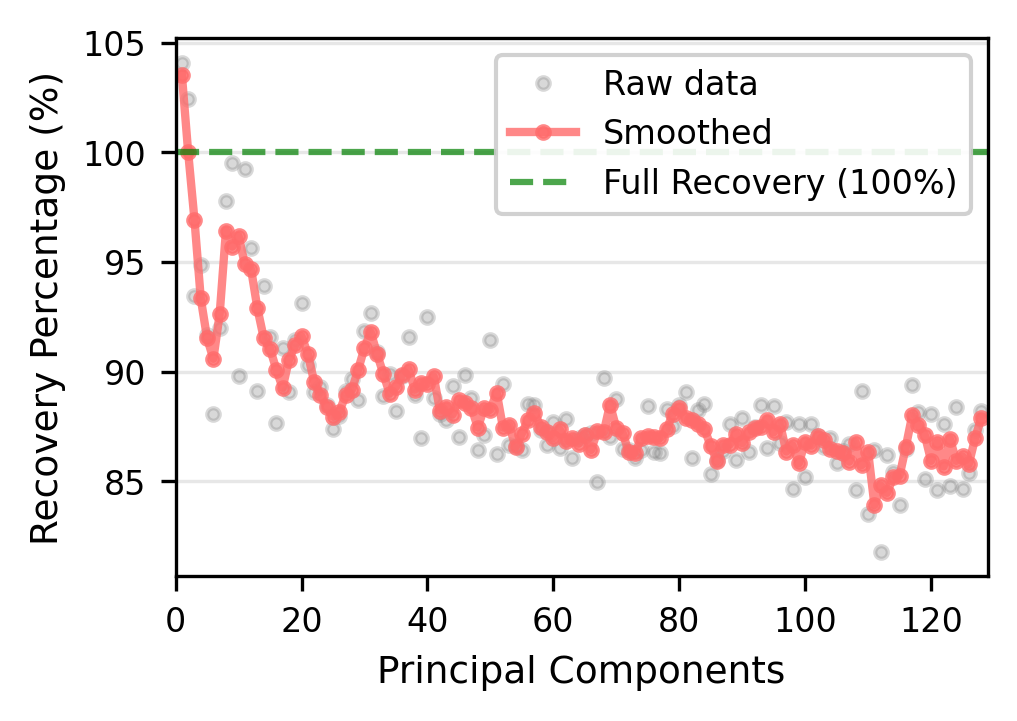} \\[1pt]
\multicolumn{2}{c}{\footnotesize\textit{$f=100\%$}} \\
\includegraphics[width=0.44\linewidth]{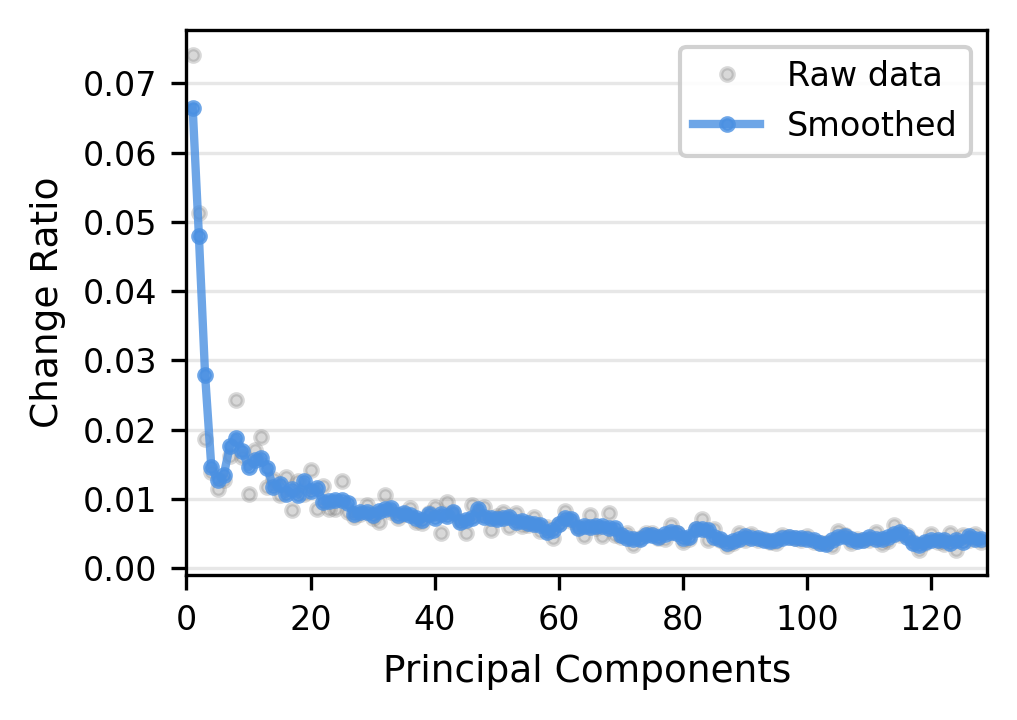} &
\includegraphics[width=0.44\linewidth]{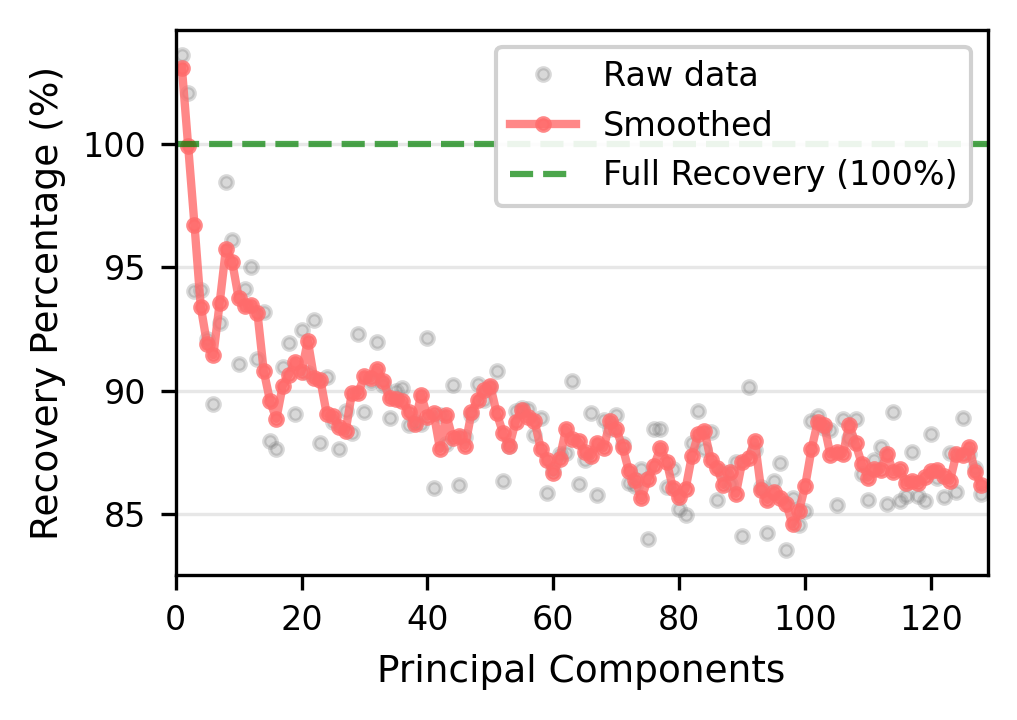} \\
\end{tabular}
\caption{Experiment A: per-PC unlearn change ratio (left) and recovery ratio (right) as a function of the PCA-fit subset size (rows: $f=25,50,75,100\%$), with the unlearned/relearned model held fixed. Observations~2 and 3 are stable across PCA-fit sizes.}
\label{fig:forget_size_A}
\end{figure}

\paragraph{Experiment B: end-to-end varying forget size.}
We then re-run the full unlearning + relearning pipeline on each forget-set fraction, so both the model and the PCA fit are tied to the same subset. This tests whether the observations hold when the unlearning procedure itself is varied. \Cref{fig:forget_size_B} shows that the change-ratio and recovery-ratio patterns remain consistent across all subset sizes: a small number of dominant PCs continue to absorb the bulk of unlearning-induced change and to be preferentially recovered after RTT.

\begin{figure}[t]
\centering
\setlength{\tabcolsep}{2pt}
\renewcommand{\arraystretch}{0.68}
\begin{tabular}{@{}c@{\hspace{4pt}}c@{}}
\textbf{\footnotesize Unlearn Change Ratio} & \textbf{\footnotesize Recovery Ratio} \\[2pt]
\multicolumn{2}{c}{\footnotesize\textit{$f=25\%$}} \\
\includegraphics[width=0.44\linewidth]{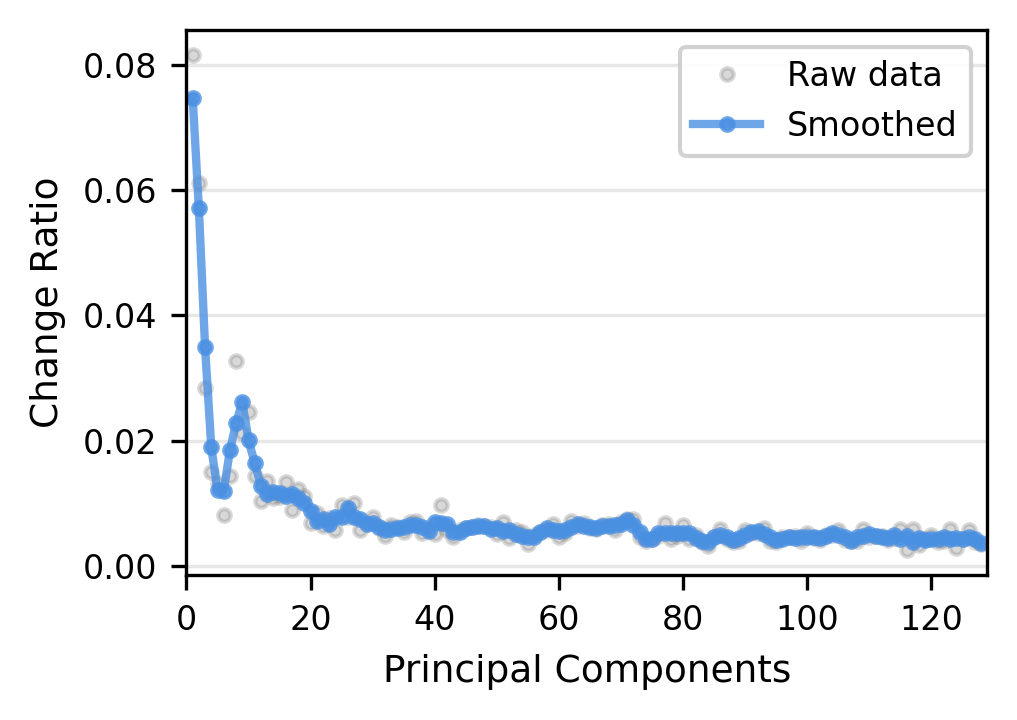} &
\includegraphics[width=0.44\linewidth]{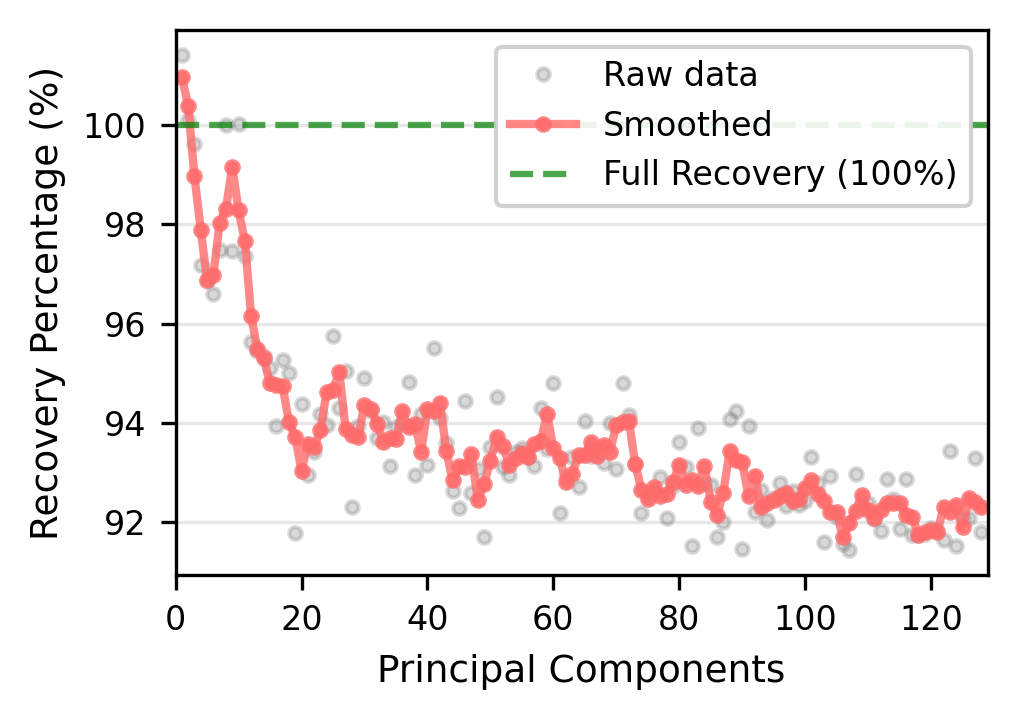} \\[1pt]
\multicolumn{2}{c}{\footnotesize\textit{$f=50\%$}} \\
\includegraphics[width=0.44\linewidth]{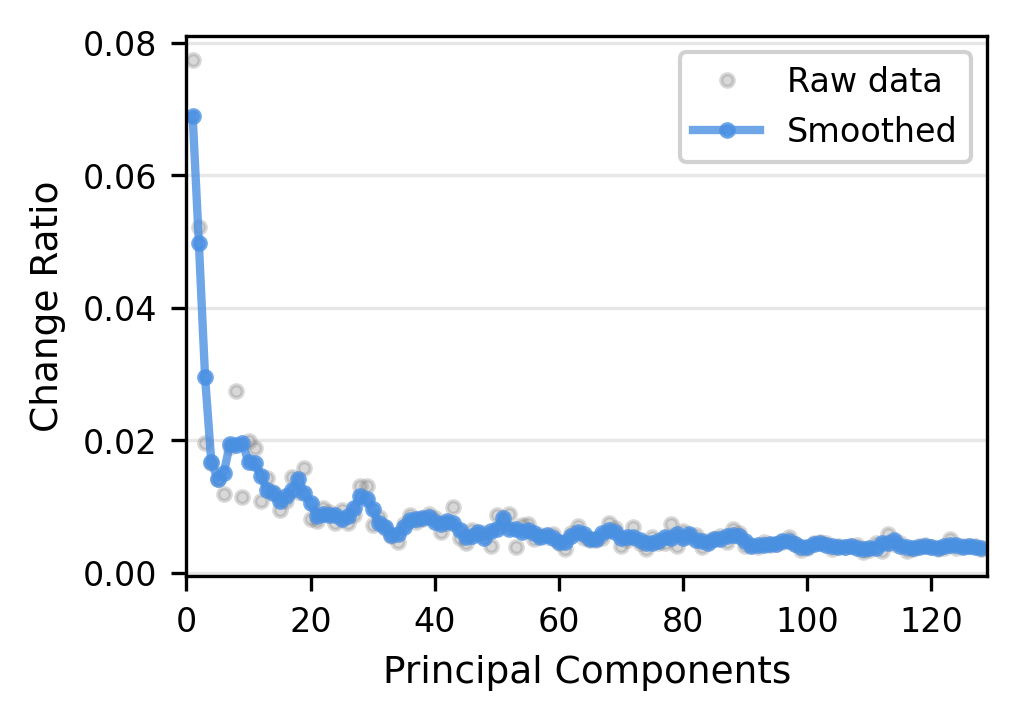} &
\includegraphics[width=0.44\linewidth]{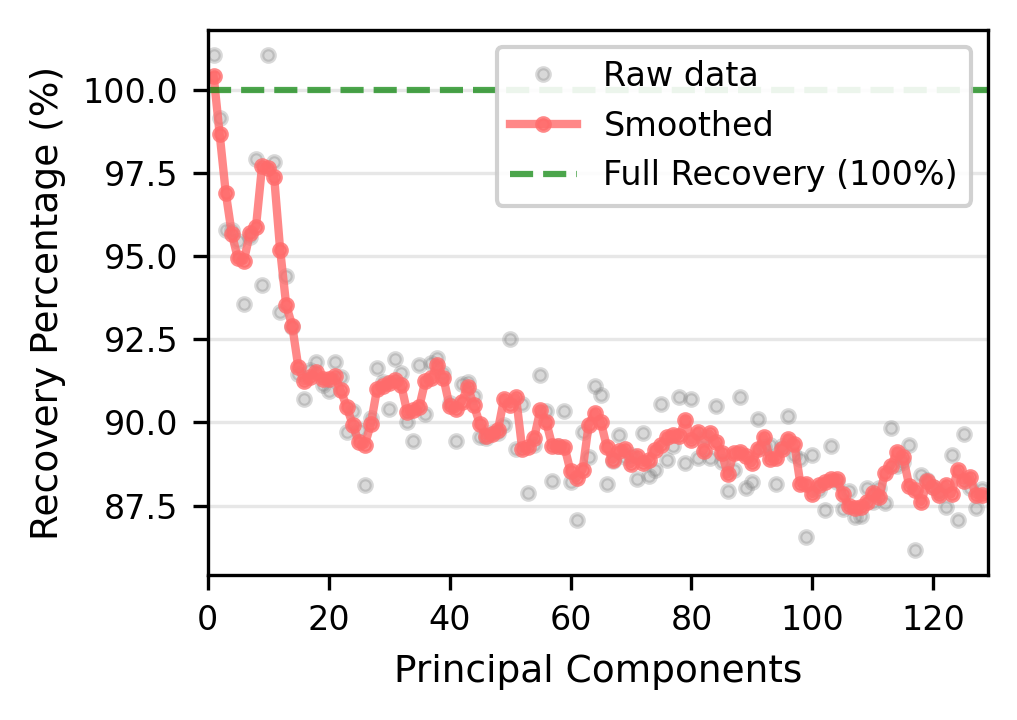} \\[1pt]
\multicolumn{2}{c}{\footnotesize\textit{$f=75\%$}} \\
\includegraphics[width=0.44\linewidth]{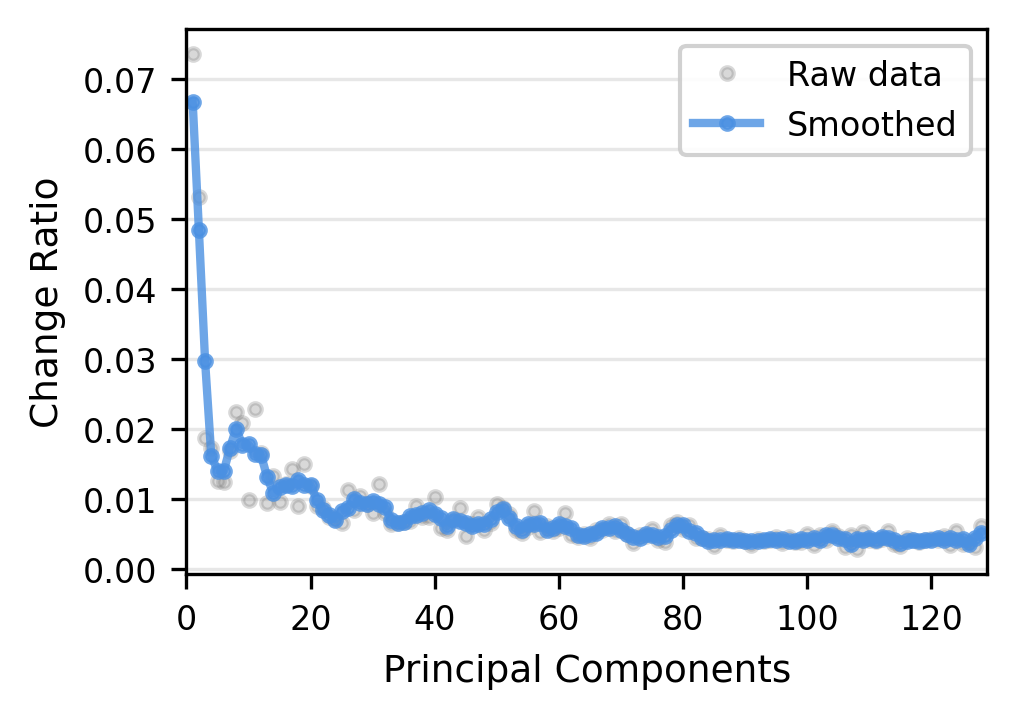} &
\includegraphics[width=0.44\linewidth]{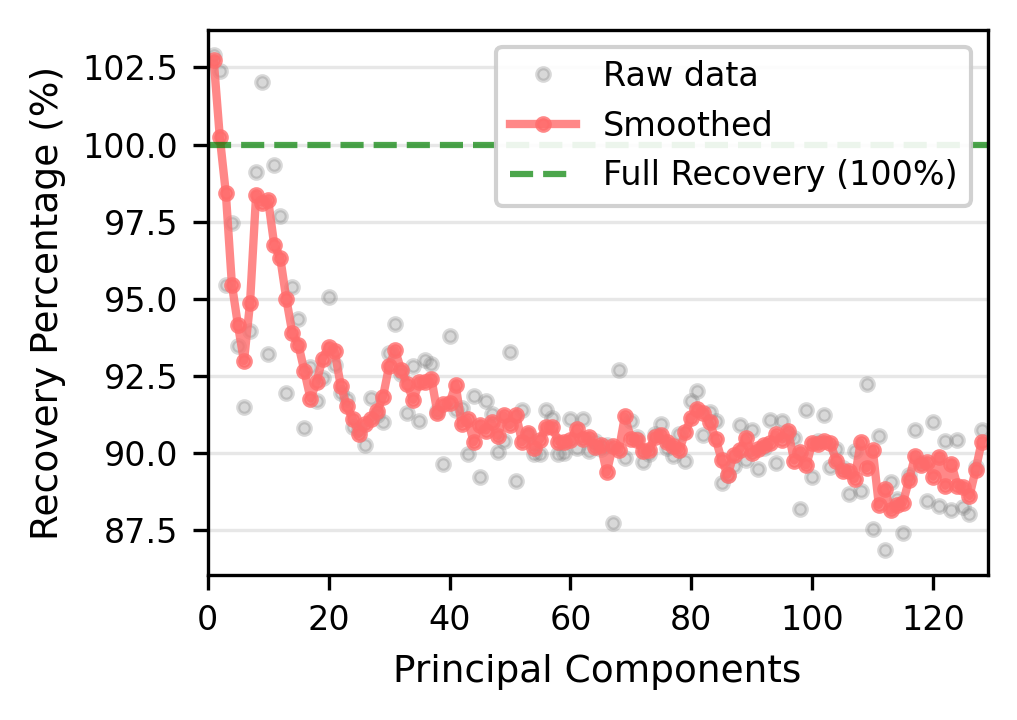} \\[1pt]
\multicolumn{2}{c}{\footnotesize\textit{$f=100\%$}} \\
\includegraphics[width=0.44\linewidth]{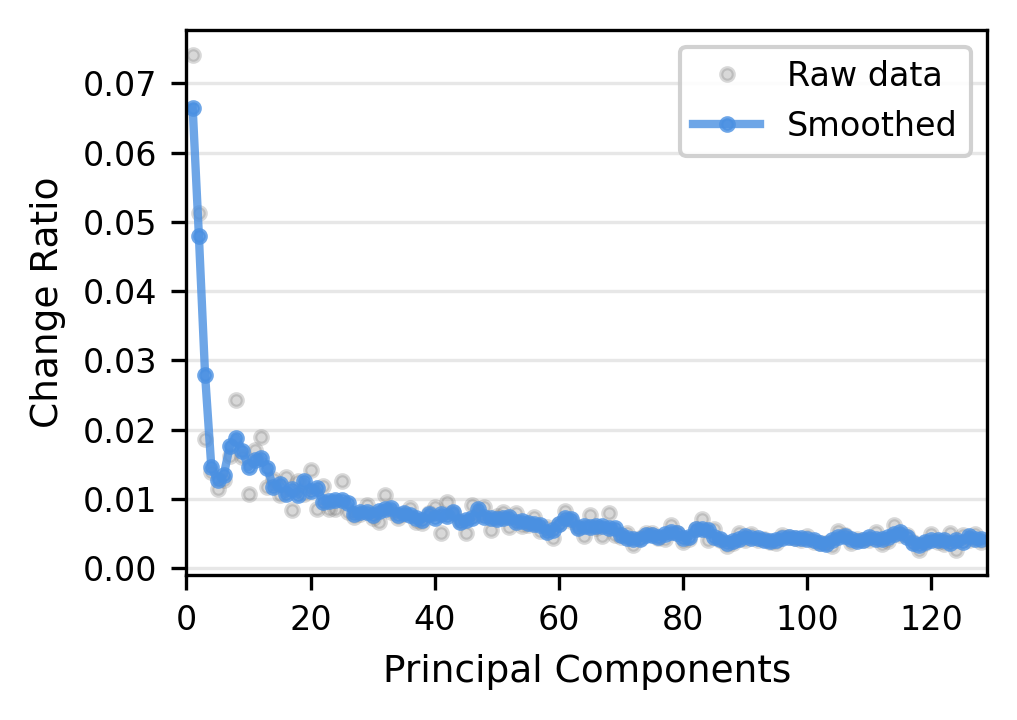} &
\includegraphics[width=0.44\linewidth]{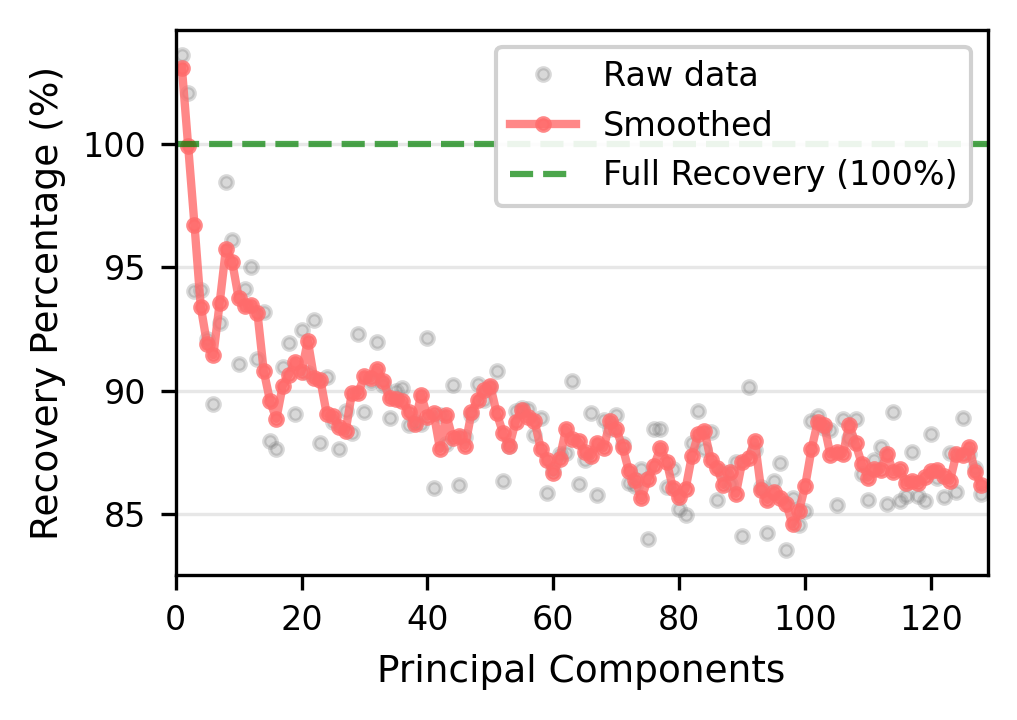} \\
\end{tabular}
\caption{Experiment B: per-PC unlearn change ratio (left) and recovery ratio (right) under end-to-end varying forget size (rows: $f=25,50,75,100\%$). The unlearning + relearning pipeline is rerun on each subset. Observations~2 and 3 hold across all forget-set sizes.}
\label{fig:forget_size_B}
\end{figure}

Together, Experiments A and B confirm that all three observations are robust to the size of the forget set, and that the dominant-component vulnerability is a structural property of LLM representations rather than a consequence of a specific sampling regime.

\section{Robustness of Observations to Parameter-Efficient Fine-Tuning}
\label{app:obs_lora}

Our main analysis in \Cref{sec:understanding} uses full fine-tuning. To rule out the possibility that Observations~2--3 are an artifact of full-parameter optimization, we additionally evaluate them under \textbf{LoRA} fine-tuning at ranks $r \in \{8, 16, 32, 64\}$, applied to four unlearning losses (GA, NPO, RMU, MLP Breaking). Observation~1 concerns the representation geometry of the pre-trained model itself and is therefore independent of the fine-tuning method, so we focus on Observations~2 and 3.

\Cref{fig:lora_graddiff,fig:lora_npo,fig:lora_rmu,fig:lora_mlpconfuse} report, for each unlearning loss, the per-PC unlearn change ratio (left) and recovery ratio (right) under Full FT and the four LoRA ranks. In every row, the dominant components again absorb the majority of the unlearning-induced change and exhibit the highest recovery, matching the full fine-tuning pattern in \Cref{fig:pca_analysis}. This holds uniformly across the four unlearning losses and the four LoRA ranks, confirming that the dominant-component vulnerability is a property of LLM representation geometry rather than of the specific optimization regime, and that MCU's design (targeting the minor-component subspace) is therefore equally motivated for LoRA-based unlearning pipelines.

\begin{figure}[t]
\centering
\setlength{\tabcolsep}{2pt}
\renewcommand{\arraystretch}{0.68}
\begin{tabular}{@{}c@{\hspace{4pt}}c@{}}
\textbf{\footnotesize Unlearn Change Ratio} & \textbf{\footnotesize Recovery Ratio} \\[2pt]
\multicolumn{2}{c}{\footnotesize\textit{Full FT}} \\
\includegraphics[width=0.385\linewidth]{figures/unlearn_change_ratio_averaged.pdf} &
\includegraphics[width=0.385\linewidth]{figures/recovery_ratio_averaged.pdf} \\[1pt]
\multicolumn{2}{c}{\footnotesize\textit{$r=8$}} \\
\includegraphics[width=0.385\linewidth]{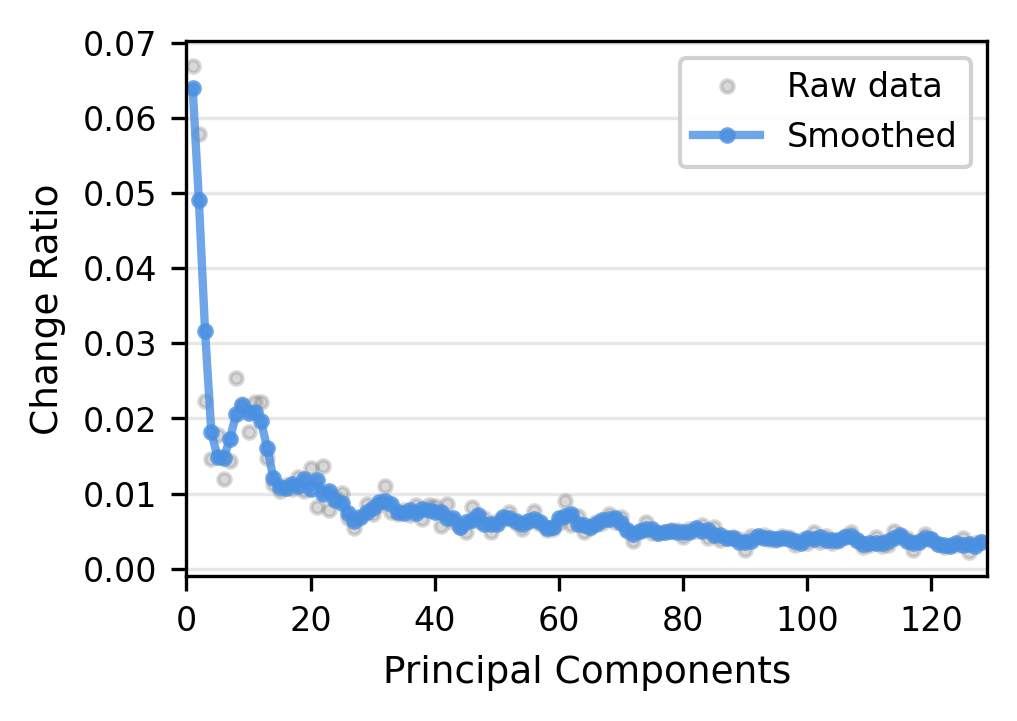}  &
\includegraphics[width=0.385\linewidth]{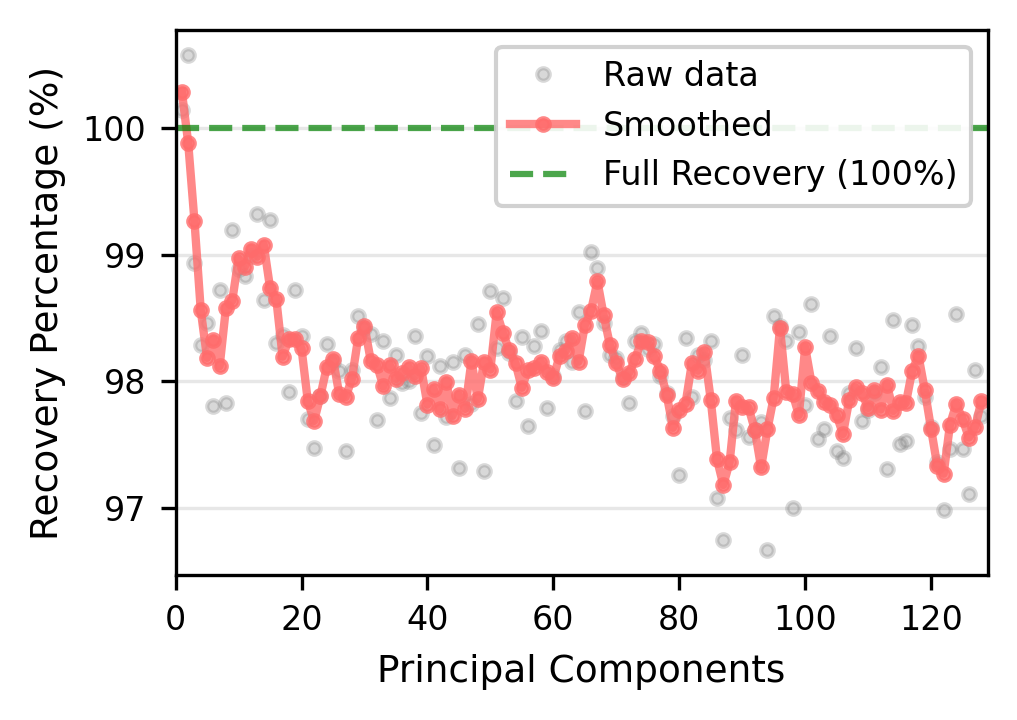}  \\[1pt]
\multicolumn{2}{c}{\footnotesize\textit{$r=16$}} \\
\includegraphics[width=0.385\linewidth]{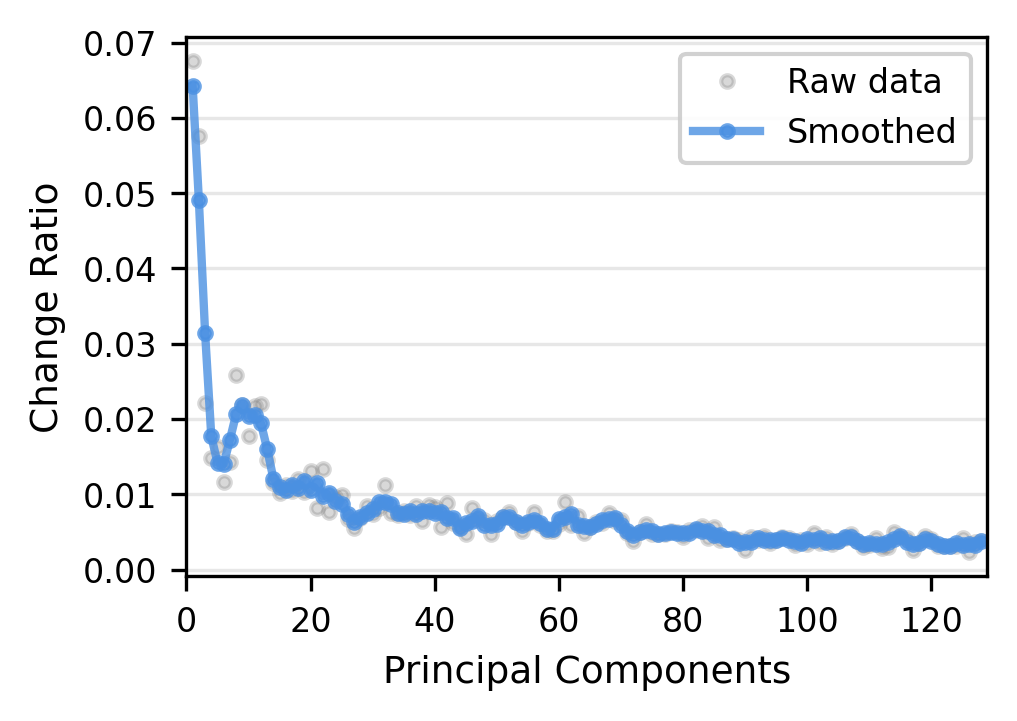} &
\includegraphics[width=0.385\linewidth]{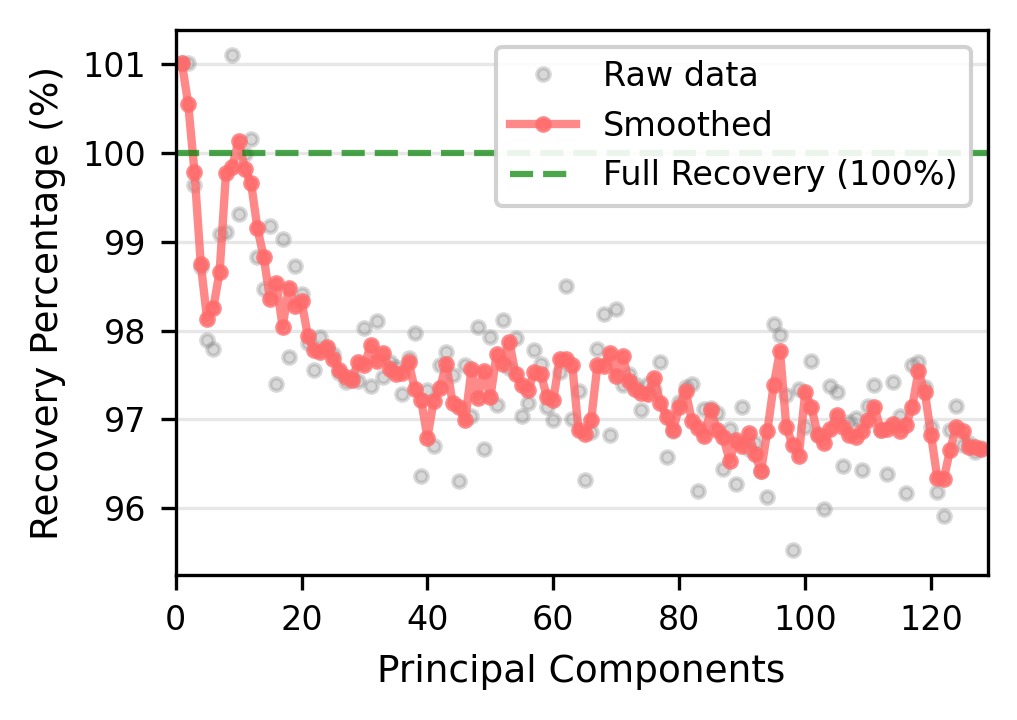} \\[1pt]
\multicolumn{2}{c}{\footnotesize\textit{$r=32$}} \\
\includegraphics[width=0.385\linewidth]{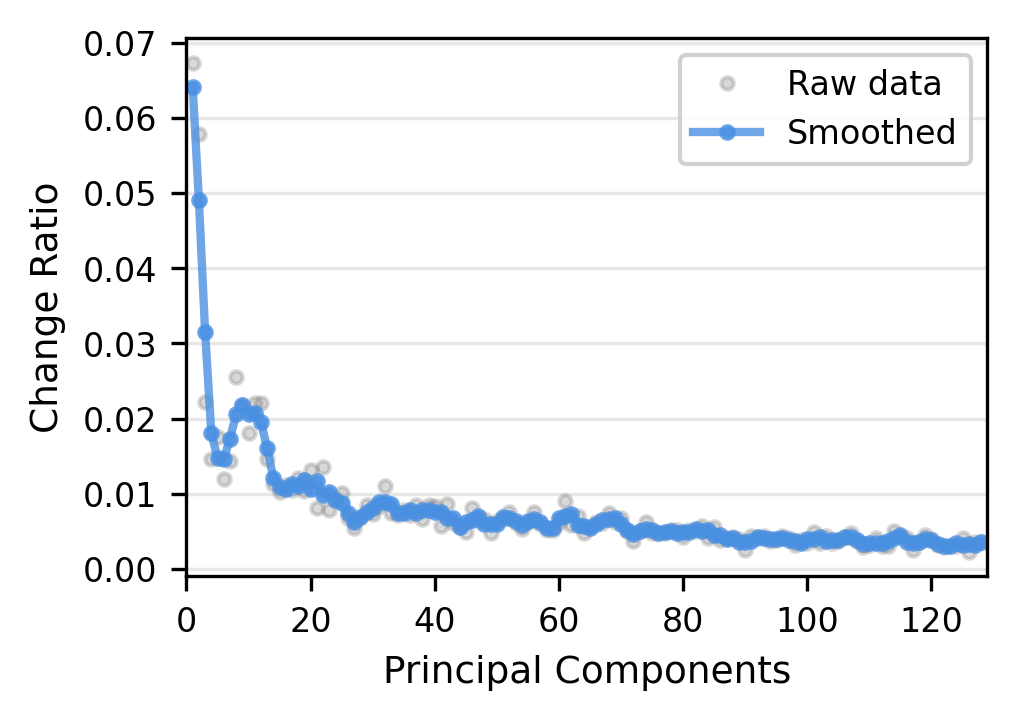} &
\includegraphics[width=0.385\linewidth]{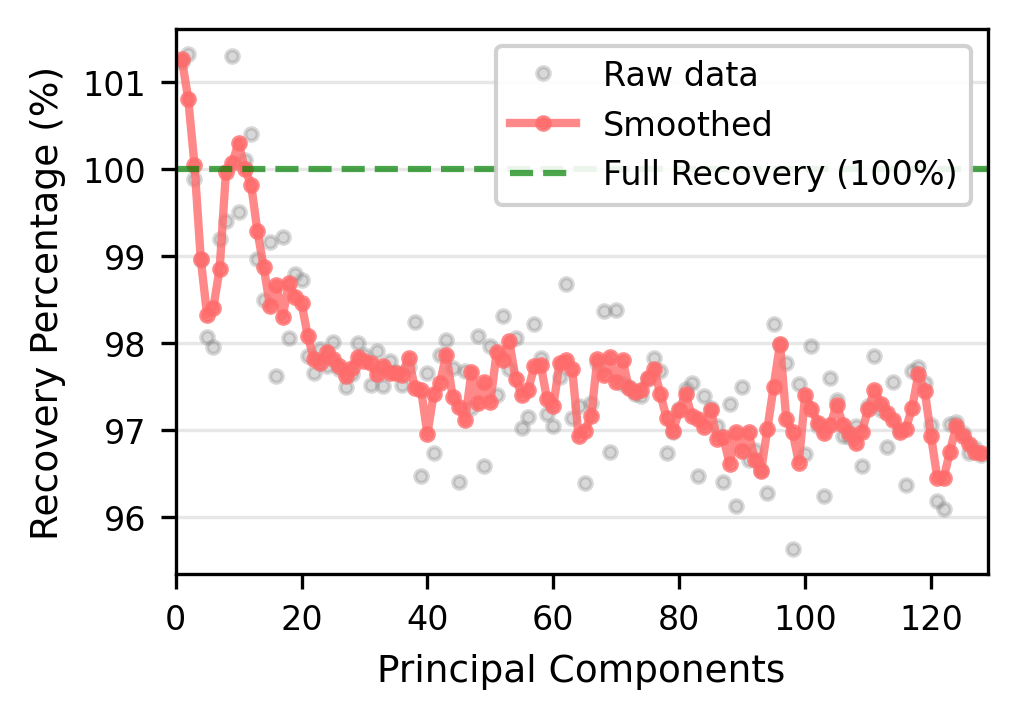} \\[1pt]
\multicolumn{2}{c}{\footnotesize\textit{$r=64$}} \\
\includegraphics[width=0.385\linewidth]{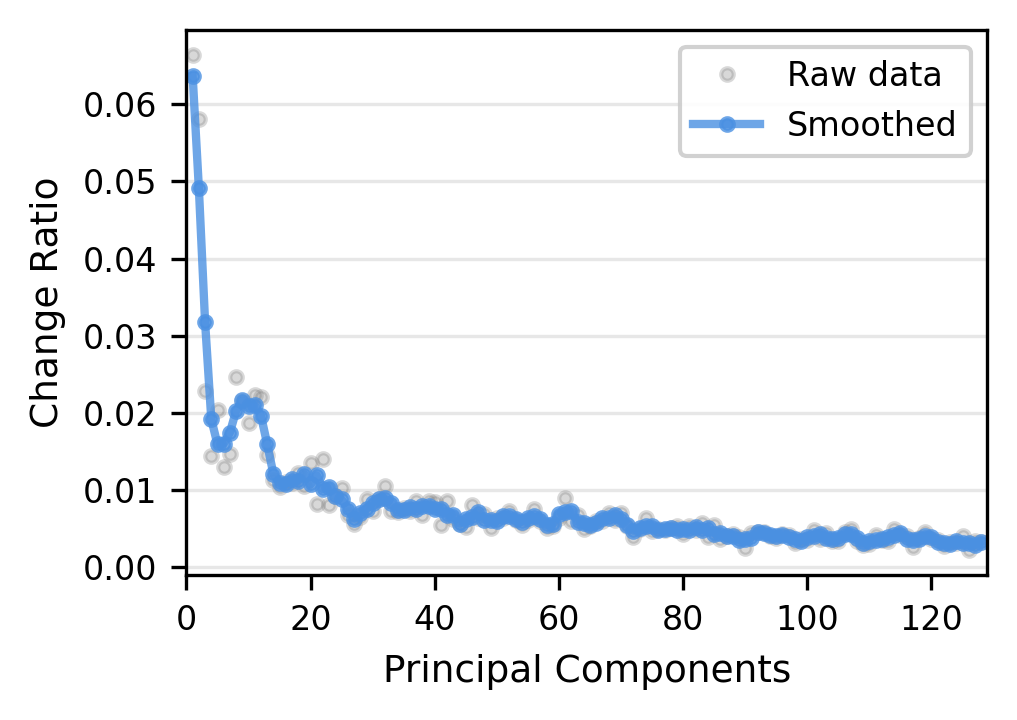} &
\includegraphics[width=0.385\linewidth]{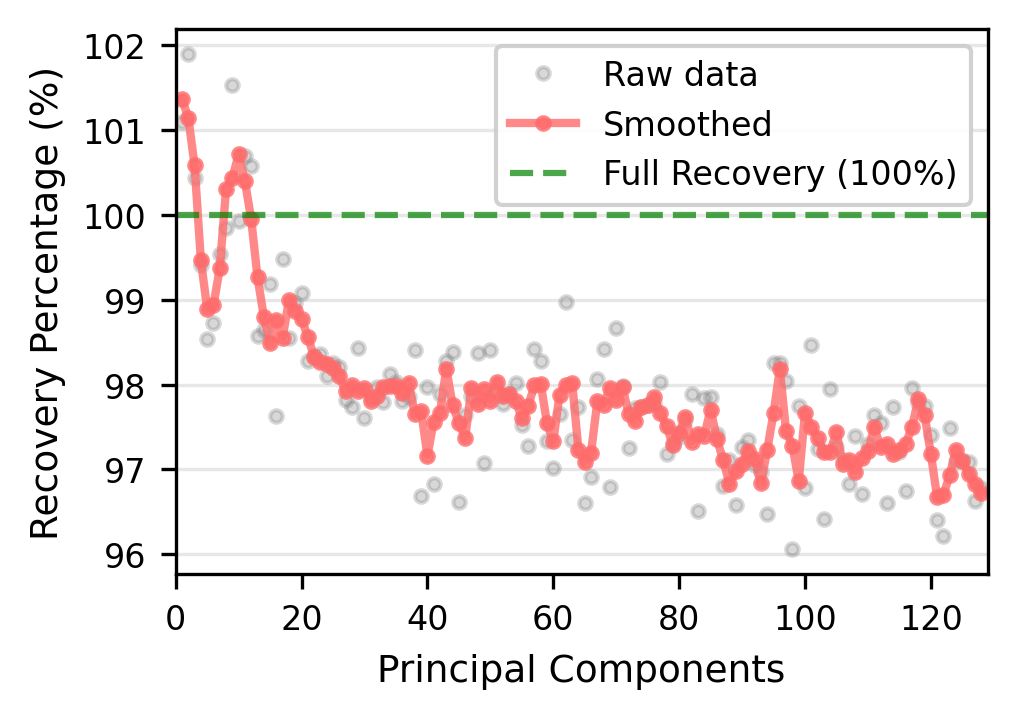} \\
\end{tabular}
\caption{GradDiff under Full FT and LoRA at four ranks. Per-PC unlearn change ratio (left) and recovery ratio (right) on WMDP-Cyber (Llama-3.1-8B). Dominant components dominate both change and recovery for every optimization regime.}
\label{fig:lora_graddiff}
\end{figure}

\begin{figure}[t]
\centering
\setlength{\tabcolsep}{2pt}
\renewcommand{\arraystretch}{0.68}
\begin{tabular}{@{}c@{\hspace{4pt}}c@{}}
\textbf{\footnotesize Unlearn Change Ratio} & \textbf{\footnotesize Recovery Ratio} \\[2pt]
\multicolumn{2}{c}{\footnotesize\textit{Full FT}} \\
\includegraphics[width=0.385\linewidth]{figures/loss_consistency/npo_unlearn_change_ratio.png} &
\includegraphics[width=0.385\linewidth]{figures/loss_consistency/npo_recovery_ratio.png} \\[1pt]
\multicolumn{2}{c}{\footnotesize\textit{$r=8$}} \\
\includegraphics[width=0.385\linewidth]{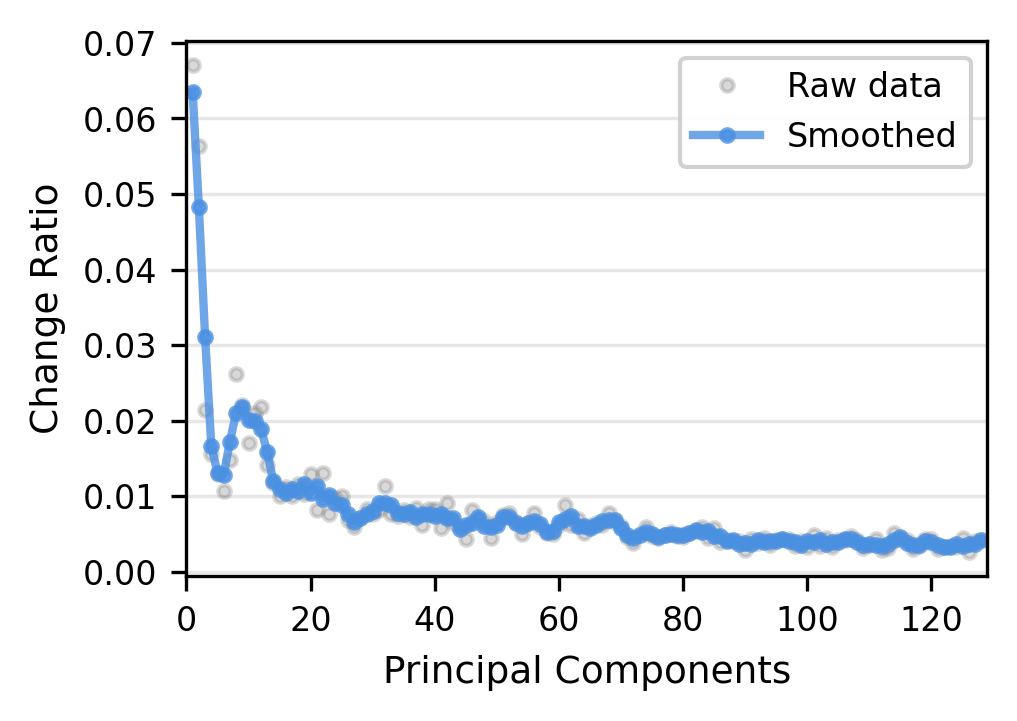}  &
\includegraphics[width=0.385\linewidth]{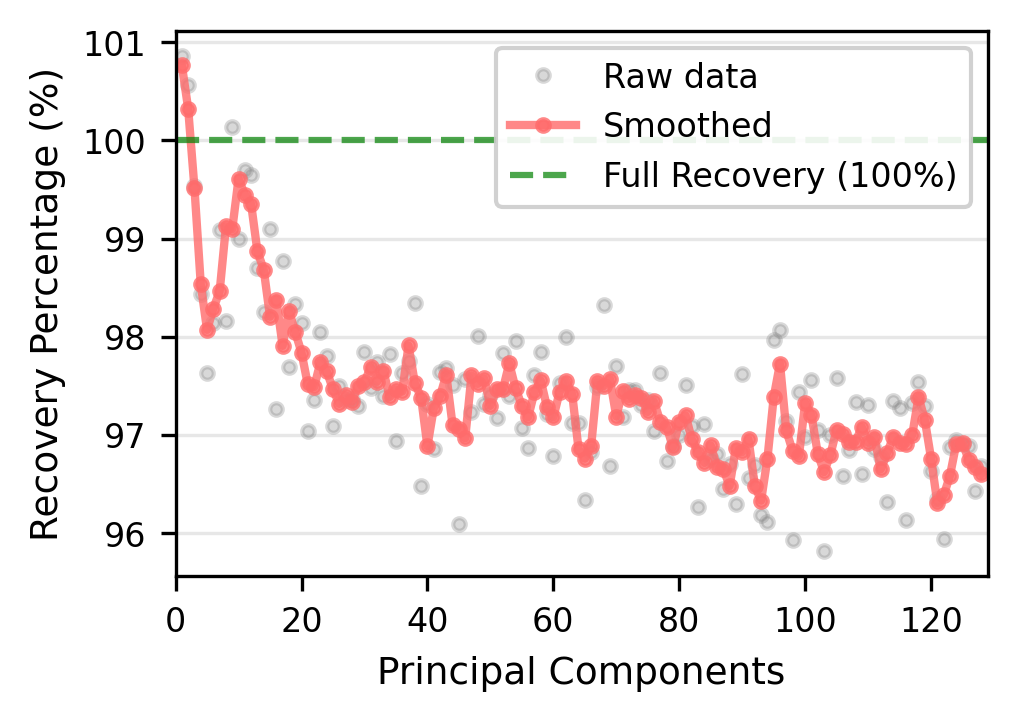}  \\[1pt]
\multicolumn{2}{c}{\footnotesize\textit{$r=16$}} \\
\includegraphics[width=0.385\linewidth]{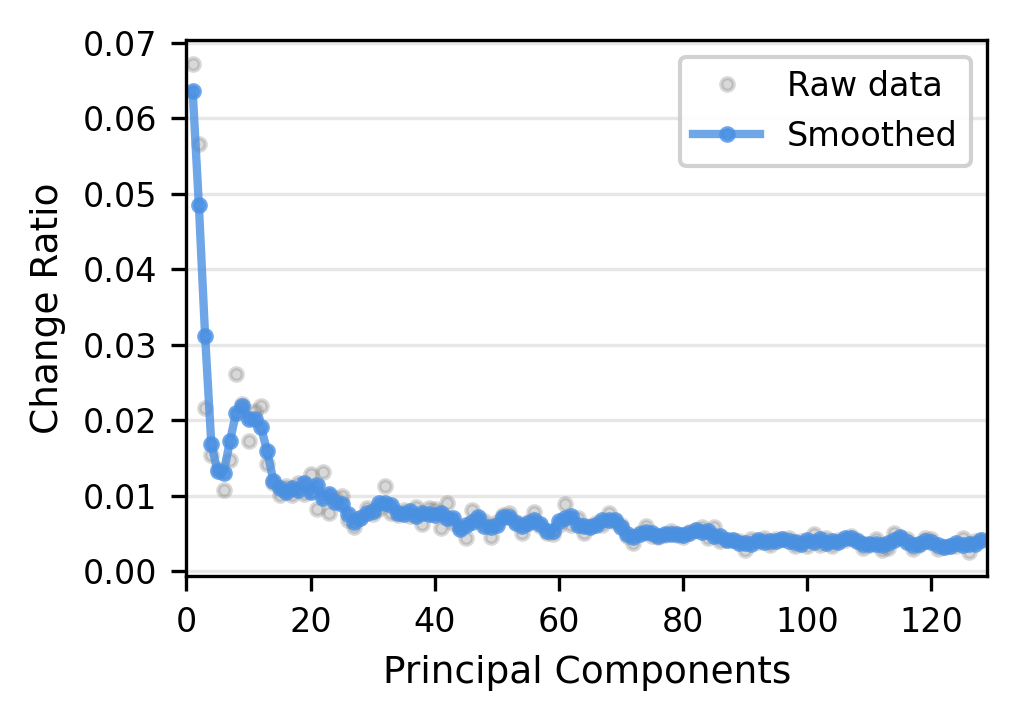} &
\includegraphics[width=0.385\linewidth]{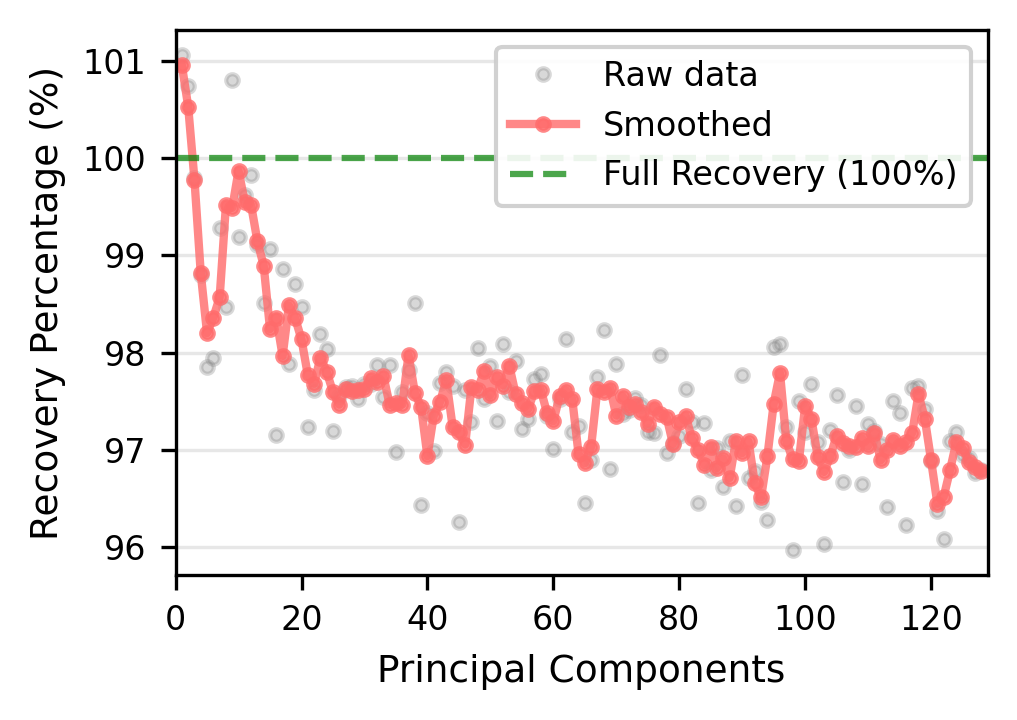} \\[1pt]
\multicolumn{2}{c}{\footnotesize\textit{$r=32$}} \\
\includegraphics[width=0.385\linewidth]{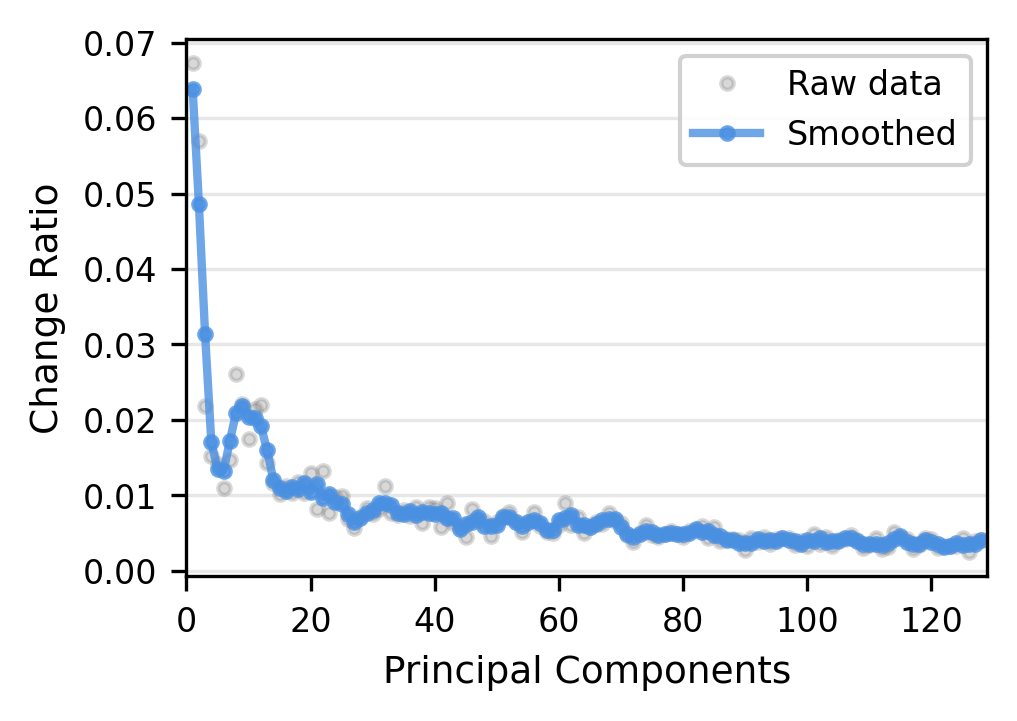} &
\includegraphics[width=0.385\linewidth]{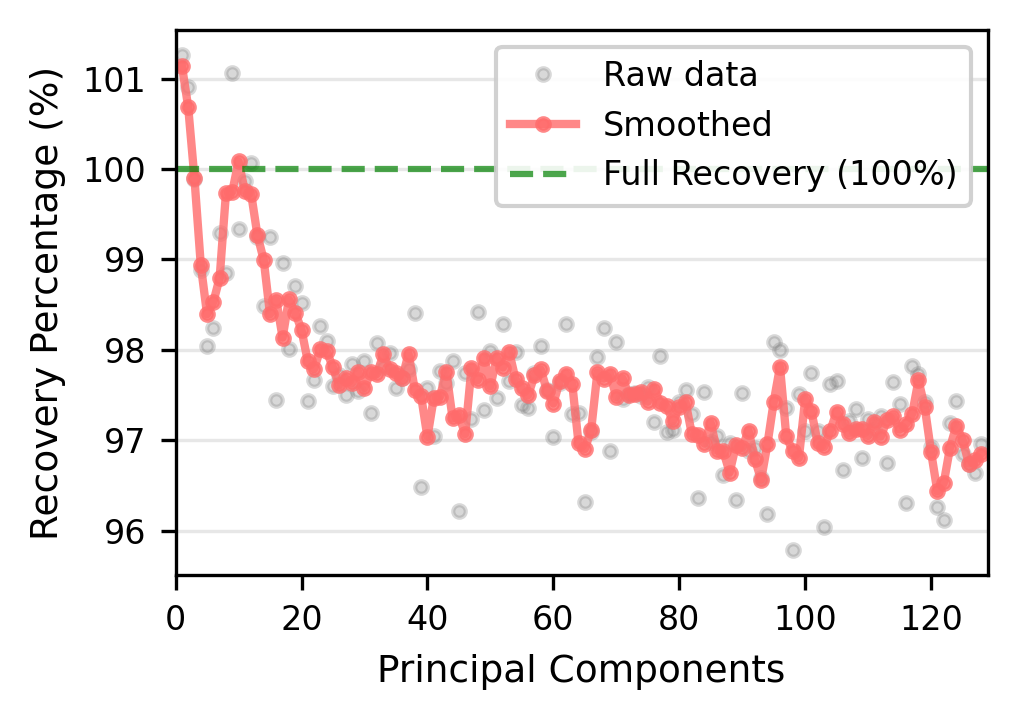} \\[1pt]
\multicolumn{2}{c}{\footnotesize\textit{$r=64$}} \\
\includegraphics[width=0.385\linewidth]{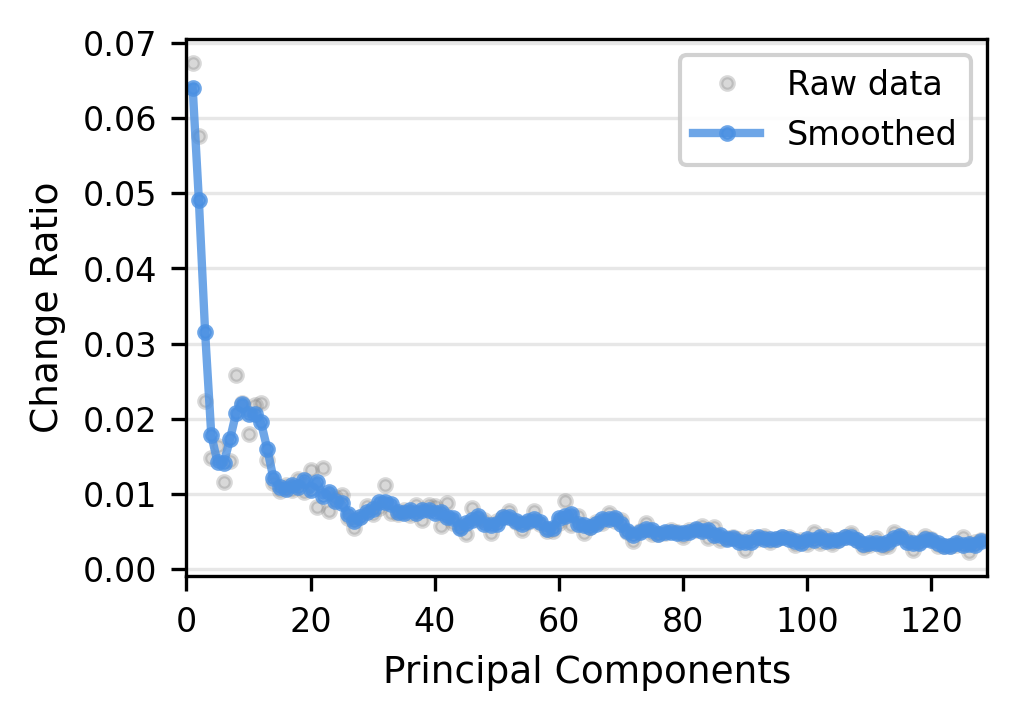} &
\includegraphics[width=0.385\linewidth]{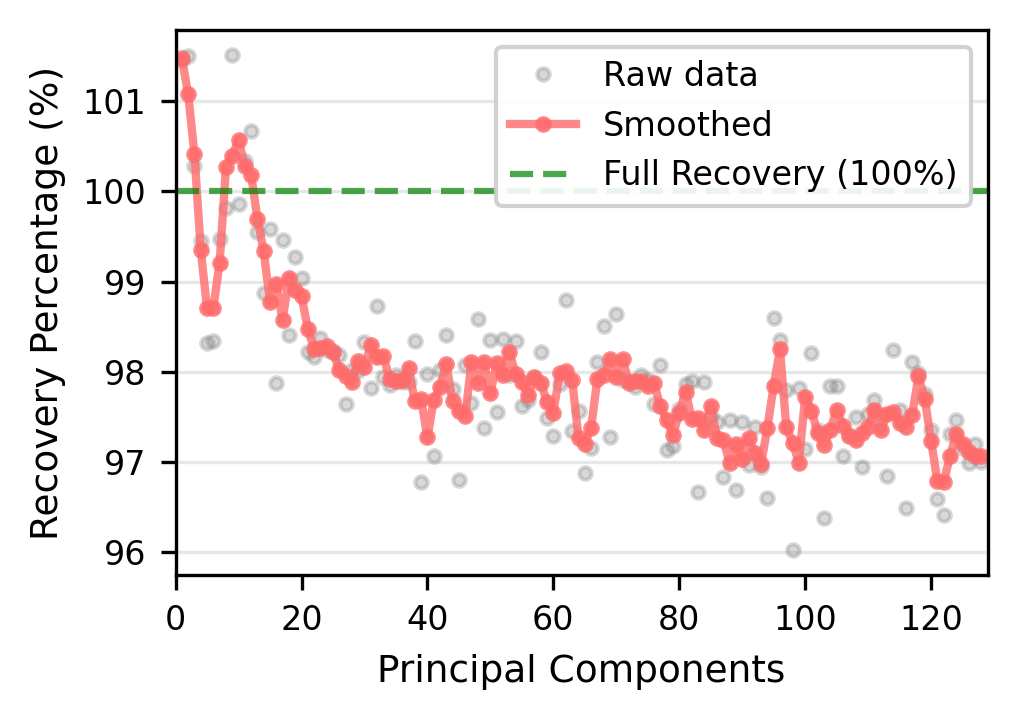} \\
\end{tabular}
\caption{NPO under Full FT and LoRA at four ranks. Same layout as \Cref{fig:lora_graddiff}.}
\label{fig:lora_npo}
\end{figure}

\begin{figure}[t]
\centering
\setlength{\tabcolsep}{2pt}
\renewcommand{\arraystretch}{0.68}
\begin{tabular}{@{}c@{\hspace{4pt}}c@{}}
\textbf{\footnotesize Unlearn Change Ratio} & \textbf{\footnotesize Recovery Ratio} \\[2pt]
\multicolumn{2}{c}{\footnotesize\textit{Full FT}} \\
\includegraphics[width=0.385\linewidth]{figures/loss_consistency/rmu_unlearn_change_ratio.png} &
\includegraphics[width=0.385\linewidth]{figures/loss_consistency/rmu_recovery_ratio.png} \\[1pt]
\multicolumn{2}{c}{\footnotesize\textit{$r=8$}} \\
\includegraphics[width=0.385\linewidth]{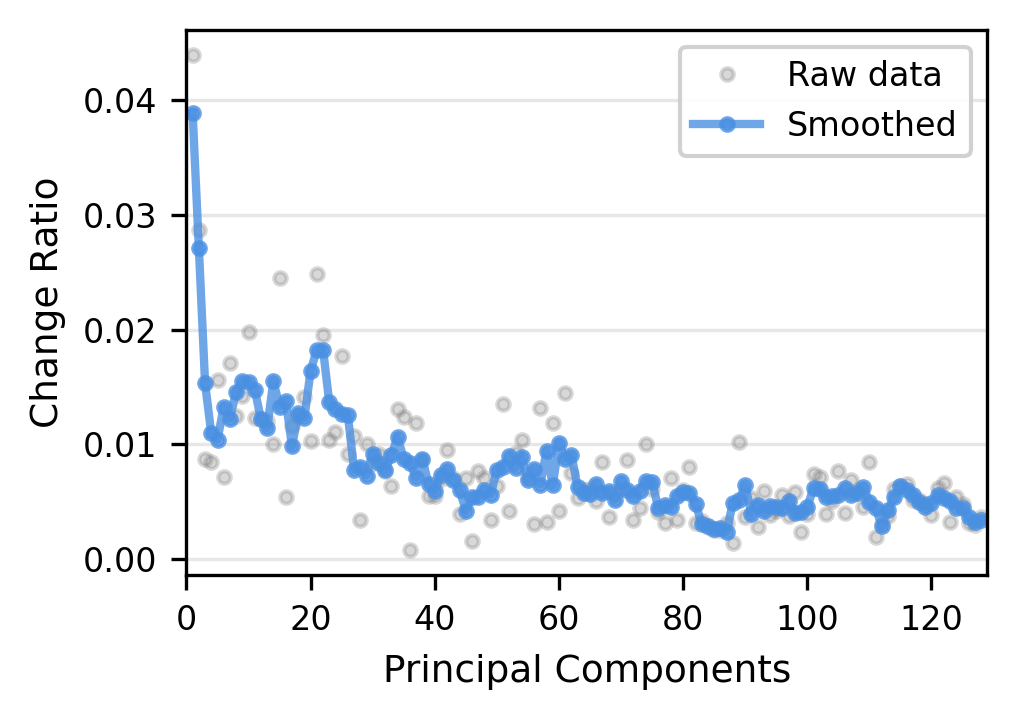}  &
\includegraphics[width=0.385\linewidth]{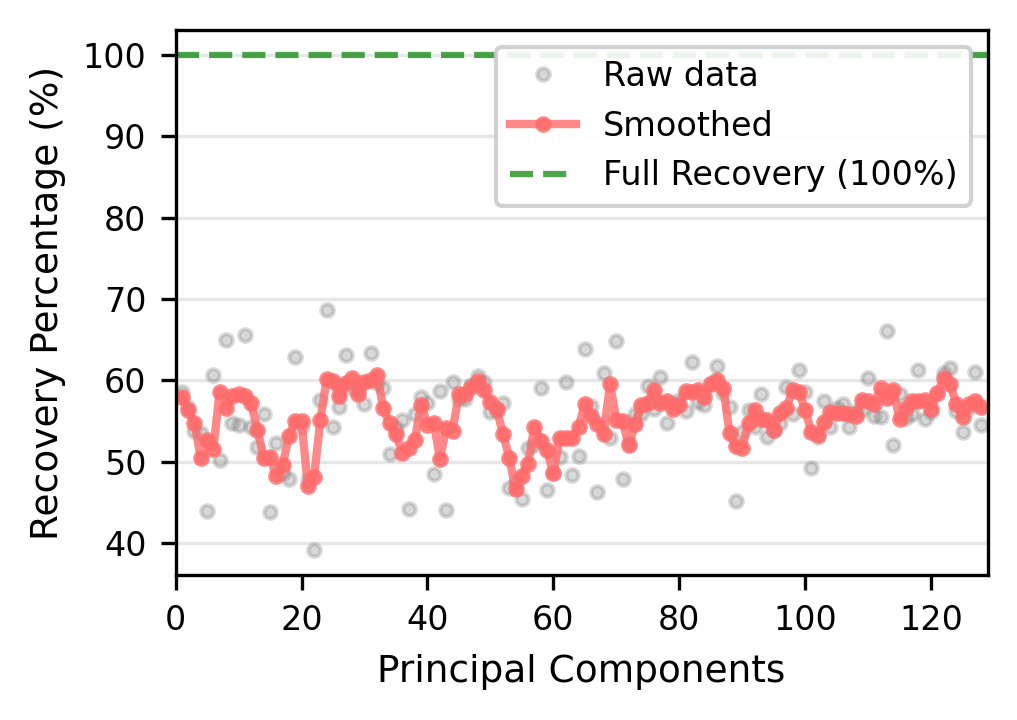}  \\[1pt]
\multicolumn{2}{c}{\footnotesize\textit{$r=16$}} \\
\includegraphics[width=0.385\linewidth]{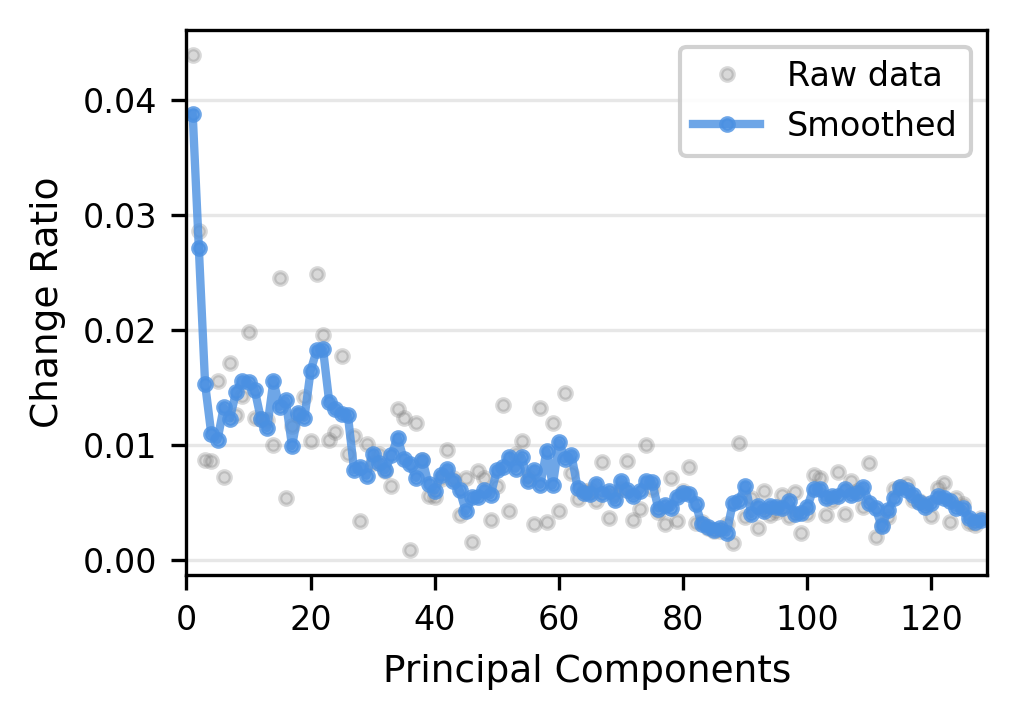} &
\includegraphics[width=0.385\linewidth]{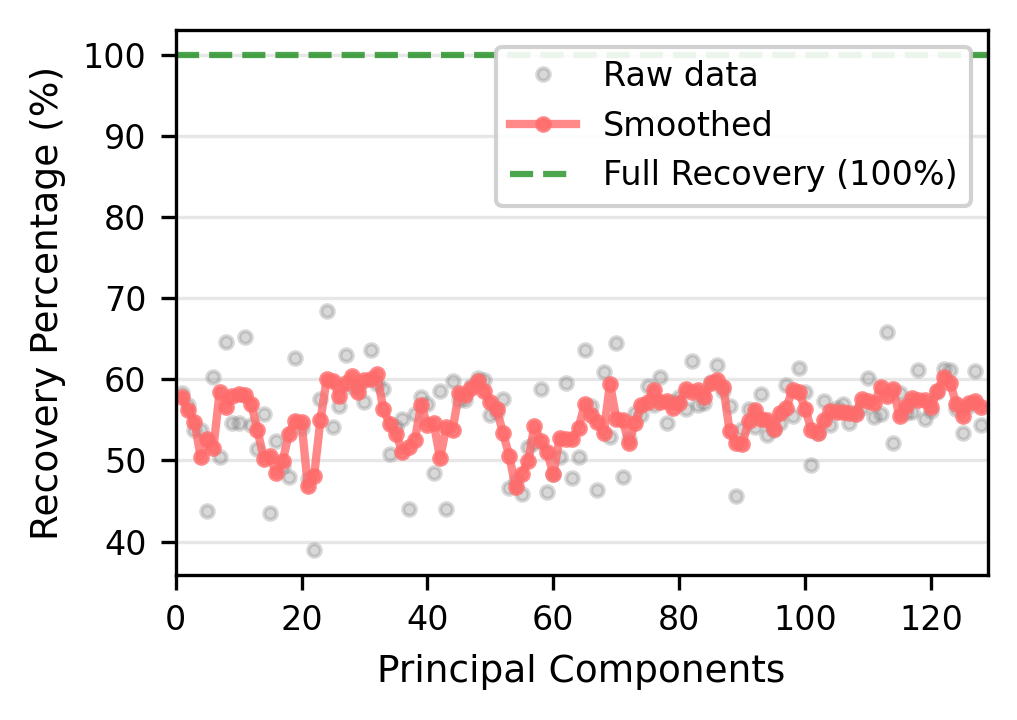} \\[1pt]
\multicolumn{2}{c}{\footnotesize\textit{$r=32$}} \\
\includegraphics[width=0.385\linewidth]{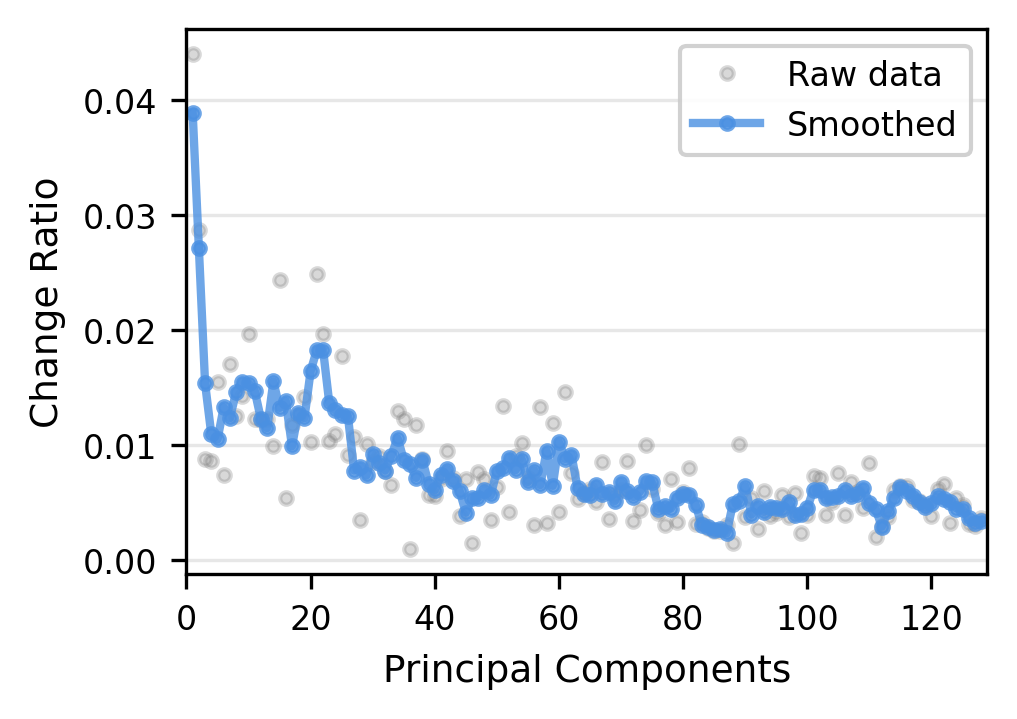} &
\includegraphics[width=0.385\linewidth]{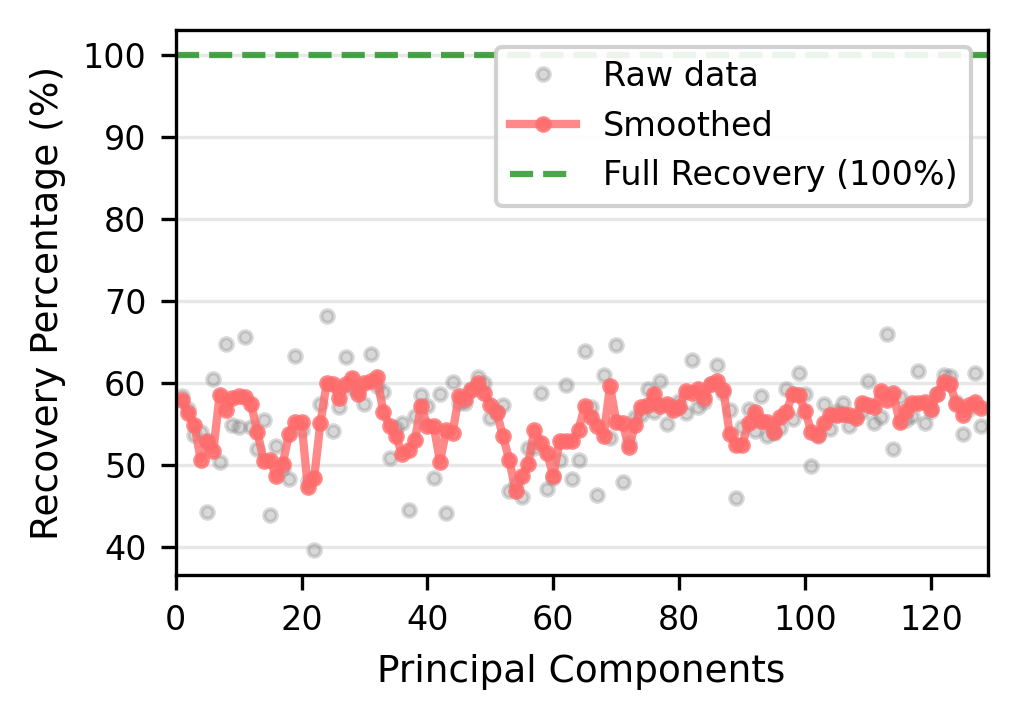} \\[1pt]
\multicolumn{2}{c}{\footnotesize\textit{$r=64$}} \\
\includegraphics[width=0.385\linewidth]{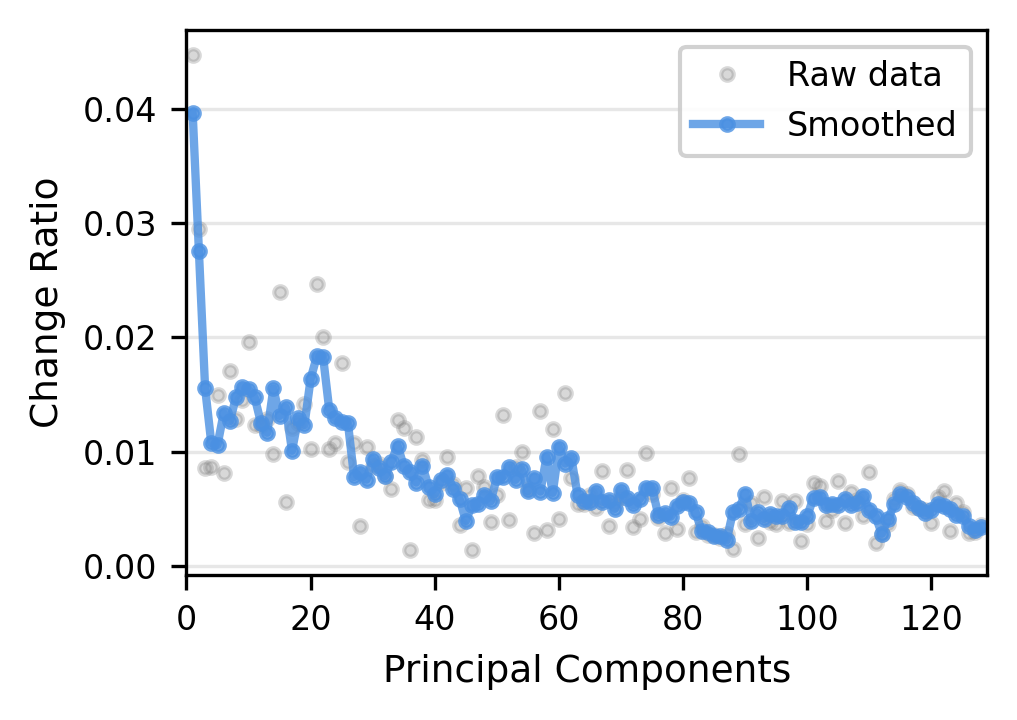} &
\includegraphics[width=0.385\linewidth]{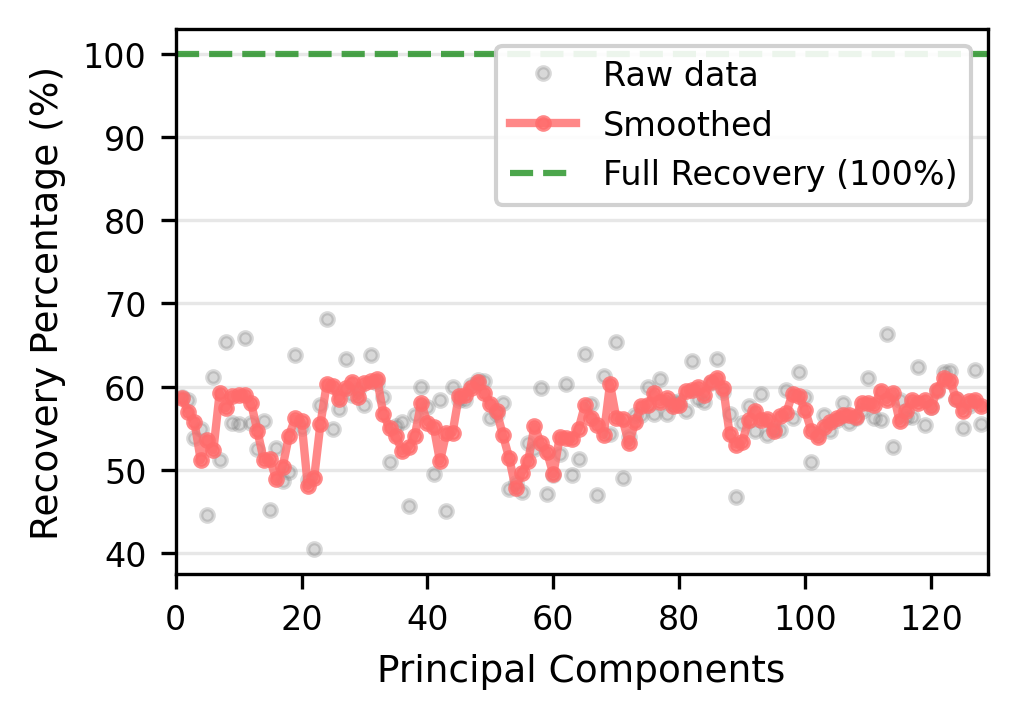} \\
\end{tabular}
\caption{RMU under Full FT and LoRA at four ranks. Same layout as \Cref{fig:lora_graddiff}.}
\label{fig:lora_rmu}
\end{figure}

\begin{figure}[t]
\centering
\setlength{\tabcolsep}{2pt}
\renewcommand{\arraystretch}{0.68}
\begin{tabular}{@{}c@{\hspace{4pt}}c@{}}
\textbf{\footnotesize Unlearn Change Ratio} & \textbf{\footnotesize Recovery Ratio} \\[2pt]
\multicolumn{2}{c}{\footnotesize\textit{Full FT}} \\
\includegraphics[width=0.385\linewidth]{figures/loss_consistency/mlpbreak_unlearn_change_ratio.png} &
\includegraphics[width=0.385\linewidth]{figures/loss_consistency/mlpbreak_recovery_ratio.png} \\[1pt]
\multicolumn{2}{c}{\footnotesize\textit{$r=8$}} \\
\includegraphics[width=0.385\linewidth]{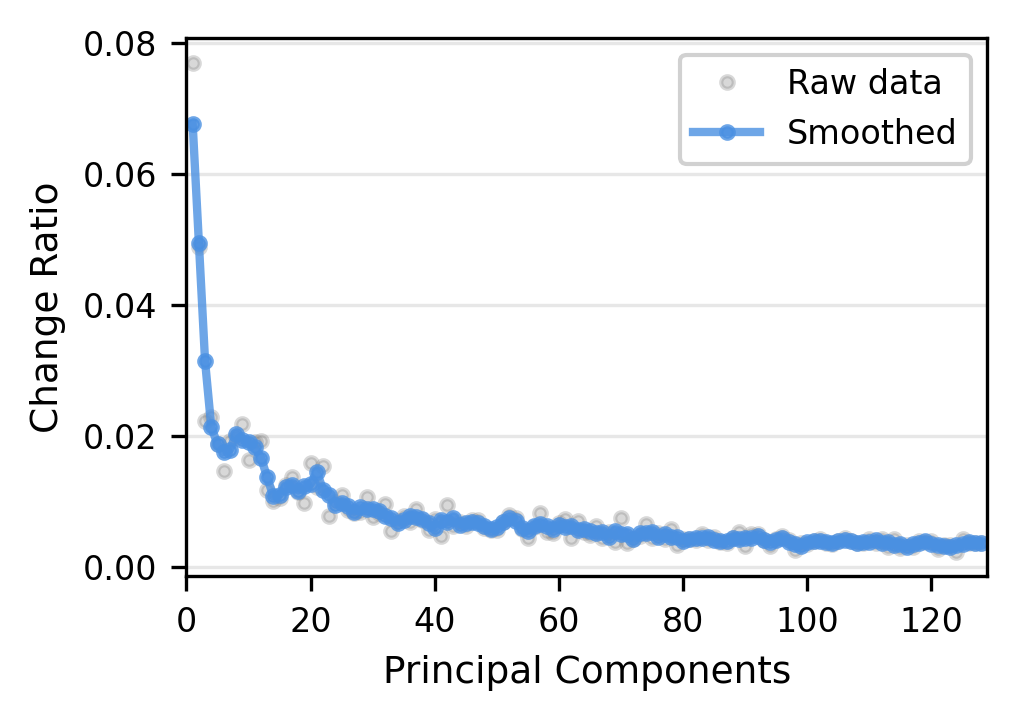}  &
\includegraphics[width=0.385\linewidth]{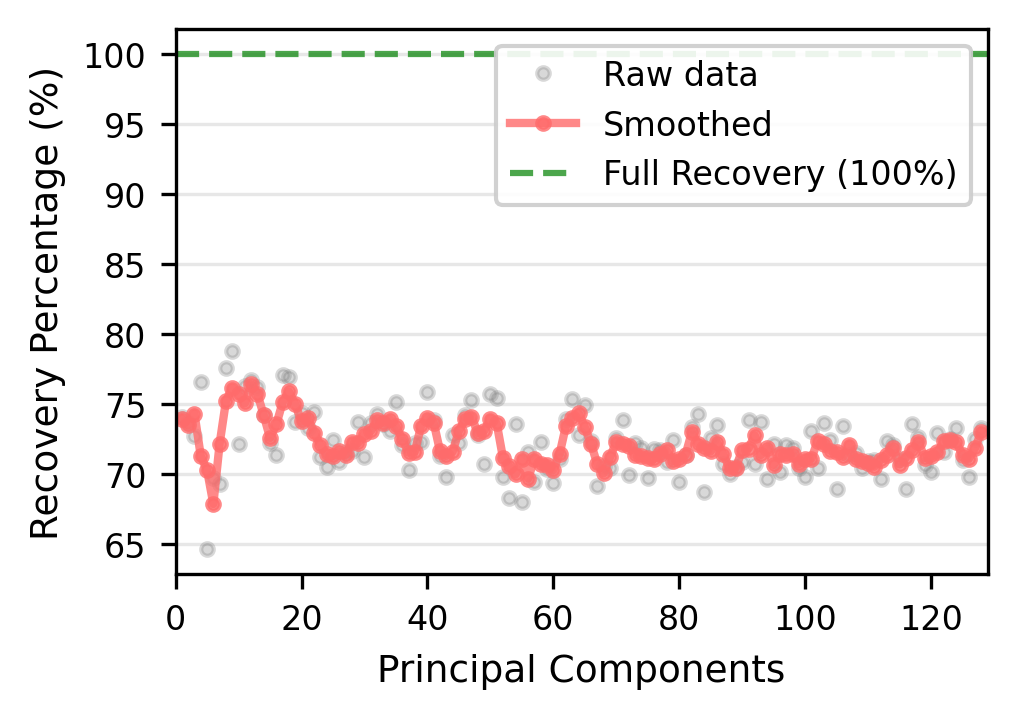}  \\[1pt]
\multicolumn{2}{c}{\footnotesize\textit{$r=16$}} \\
\includegraphics[width=0.385\linewidth]{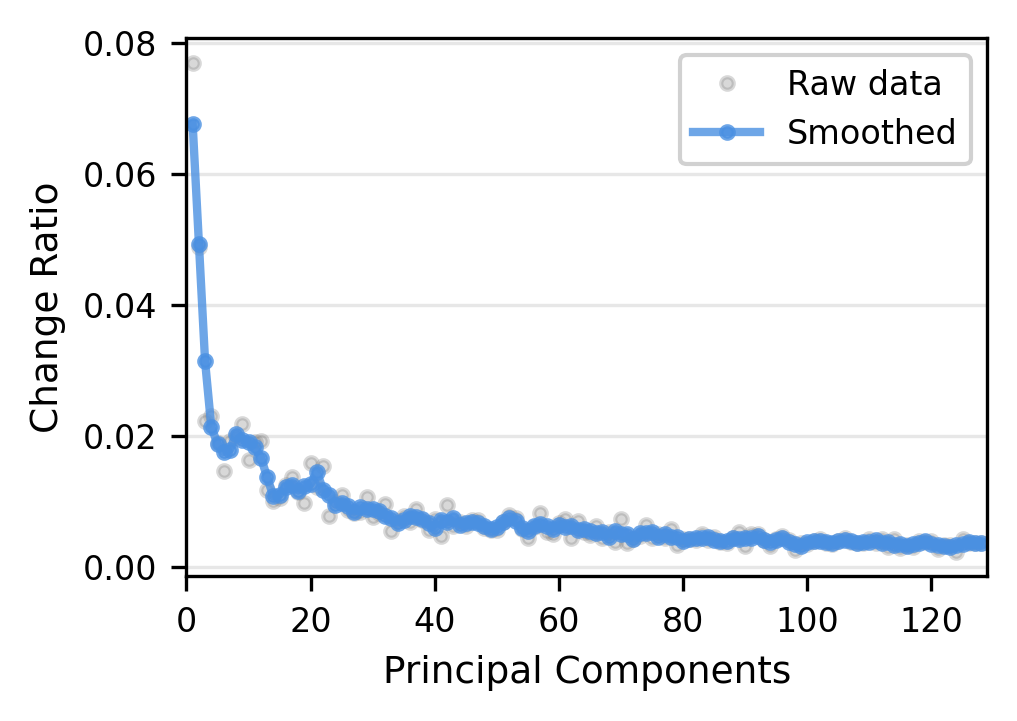} &
\includegraphics[width=0.385\linewidth]{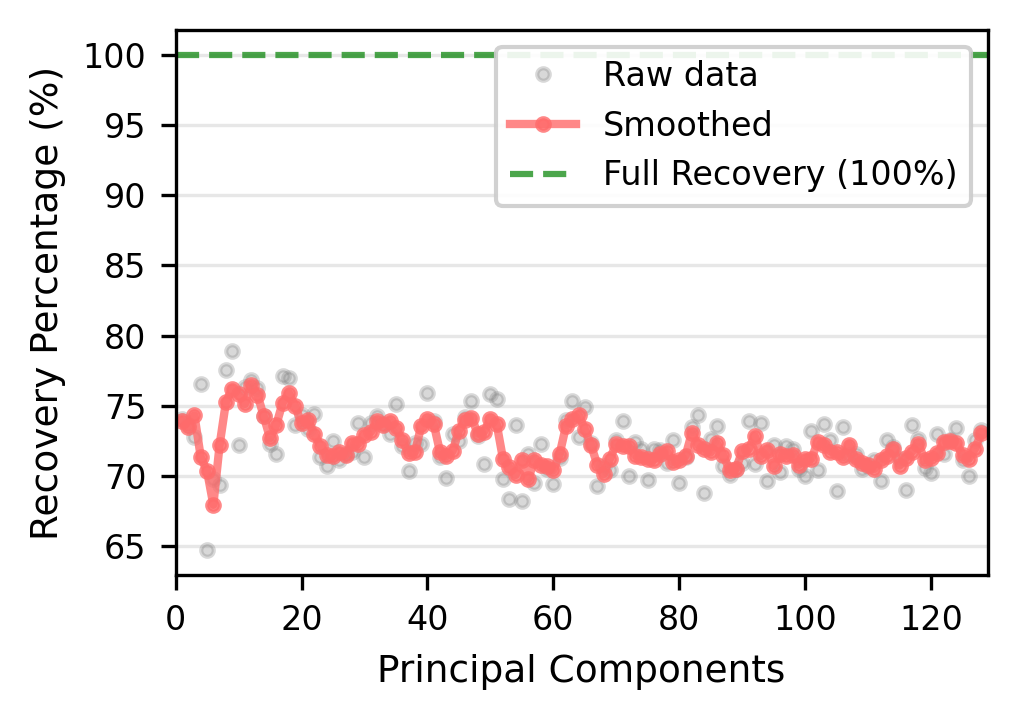} \\[1pt]
\multicolumn{2}{c}{\footnotesize\textit{$r=32$}} \\
\includegraphics[width=0.385\linewidth]{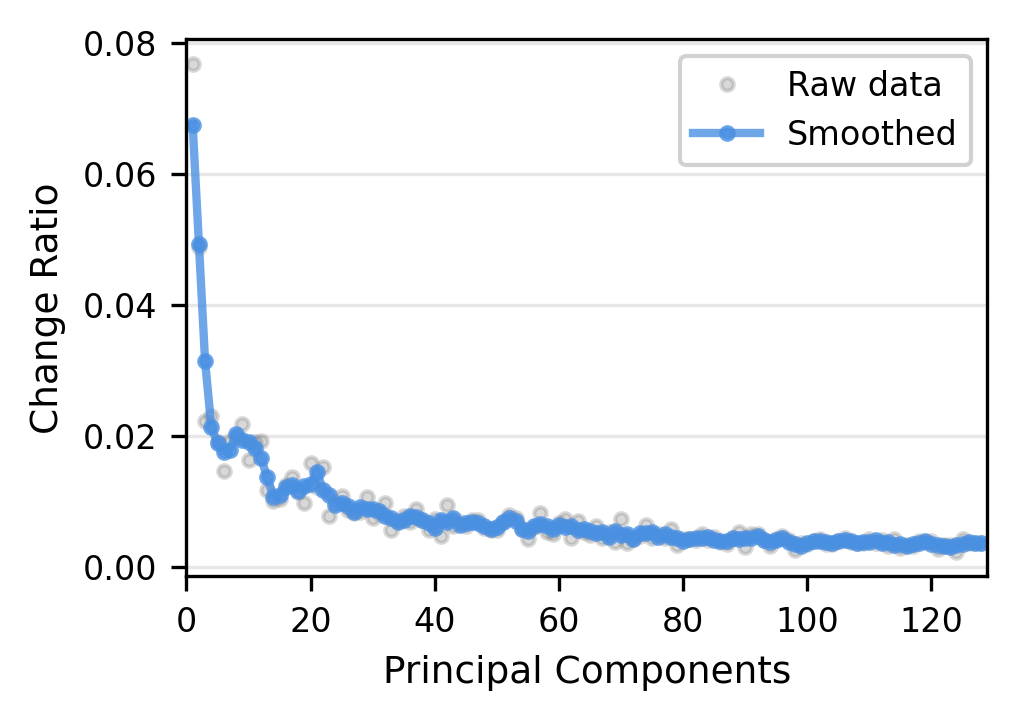} &
\includegraphics[width=0.385\linewidth]{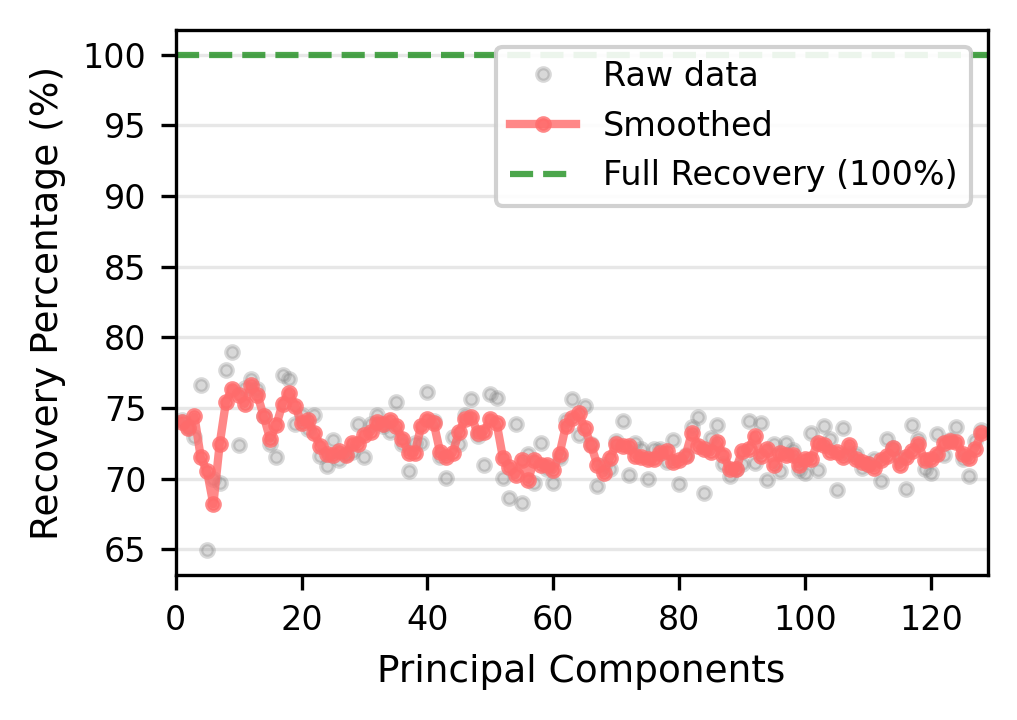} \\[1pt]
\multicolumn{2}{c}{\footnotesize\textit{$r=64$}} \\
\includegraphics[width=0.385\linewidth]{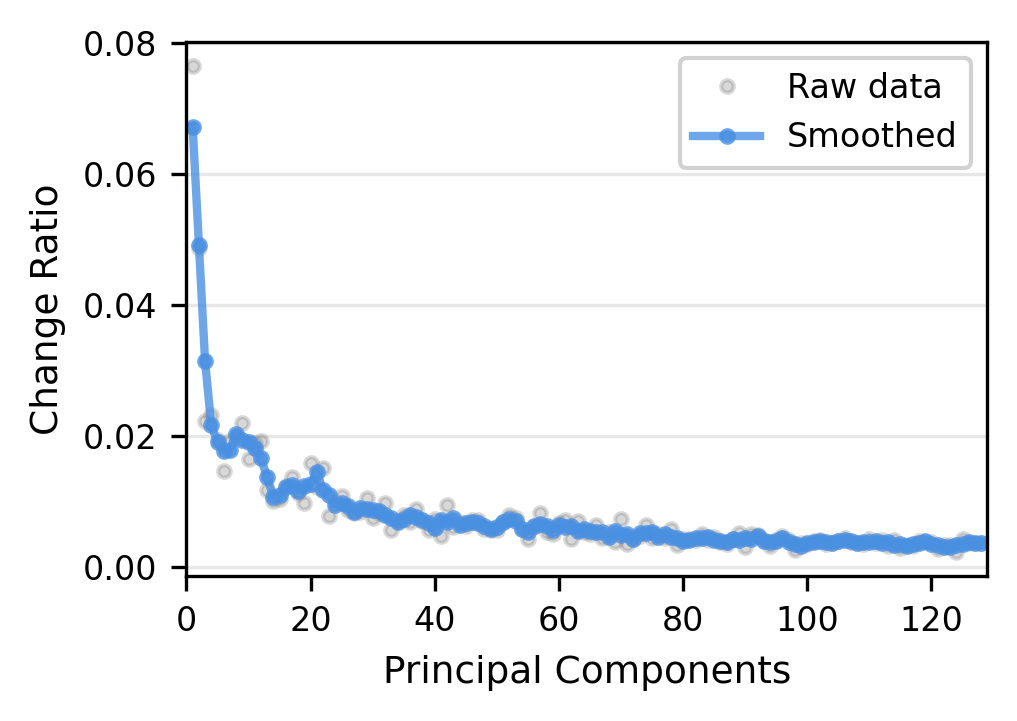} &
\includegraphics[width=0.385\linewidth]{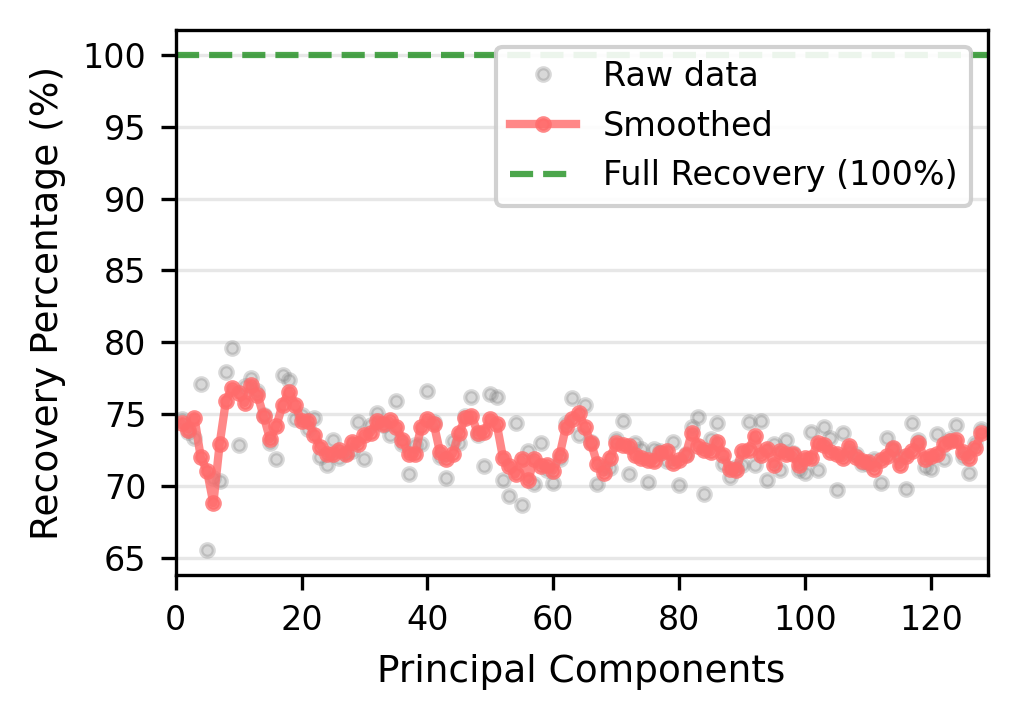} \\
\end{tabular}
\caption{MLP Breaking under Full FT and LoRA at four ranks. Same layout as \Cref{fig:lora_graddiff}.}
\label{fig:lora_mlpconfuse}
\end{figure}


\end{document}